%% file: arxiv.tex
\documentclass{article}

\PassOptionsToPackage{numbers,square}{natbib}
 \usepackage[preprint]{neurips_2026}

\input{math_commands.tex}

\usepackage[utf8]{inputenc} 
\usepackage[T1]{fontenc}    
\usepackage{hyperref}       
\usepackage{url}            
\usepackage{booktabs}       
\usepackage{multirow}       
\usepackage{graphicx}       
\usepackage{wrapfig}        
\usepackage{amsfonts}       
\usepackage{nicefrac}       
\usepackage{microtype}      
\usepackage{titletoc}       
\usepackage[table]{xcolor}
\usepackage[most]{tcolorbox}
\usepackage{marvosym}

\definecolor{bestblue}{HTML}{E6F2FF}
\definecolor{nogainorange}{HTML}{FFE6CC}

\definecolor{deltablue}{HTML}{0000FF}
\definecolor{deltaorange}{HTML}{C65A00}
\definecolor{myblue}{HTML}{2F6FDB}

\newcommand{\gain}[2]{\cellcolor{bestblue}#1\,{\tiny\textcolor{deltablue}{(+#2)}}}
\newcommand{\nogain}[2]{\cellcolor{nogainorange}#1\,{\scriptsize\textcolor{deltaorange}{(#2)}}}
\newcommand{\refscore}[1]{\cellcolor{gray!20}#1}
\newcommand{\normalref}[1]{\cellcolor{gray!15}#1}
\newcommand{\deltacell}[1]{\cellcolor{bestblue}#1}
\newcommand{\normaldiag}[1]{\cellcolor{gray!15}#1}
\newcommand{\awmdiag}[1]{\cellcolor{bestblue}#1}

\usepackage{amsmath}

\usepackage{enumitem}
\usepackage{algorithm}
\usepackage{algpseudocode}

\newtheorem{theorem}{Theorem}[section]
\newtheorem{proposition}[theorem]{Proposition}
\newtheorem{lemma}[theorem]{Lemma}
\newtheorem{corollary}[theorem]{Corollary}

\newtheorem{remark}[theorem]{Remark}
\tcbset{
  awmtheorybox/.style={
    breakable,
    colback=myblue!8,
    colframe=myblue!8,
    boxrule=0pt,
    sharp corners=all,
    boxsep=0pt,
    left=5pt,
    right=5pt,
    top=5pt,
    bottom=5pt,
    before skip=2pt,
    after skip=2pt
  }
}
\tcolorboxenvironment{theorem}{awmtheorybox}
\tcolorboxenvironment{proposition}{awmtheorybox}
\tcolorboxenvironment{lemma}{awmtheorybox}
\tcolorboxenvironment{corollary}{awmtheorybox}
\tcolorboxenvironment{definition}{awmtheorybox}
\tcolorboxenvironment{assumption}{awmtheorybox}
\tcolorboxenvironment{remark}{awmtheorybox}

\title{World Models as Adversaries: Multi-Agent Self-Play Fine-Tuning for Robust Motion Planning}

\newcommand{\equalcontrib}{\ensuremath{\dagger}}
\newcommand{\corresponding}{\texorpdfstring{\Letter}{*}}

\author{%
  Tong Nie,\textsuperscript{1,2,\equalcontrib}
  Yuewen Mei,\textsuperscript{2,\equalcontrib}  
  Junlin He,\textsuperscript{1}
  Yihong Tang,\textsuperscript{3,4} 
  Jian Sun,\textsuperscript{2,\corresponding}
  Wei Ma\textsuperscript{1,\corresponding}
  \\[2mm]
  \textsuperscript{1}The Hong Kong Polytechnic University,\,
  \textsuperscript{2}Tongji University, \, \\
  \textsuperscript{3}McGill University,\,
  \textsuperscript{4}Mila-Quebec AI Institute 
  \\[2mm]
  \texttt{\href{mailto:tong.nie@connect.polyu.hk}{tong.nie@connect.polyu.hk}}
  \;
  \texttt{\href{mailto:wei.w.ma@polyu.edu.hk}{wei.w.ma@polyu.edu.hk}}
  \\[1mm]
  {\normalfont\footnotesize
  \textsuperscript{\equalcontrib} Equal contribution.
  \quad
  \textsuperscript{\corresponding} Corresponding authors.}
}

\begin{document}

\maketitle

\begin{abstract}
Robust motion planning in dense traffic requires autonomous vehicles to interact in rare and safety-critical scenarios that are underrepresented in naturalistic driving data. Although adversarial training offers a feasible solution, existing methods often rely on external scenario generators, heuristic perturbations, or simulator-heavy rollouts, which makes them difficult to integrate with modern autoregressive planners. 
Here, we cast adversarially robust planner learning as a constrained min-max game and propose \textit{Adversarial World Modeling (AWM)}, a theoretically grounded multi-agent self-play fine-tuning framework. 
Since solving the exact game is intractable, AWM introduces a principled decoupled solver. In the inner minimization, the planner's predictive world model is converted into a role-conditioned adversary that learns sparse, scene-adaptive attack coalitions via counterfactual credit assignment. In the outer maximization, the ego planner optimizes a regret-aware robust best response against the frozen AWM, utilizing tail-risk weighting and reference-anchored trust regions to improve hard-case recovery while preserving nominal driving behavior. 
Experiments on the nuPlan and InterPlan benchmarks demonstrate that our method generates transferable adversarial interactions and yields a robust planner that achieves competitive closed-loop performance in both nominal and highly interactive long-tail scenarios. 
Theoretical analysis justifies the decoupled solver and the main optimization components. 

\end{abstract}

\vspace{-10pt}
\section{Introduction}
\label{sec:intro}

Developing robust motion planners for closed-loop autonomous driving remains a central open problem~\cite{teng2023motion,karkus2025beyond}. Recently, the leading paradigm has shifted from rule-based systems toward generative motion modeling \cite{seff2023motionlm, philion2023trajeglish,wu2024smart,zheng2025diffusion,zhang2025carplanner}. 
Drawing inspiration from Large Language Models (LLMs), modern discrete architectures  \cite{wu2024smart,zhou2024behaviorgpt,tang2025plan,zhang2025closed,ye2025dap} treat map elements and motion primitives as discrete tokens, exhibiting remarkable performance when trained on large-scale human driving logs.
However, these autoregressive planners are primarily optimized against nominal, naturalistic data distributions.
Consequently, they remain brittle on interactive long-tail cases that are rare but safety-critical, such as aggressive cut-ins, coordinated blocking, and collision-inducing maneuvers from surrounding vehicles \cite{feng2021intelligent,liu2024curse}.
To expose and rectify these vulnerabilities, adversarial training offers a compelling solution by actively searching hard scenarios to robustify the planner \cite{gao2026foundation}.

Despite the promise of adversarial training, existing frameworks are not well aligned to modern autoregressive architecture. 
Many existing adversarial frameworks rely on external adversary agents, heuristic scenario perturbations, or simulator-intensive rollouts~\cite{zhang2023cat, nie2025steerable, liu2025adv,stoler2025seal,nie2026adv}. 
These attacks are often designed around low-dimensional control policies, fixed objects of interest, or handcrafted rules. 
As a result, they are infeasible to scale to the high-dimensional tokenized generation paradigm. 
Rather than forcing incompatible external attackers into this paradigm, we argue that an endogenous solution lies in the environment simulator itself. 
We observe that \textit{behavioral World Models}~\cite{wu2024smart,zhang2025closed}, which inherently learn multi-agent traffic evolution as sequential token predictions, are naturally suited to bridge this gap. 
Operating on the same autoregressive architecture, motion vocabulary, and scene representation as the ego planner, they can be repurposed to parameterize both naturalistic and adversarial traffic within a unified motion manifold. 
Consequently, formulating the environment as a learnable generative process enables the integration of adversarial interactions directly into the closed-loop training rollout, bypassing the need for a separate handcrafted attacker.

However, translating this intuition into a trainable framework poses three significant optimization challenges.
First, coordinating synergistic adversaries in a multi-agent environment can introduce a combinatorial credit assignment ambiguity issue~\cite{busoniu2008comprehensive,feng2023dense}.  
Isolating the fine-grained marginal contribution of individual agents within a world model is difficult in high-dimensional action spaces. Standard strategies such as reward-sharing \cite{peng2024improving,pei2025advancing} struggle to resolve \emph{who} should initiate the attack, \emph{how} the attack emerges, and \emph{whether} multiple agents should coordinate. 
Second, simultaneous min-max optimization of two large autoregressive models is notoriously unstable~\cite{mescheder2017numerics,daskalakis2017training}.
Updating a planning policy against a continuously shifting adversary model often leads to gradient divergence due to the non-stationary adversarial distributions. 
Third, unconstrained adversarial fine-tuning can degrade nominal driving performance. Without explicit behavioral regularization, a planner may become overly conservative on hard cases while sacrificing driving skills in normal traffic~\cite{feng2026breaking}.

To bridge these gaps, we propose \emph{Adversarial World Modeling} (AWM), a multi-agent self-play fine-tuning framework that formulates adversarially robust planner training as a constrained min-max game over a shared tokenized rollout structure. 
To instantiate a tractable solver, we approximate the ideal game through two principled stages. 
In the inner minimization, we introduce a scene-adaptive coalition learning mechanism. Rather than relying on external heuristics, we convert the benign world model into an adversary via role-conditioning. Crucially, we introduce a counterfactual credit assignment approach via role switching within the same model, which efficiently resolves the multi-agent reward ambiguity and identifies sparse, synergistic adversarial coalitions. 
In the outer maximization, the ego planner is optimized as a constrained robust best response to the AWM's calibrated risk distribution. We formulate a regret-aware objective that prioritizes informative tail vulnerabilities, coupled with reference-anchored trust-region regularization to preserve nominal driving performance. 
Our main contributions are summarized as follows:
\begin{itemize}[leftmargin=*, itemsep=0pt, topsep=0pt]
    \item We present the first self-play adversarial fine-tuning paradigm that casts robust autoregressive motion planning as a constrained min-max game. With a shared rollout, it bridges principled adversarial world modeling and constrained planner adaptation without relying on external simulators.
    \item 
    We design a tractable decoupled solver to instantiate the game. Leveraging role-conditioned counterfactual credit assignment, the AWM first discovers a sparse, scene-adaptive adversarial risk distribution. Utilizing this calibrated risk, the ego planner then optimizes a regret-aware objective to mitigate adversarial tail-risk while preserving nominal driving via trust-region constraints.
    \item 
    We provide provable theoretical justifications and comprehensive empirical validation. Theoretical analysis bounds the local approximation error of the decoupled solver. Evaluations on nuPlan and InterPlan benchmarks demonstrate that AWM discovers transferable multi-agent attacks and yields a robust planner excelling in both naturalistic and highly interactive long-tail scenarios.
\end{itemize}

\section{Preliminary and Problem Formulation}
\label{sec:preliminary}

\subsection{Autoregressive Motion Generation}
\label{subsec:autoregressive_generation}


\paragraph{Tokenized Autoregressive Rollout.}
Recent generative driving models formulate multi-agent traffic evolution as sequential next-token prediction over a learned discrete motion vocabulary~\cite{seff2023motionlm,philion2023trajeglish,wu2024smart,zhang2025closed}. Consider a scene $s$ with ego agent $e$, non-ego agents $\mathcal V_s=\{1,\ldots,N_s\}$, and context $c_s$ (e.g., map, history). 
At autoregressive step $t\in\{1,\ldots,H\}$, each agent $i\in\{e\}\cup\mathcal V_s$ emits a discrete motion token $z_{i,t}\in\mathcal T$, and the joint multi-agent token set is $\vz_t=\{z_{i,t}\}_{i\in\{e\}\cup\mathcal V_s}$. The generated prefix $\vz_{<t}$ updates the scene rollout through a deterministic transition $x_{t+1}=\Gamma(x_t,\vz_t)$, so the partially generated state can be written as $x_t=\Phi(c_s,\vz_{<t})$. 
Thus, the Markovian generative process is:
$p(\vz_{1:H}\mid c_s)=\prod_{t=1}^{H}p(\vz_t\mid x_t)$.
This formulation yields an interactive rollout where each generated token immediately alters the spatiotemporal state observed by all agents in future steps.

\vspace{-10pt}
\paragraph{Planning-Prediction Decomposition.}
Directly modeling the full joint token distribution over the entire horizon is computationally intractable and entangles ego decision-making with environmental dynamics. 
Following decoupled planning-prediction paradigms~\cite{tang2025plan,zhang2025carplanner}, we factorize the step-wise joint probability into individual marginals and instantiate them with two parameterized policies:

\begin{equation}
    \label{eq:factorization}
    \begin{aligned}
    p(\vz_{1:H}\mid c_s) = \prod_{t=1}^{H} \Big[ p(z_{e,t} \mid x_t) \prod_{i \in \mathcal{V}_s} p(z_{i,t} \mid x_t) \Big]
    &\approx
    \prod_{t=1}^{H}
    \Big[
        \underbrace{\pi_{\mathrm{plan}}(z_{e,t}\mid x_t)}_{\text{ego planner}}
        \prod_{i\in\mathcal V_s}
        \underbrace{\pi_{\mathrm{pred}}(z_{i,t}\mid x_t)}_{\text{non-ego world model}}
    \Big].
    \end{aligned}
\end{equation}
$\pi_{\mathrm{pred}}$ is learned from driving logs and represents naturalistic traffic. 
Eq.~\ref{eq:factorization} is the interface used throughout the paper: 
the ego factor represents the object optimized by the planner, while the non-ego factors represent the environment dynamics.
However, optimizing solely against $\pi_{\mathrm{pred}}$ often leaves $\pi_{\mathrm{plan}}$ vulnerable to rare but critical out-of-distribution events, requiring robust adversarial modeling.


\vspace{-5pt}
\subsection{Problem Formulation: Robust Planning as a Constrained Game}
\label{subsec:problem_formulation}

\vspace{-5pt}
We evaluate a rollout by a scalarized utility that trades off progress and safety. Let $r_t$ denote the progress reward and $c_t^{(m)}$ denote the $m$-th safety cost term (e.g., collision, off-road) at step $t$. 
The step-wise utility and scene-level cumulative utility are
$u_t(x_t,\vz_t)=\lambda_R r_t-\sum_{m=1}^{M}\lambda_m c_t^{(m)}$ and 
$U_s(\pi_{\mathrm{plan}},\pi_{\mathrm{pred}})=\mathbb E_{\tau\sim(\pi_{\mathrm{plan}},\pi_{\mathrm{pred}})}[\sum_{t=1}^{H}\gamma^{t-1}u_t(x_t,\vz_t)]$, respectively.
The standard planning objective is to maximize $U_s$ under the benign prior $\pi_{\mathrm{pred}}$. Robust planning requires the planner to perform well when some non-ego agents behave adversarially, while still preserving normal driving ability. We therefore replace the benign environment with a controllable \textbf{Adversarial World Model} (AWM) $\pi_{\mathrm{awm}}$ and formulate the ideal robust objective as a \textit{constrained two-player zero-sum game}:
\begin{equation}
    \label{eq:ideal_game}
    \begin{aligned}
        \max_{\pi_{\mathrm{plan}}} \min_{\pi_{\mathrm{awm}},\,\{\mathcal A_s\}} \quad & \mathbb{E}_{s \sim \mathcal{D}} \left[ U_s(\pi_{\mathrm{plan}}, \pi_{\mathrm{awm}}, \mathcal{A}_s) \right] \\
        \text{s.t.} \quad & |\mathcal{A}_s| \le K_{\max},\mathcal{A}_s \subseteq \mathcal{V}_s, \quad \forall s \in \mathcal{D} \quad \text{(Attack Budget)} \\
        & \mathbb{E}_{x \sim \mathcal{D}} \left[ \mathcal{D}_\text{trust} \left( \pi_{\mathrm{plan}}(\cdot \mid x) \parallel \pi_{\mathrm{plan}}^{\mathrm{nom}}(\cdot \mid x) \right) \right] \le \epsilon \quad \text{(Trust Region Constraint)},
    \end{aligned}
\end{equation}
where the inner player minimizes $U_s$ by optimizing the generative policy $\pi_{\mathrm{awm}}$, and the outer player maximizes its expected utility against the risk distribution induced by $\pi_{\mathrm{awm}}$. We further introduce two constraints: (1) a budgeted subset of active adversaries $\mathcal{A}_s$ to ensure sparse and reasonable attack; (2) a trust region around a nominal reference planner $\pi_{\mathrm{plan}}^{\mathrm{nom}}$ to prevent behavioral degradation.

However, Eq.~\ref{eq:ideal_game} is not directly tractable. First, the inner step involves a combinatorial search over the high-dimensional multi-agent action space, suffering from severe credit assignment ambiguities~\cite{busoniu2008comprehensive}.
Second, simultaneous non-convex min-max updates are notoriously unstable \cite{mescheder2017numerics,daskalakis2017training}, as the planner struggles to converge against a non-stationary distribution. 
Third, directly enforcing behavioral constraints under adversarial rollouts can yield an averaged or overly conservative policy with degraded normal performance.
To overcome these issues, we introduce a principled solver in Section~\ref{sec:method} as a decoupled approximation.


\vspace{-6pt}
\section{Self-Play Fine-tuning with Adversarial World Models}
\label{sec:method}

\begin{figure}[!htbp]
\centering
\includegraphics[width=0.99\linewidth]{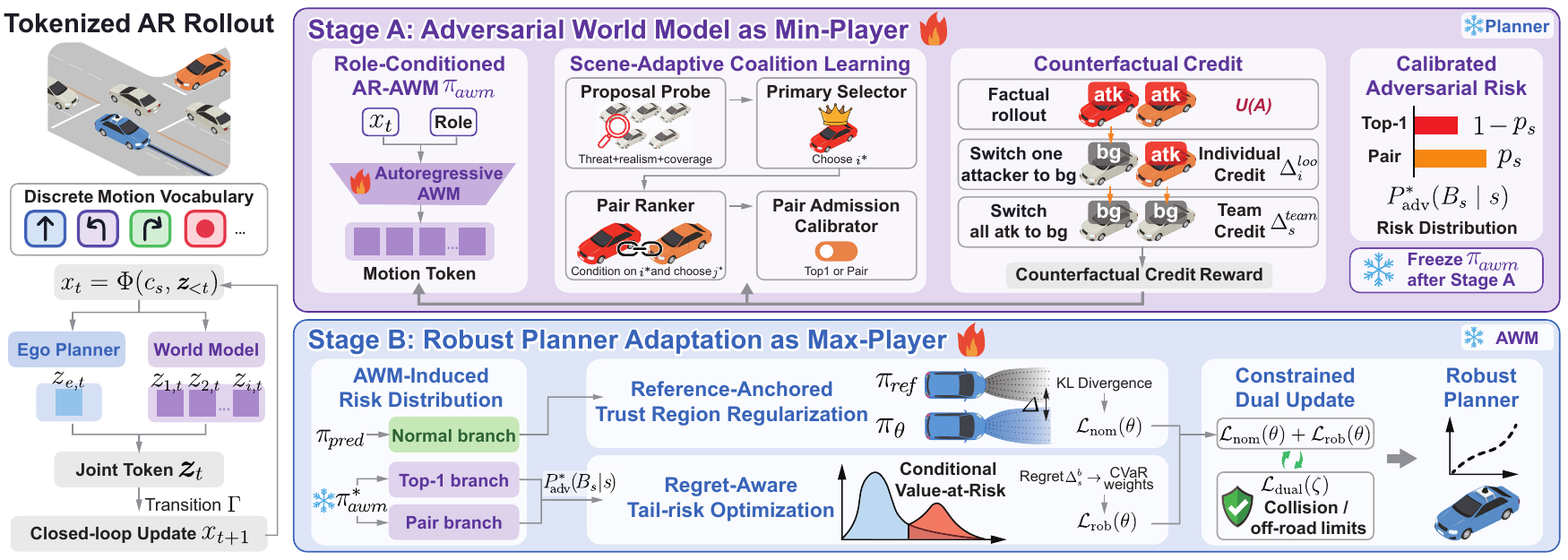}
\vspace{-5pt}
\caption{Decoupled self-play fine-tuning with AWM. The full pipeline is described in Algorithm~\ref{alg:awm_self_play}.}
\label{fig:framework}
\end{figure}

We instantiate the idealized game in Eq.~\ref{eq:ideal_game} as a decoupled self-play fine-tuning framework over the shared autoregressive factorization in Eq.~\ref{eq:factorization}. 
The key design is to separate the two roles of the game without changing the rollout interface. 
In Stage~A, the planner is frozen and the non-ego factors are replaced by the AWM that searches for sparse adversarial coalitions as the opponent distribution (Section~\ref{sec:awm}). 
In Stage~B, the learned AWM is frozen and the ego factor is optimized as a regret-aware constrained best response to the induced risk (Section~\ref{sec:planner}). 
Thus, adversarial training data are generated by interactive joint rollouts within the same token space and transition dynamics, rather than by external heuristic perturbations or a separate simulator. 
Appendix~\ref{app:extended_details} gives the implementation details and Appendix~\ref{app:theory_final} provides the theoretical analysis. 
Figure~\ref{fig:framework} summarizes the two-stage pipeline.


\vspace{-5pt}
\subsection{Decoupled Self-Play Approximation of the Constrained Game}
\label{subsec:staged_approximation}

\vspace{-5pt}
\textbf{Stage A: Adversarial Coalition Learning.}
We first anchor the planner to a frozen reference $\pi_{\mathrm{plan}}^{\mathrm{nom}}$ and approximate the inner minimization over the non-ego factors $\prod \pi_{\mathrm{pred}}(\cdot)$. Instead of enumerating all subsets $\mathcal A_s$, AWM learns a role-conditioned policy and a scene-adaptive coalition distribution:
\begin{equation}
    \label{eq:stage_a_min}
    \pi_{\mathrm{awm}}^*, \mathcal{P}_{\mathrm{adv}}^* = \arg\min_{\pi_{\mathrm{awm}}, \,\{\mathcal A_s\}} \mathbb{E}_{s \sim \mathcal{D}} \left[ U_s(\pi_{\mathrm{plan}}^{\mathrm{nom}}, \pi_{\mathrm{awm}}, \mathcal{A}_s) \right] \quad \text{s.t.} \quad |\mathcal{A}_s| \le K_{\max}.
\end{equation}
This stage yields a frozen adversarial policy $\pi_{\mathrm{awm}}^*$ with a calibrated risk distribution $\mathcal{P}_{\mathrm{adv}}^*$ over sparse multi-agent attacks.
Proposition~\ref{prop:final_frozen_adv_gap} bounds the local error incurred by freezing the adversary, and Corollary~\ref{cor:final_local_value_guarantee} gives the resulting approximate local robust-value guarantee for the planner update.

\textbf{Stage B: Constrained Robust Best Response.}
With $(\pi_{\mathrm{awm}}^*, \mathcal{P}_{\mathrm{adv}}^*)$ frozen, the planner solves the outer maximization within a reference trust region.
Rather than optimizing raw adversarial utility, we compare the current planner with a frozen reference planner under the same induced attack. 
This yields the reference-relative regret over $\mathcal{P}_{\mathrm{adv}}^*$. The planner's objective is given by:
\begin{equation}
\begin{aligned}
\pi_{\mathrm{plan}}^{*}
=
\arg\max_{\pi_{\mathrm{plan}}}
\quad
&
-
\mathbb{E}_{s \sim \mathcal{D}}
\left[
\mathfrak R_\tau^{A_s \sim \mathcal{P}_{\mathrm{adv}}^*(\cdot \mid s)}
\left(
\Delta_{s}^{\mathrm{adv}}(\pi_{\mathrm{plan}}, \mathcal{A}_s)
\right)
\right]
\\
\mathrm{s.t.}
\quad
&
\mathcal{D}_{\mathrm{ref}}
\left(
\pi_{\mathrm{plan}}, \pi_{\mathrm{ref}}
\right)
\leq \epsilon, \quad
\mathcal{V}_{\mathrm{safe}}
\left(
\pi_{\mathrm{plan}}; \pi_{\mathrm{awm}}^{*}, \mathcal{P}_{\mathrm{adv}}^*
\right)
\leq \kappa ,
\end{aligned}
\label{eq:stage_b_max}
\end{equation}
where the reference $\pi_{\mathrm{ref}}$ is initialized from $\pi_{\mathrm{plan}}^{\mathrm{nom}}$, and
$
\Delta_{s}^{\mathrm{adv}}(\pi_{\mathrm{plan}}, A_s)
=
U_s(\pi_{\mathrm{ref}}, \pi_{\mathrm{awm}}^{*}, A_s)
-
U_s(\pi_{\mathrm{plan}}, \pi_{\mathrm{awm}}^{*}, A_s)$.
Here, $\mathfrak R_\tau$ denotes a generic tail-risk functional over the regret distribution. The abstract constraint
$\mathcal{D}_{\mathrm{ref}}$ enforces reference-anchored retention, while
$\mathcal{V}_{\mathrm{safe}}$ represents admissible safety-violation constraints.
Importantly, Lemma~\ref{lem:final_ar_sensitivity} justifies this local update by showing finite-horizon utility is smooth when the planner remains within a token-level trust region.

\vspace{-5pt}
\subsection{Adversarial World Modeling via Multi-Agent Coalition Learning}
\label{sec:awm}

\vspace{-5pt}
This section instantiates the inner minimization (Eq.~\ref{eq:stage_a_min}).
Directly optimizing an unstructured set of attackers is computationally prohibitive and suffers from credit assignment issues. 
Moreover, effective attacks are sparse and scene-dependent: a single agent may be sufficient in some scenes, whereas in others the worst failure requires coordinated behavior. 
AWM converts the inner search into a sequence: role conditioning defines \textit{how} an agent attacks, coalition learning decides \textit{who} should attack, counterfactual credit trains the attackers, and calibration decides \textit{whether} a paired attacker should be admitted.
This amortizes the worst coalition while remaining on the same motion manifold.

\paragraph{Role-Conditioned Adversarial Generation.}
To keep adversarial rollouts within the same autoregressive structure, AWM changes the behavioral distribution of selected non-ego agents without changing the token vocabulary or transition model. Each non-ego agent receives a latent role assignment $\rho_i\in\{\texttt{bg},\texttt{atk}\}$, where $\rho_i=\texttt{atk}$ iff $i\in\mathcal A_s$. 
At each generation step $t$, for each non-ego agent $i \in \mathcal V_s$ the actions are generated by the shared AWM:
\begin{equation}
    z_{i,t} \sim \pi_{\text{awm}}(\cdot \mid x_t, \rho_i), \quad \forall i \in V_s.
\label{eq:role_generation}
\end{equation}
The ego token $z_{e,t}$ is still sampled from $\pi_{\mathrm{plan}}$.
The role conditioning $\rho_i$ is explicitly injected into both internal agent and decoder representations.
Let $h_i$ be the contextual feature of agent $i$ and $k_{i,t}$ be its decoding state. It modifies output behavioral logits $\ell_{i,t}$ in the generation process via:
\begin{equation}
    \tilde{h}_i = h_i + E_{\mathrm{agent}}(\rho_i), \quad \ell_{i,t} = D(k_{i,t} + E_{\mathrm{dec}}(\rho_i)),
\end{equation}
where $E_{\mathrm{agent}}$ and $E_{\mathrm{dec}}$ are learnable embeddings.  
This makes role switching a behavioral intervention rather than a loss reweighting trick. 
It also enables counterfactual evaluations within the same AWM, which is later used for credit assignment. 
Architectural details are provided in Appendix~\ref{app:architecture}.

\vspace{-5pt}
\paragraph{Sparse Coalition Learning.}
Given the conditioned generator, the next problem is to determine the sparse coalition $A_s$. 
AWM avoids exhaustive subset search through a structured sequence: \textit{proposal probing, primary selection, conditional pairing, and pair admission}. 
A background probe first rolls out all agents with $\rho_i=\texttt{bg}$ and scores each agent by threat, realism, and residual coverage (the exact probe signals and coverage metrics are given in Appendix~\ref{app:awm_probe}). The resulting sparse proposal set $P_s$ removes benign agents while retaining complementary attack potentials. A learned primary selector then chooses $i_s^*=\arg\max_{i\in P_s} f_{\mathrm{pri}}(x_i)$, where $x_i$ combines probe features and proposal evidence. Conditional on $i_s^*$ acting as $\texttt{atk}$, a pair ranker scores relational features $\phi(i_s^*,j)$ and selects $\hat j_s=\arg\max_{j\in P_s\setminus\{i_s^*\}}f_{\mathrm{pair}}(\phi(i_s^*,j))$.
This converts the combinatorial subset search into a tractable sequential approximation. 
Corollary~\ref{cor:final_support_mass_gap} gives an interpretation of the approximation error: the coverage gap decreases when this pipeline captures more of the adversarial support that matters under the full worst-case adversary. Details for $f_{\mathrm{pri}}$ and $f_{\mathrm{pair}}$ are in Appendix~\ref{app:awm_features}.

\bigskip

\vspace{-15pt}
\paragraph{Counterfactual Credit Assignment.} 
The coalition selected above is trained from scene-level utility, but the optimization signal in Eq.~\ref{eq:stage_a_min} needs to be attributed to individual attackers and to their joint behavior. 
Directly isolating and evaluating member-level contribution to reducing the ego utility is challenging, as the credit space scales with the joint action tree.
To achieve fine-grained attribution without enumerating this exploding space, we leverage the role-conditioned nature of the AWM to perform \textit{counterfactual self-play} credit assignment.
Let $\mathcal A_s^{\mathrm{fact}}$ be the factual active coalition in a training rollout, and write $U_s(\mathcal A)$ for the planner utility under $\mathcal A$. 
For each active attacker $i\in\mathcal A_s^{\mathrm{fact}}$, we compute a leave-one-out marginal contribution against counterfactual scenarios:
\begin{equation}
    \Delta_i^{\mathrm{loo}}
    =
    U_s(\mathcal A_s^{\mathrm{fact}}\setminus\{i\})
    -
    U_s(\mathcal A_s^{\mathrm{fact}}).
\end{equation}
A partner might be weak in isolation but critical when synergizing with the primary attacker. 
To preserve non-additive cooperation, we also define the counterfactual team contribution:
\begin{equation}
    \Delta_s^{\mathrm{team}}
    =
    U_s(\operatorname{bg}(\mathcal A_s^{\mathrm{fact}}))
    -
    U_s(\mathcal A_s^{\mathrm{fact}}),
\end{equation}
where $\operatorname{bg}(A_s^{\mathrm{fact}})$ denotes switches the roles of all attackers to $\rho = \texttt{bg}$. Then the attacker's reward is
\begin{equation}\label{eq:hybrid_awm_reward}
    \Delta_i^{\mathrm{hyb}}
    =
    \omega_{\mathrm{loo}}\Delta_i^{\mathrm{loo}}
    +
    \omega_{\mathrm{team}}\Delta_s^{\mathrm{team}}.
\end{equation}
Proposition~\ref{prop:final_hybrid_credit} shows that this hybrid counterfactual reward is gradient-aligned with the adversarial coalition objective, while Lemma~\ref{lem:final_variance_optimal_baseline} interprets the role-switched terms as action-independent baselines for variance reduction. 
The full AWM training objective is given in Appendix~\ref{app:awm_training}.

\vspace{-5pt}
\paragraph{Scene-Adaptive Calibration.}
The final output of Stage-A decides not only who the best pair is, but also whether the pair is worth using in the current scene. To finalize the opponent distribution, we introduce a decisive scene-dependent calibrator.
Given the primary attacker $i_s^*$ and candidate partner $\hat j_s$, we construct a scene-level representation $\psi_s=\mathrm{SceneEnc}(i_s^*,\hat j_s)$ and define the pair gain
$G_s^{\mathrm{pair}}=U_s(\{i_s^*\})-U_s(\{i_s^*,\hat j_s\})$.
A binary calibrator $q_\eta(\psi_s)=\sigma(f_\eta^{\mathrm{cal}}(\psi_s))$ is trained with cross-entropy on $\mathbb I(G_s^{\mathrm{pair}}>0)$ to predict the probability $p_s$ of a positive marginal improvement. 
At inference, the pair is admitted only when $p_s$ exceeds a threshold. 
Proposition~\ref{prop:final_pair_admission_rule} shows that this thresholding rule is the optimal binary admission policy under calibrated pair-gain probabilities. 
Thus, the coalition size is scene-adaptive, preventing attack quality degradation or redundancy.

\vspace{-5pt}
\subsection{Regret-Aware Constrained Robust Planner Optimization}
\label{sec:planner}

\vspace{-3pt}
With the adversary ($\pi_{\mathrm{awm}}^*$ and $q_\eta$) frozen, Stage B instantiates the constrained best-response problem in Eq.~\ref{eq:stage_b_max}.
Exposing the planner to generated attacks via standard adversarial training often leads to performance degradation due to two issues: (1) it forces the planner to adopt overly conservative behaviors while sacrificing nominal driving; and (2) it is risk-insensitive and optimizes an average adversarial loss, diluting the gradients of rare but critical failures. To overcome these issues, we first convert the learned AWM into a conditional risk distribution. Then the planner optimizes adversarial tail-risk using a regret-aware robust objective over the induced distribution. 
To enforce the behavioral regularization in a tractable way, we optimize a reference-anchored dual surrogate: a trajectory-level constraint and a normal-preserving penalty. Safety constraints are handled by dual variables.

\vspace{-8pt}
\paragraph{AWM-Induced Adversarial Risk Distribution.}
For each scene, the frozen AWM defines three rollout branches. 
The normal branch uses the benign world model $\pi_{\mathrm{pred}}$. 
The top-1 branch assigns $\rho_{i_s^*}=\texttt{atk}$ and keeps all other non-ego agents as background. 
The pair branch assigns $\rho_i=\texttt{atk}$ for $i\in\{i_s^*,\hat j_s\}$. Let $\alpha_s^{\mathrm{cal}}=f_\eta^{\mathrm{cal}}(\psi_s)$ be the calibrator logit and $p_s=\sigma(\alpha_s^{\mathrm{cal}}/\tau_{\mathrm{risk}})$ be the temperature-scaled probability of admitting the pair branch. 
AWM induces the discrete branch distribution:
\begin{equation}
    P(B_s=\texttt{top1})=1-p_s,
    \qquad
    P(B_s=\texttt{pair})=p_s.
    \label{eq:adv_branch_dist}
\end{equation}
This converts the coalition output from Stage~A into the calibrated opponent distribution $\mathcal P_{\mathrm{adv}}^*$ used by Stage~B. 
Instead of an unstructured attack buffer, the planner replays each scene and optimizes against a scene-conditional self-play distribution. 
Branch construction details are in Appendix~\ref{app:planner_nominal}.

\paragraph{Reference-Anchored Trust Region Regularization.}
Robust adaptation should not degrade normal driving. We instantiate the reference constraint in Eq.~\ref{eq:stage_b_max} on the normal branch, where the environment remains $\pi_{\mathrm{pred}}$. 
Let $U_{s}^{\mathrm{norm}}(\pi_{\theta})$ denote the normal scene-level utility. 
We use two complementary retention surrogates. First, to preserve nominal driving performance, we define the normal utility margin
$m_{s}^{\mathrm{norm}}(\theta)
=
U_{s}^{\mathrm{norm}}(\pi_{\theta})
-
U_{s}^{\mathrm{norm}}(\pi_{\mathrm{ref}})$ and impose an asymmetric degradation penalty:
$
\psi_{s}^{\mathrm{norm}}(\theta)
=
\left[
-\epsilon_{\mathrm{norm}}
-
m_{s}^{\mathrm{norm}}(\theta)
\right]_{+}^{2}
$. 
Second, to prevent large behavioral drift even when the scalar utility remains acceptable, we also regularize the token distribution along the normal rollout as:
$
\mathcal{D}_{\mathrm{KL}}^{\mathrm{norm}}(s;\theta)
=
\frac{1}{H}
\sum_{t=1}^{H}
D_{\mathrm{KL}}
\left(
\pi_{\theta}(\cdot \mid x_{s,t}^{\mathrm{norm}})
\;\middle\|\;
\pi_{\mathrm{ref}}(\cdot \mid x_{s,t}^{\mathrm{norm}})
\right).
$
The normal-branch objective is:
\begin{equation}
\mathcal L_{\mathrm{nom}}(\theta)
=
\mathbb E_{s\sim\mathcal D}
\left[
\lambda_{\mathrm{pg}}\ell_s^{\mathrm{norm}}(\theta)
+
\lambda_{\mathrm{kl}}\mathcal D_{\mathrm{KL}}^{\mathrm{norm}}(s;\theta)
+
\lambda_{\mathrm{margin}}\psi_s^{\mathrm{norm}}(\theta)
\right].
\label{eq:nominal_trust_region}
\end{equation}
Here $\ell_s^{\mathrm{norm}}$ is the normal-branch policy-gradient loss. 
These terms provide a tractable surrogate for the trust-region constraint. 
Proposition~\ref{prop:final_nominal_floor} proves that they imply a nominal-performance bound, ensuring a conservative update to obtain robustness. Implementation details are in Appendix~\ref{app:planner_nominal}.

\vspace{-10pt}
\paragraph{Adversarial Tail-Risk Optimization.}
Absolute adversarial utility can be dominated by intrinsic scene difficulty rather than the planner's vulnerabilities. 
We therefore propose to optimize reference-relative regret under each adversarial branch to realize the robust objective in Eq.~\ref{eq:stage_b_max}:
\begin{equation}
    \Delta_s^{b}(\theta) = U_s^{b}(\pi_{\mathrm{ref}}) - U_s^{b}(\pi_\theta), \quad \text{for } b \in \{\mathrm{top1}, \mathrm{pair}\}.
\label{eq:adv_regret}
\end{equation}
A positive regret indicates that $\pi_\theta$ underperforms the baseline under the specific attack $b$. Coupling this with the AWM-induced distribution, we construct a structured discrete regret distribution:
\begin{equation}
    \mathcal{D}_s^\theta = \left\{ \left(\Delta_s^{\mathrm{top1}}(\theta), 1-p_s\right), \left(\Delta_s^{\mathrm{pair}}(\theta), p_s\right) \right\}.
\label{eq:regret_dist}
\end{equation}
Proposition~\ref{prop:final_regret_control_variate} shows that regret preserves the branchwise gradient direction of utility maximization while removing intrinsic difficulty variation. 
Standard expected loss would dilute the gradients of these rare but critical attacks by their low probabilities.
Instead, we apply a tail-focused risk operator $\mathfrak{R}_\tau$ at tail level $\tau \in (0, 1]$ and instantiate $\mathfrak{R}_\tau$ by analytically solving the Conditional Value-at-Risk (CVaR) allocation \cite{rockafellar2000optimization} over the induced two-point regret distribution $\mathcal D_s^\theta$.
Since the distribution has only two support points, Proposition~\ref{prop:final_cvar_weights} gives closed-form weights $w_s^{\mathrm{top1}}(\tau)$ and $w_s^{\mathrm{pair}}(\tau)$. 
These weights amplify low-probability but high-regret pair attacks when they occupy the adversarial tail. 
Let $\ell_s^{\texttt{top1}}(\theta)$ and $\ell_s^{\texttt{pair}}(\theta)$ be the policy gradient losses, the adversarial tail-risk objective is:
\begin{equation}
    \mathcal L_{\mathrm{rob}}(\theta)
    =
    \mathbb E_{s\sim\mathcal D}
    \left[
    w_s^{\mathrm{top1}}(\tau)\ell_s^{\mathrm{top1}}(\theta)
    +
    w_s^{\mathrm{pair}}(\tau)\ell_s^{\mathrm{pair}}(\theta)
    \right].
\label{eq:rob_loss}
\end{equation}
Implementations are detailed in Appendix~\ref{app:cvar_weights}.
Regret-CVaR dynamically shifts the optimization weight to whichever branch currently exposes the planner's worst-case vulnerability, regardless of its prior probability. This ensures the updates focus precisely on the long-tail of the regret severity.

\vspace{-10pt}
\paragraph{Safety-constrained dual update.}
\label{subsubsec:dual_update}
Finally, to prevent the planner from exploiting the objective by violating traffic rules (e.g., evading an attacker by driving off-road), we enforce explicit safety constraints in Eq.~\ref{eq:stage_b_max}.
Let $v_{s,m}^b\in[0,1]$ be the $m$-th safety violation indicator under branch $b$. The expected violation rate is $\bar{v}_{s,m} = (1-p_s) v_{s,m}^{\texttt{top1}} + p_s v_{s,m}^{\texttt{pair}}$.
Given admissible thresholds $\kappa_m$, we update Lagrangian dual variables $\zeta$ to dynamically adjust the cost multipliers $\lambda_m(\zeta)$:
$
\mathcal L_{\mathrm{dual}}(\zeta)
    =
    \sum_{m=1}^{M}
    \lambda_m(\zeta)
    \left(
        \mathbb E_{s\sim\mathcal D}[\bar v_{s,m}]
        -
        \kappa_m
    \right).
$
Training alternates between minimizing
$\mathcal L_{\mathrm{planner}}(\theta)=\mathcal L_{\mathrm{nom}}(\theta)+\mathcal L_{\mathrm{rob}}(\theta)$
over planner parameters and maximizing $\mathcal L_{\mathrm{dual}}$ over $\zeta$. 
Proposition~\ref{prop:final_softmax_dual} shows that the softmax-parameterized dual behaves as a smooth maximum-violation penalty, concentrating weight on the most violated safety channel while keeping the reward scale bounded. 
The multiplier parameterization is provided in Appendix~\ref{app:dual_update}.

\section{Experiments}
\label{sec:experiment}

We organize the empirical study around three questions. First, does adversarial self-play improve the final closed-loop planner on standard and interaction-heavy benchmarks? Second, does the learned AWM expose transferable planner vulnerabilities when it is used as a simulator-side traffic model? Third, which components of the two-stage optimization account for the observed gains? Experimental protocols, metric definitions, and implementation details are provided in Appendix~D.

\vspace{-10pt}
\subsection{Closed-Loop Planning and Adversarial Testing}

\vspace{-5pt}
\paragraph{Closed-Loop Planning Performance.}
We first evaluate whether the robust best-response update improves ordinary closed-loop planning performance. Table~\ref{tab:main_benchmark} reports benchmark results on \texttt{nuPlan}~\cite{nuplan}, \texttt{InterPlan}~\cite{hallgarten2024can}, and \texttt{InterPlan-LongTail} (Appendix~D.3). Relative to the Plan-R1 reference, AWM-Planner (ours) improves five of the six \texttt{nuPlan} settings, with the largest gains on Test14-hard NR and R (\(+2.60\) and \(+1.28\)). The Test14-random NR score is essentially unchanged (\(-0.01\)), while Test14-random R increases by \(+0.67\), indicating that the hard-case improvement is not obtained by broadly sacrificing normal driving behavior.
The interaction benchmarks provide a more targeted assessment of the intended mechanism. AWM-Planner improves Plan-R1 by \(+2.10\) on full \texttt{InterPlan}, \(+4.97\) on InterPlan10, and \(+5.81\) on \texttt{InterPlan-LongTail}. The per-template breakdown in Table~\ref{tab:interplan_longtail_templates} shows improvements on seven of eight designed templates, with the strongest gains in close straight-driving and medium-density yielding cases. These results suggest that AWM primarily improves recoverable interaction failures exposed by structured adversarial traffic.

\begin{table*}[!t]
\caption{Closed-loop planning results on \texttt{nuPlan} and \texttt{InterPlan} benchmarks. Higher is better. NR/R: non-reactive/reactive mode. Among learning-based planners, the best result is in \textbf{bold} and the second best result is \underline{underlined}.
\protect\textcolor{gray!20}{\rule{1.2em}{0.9em}} is the reference baseline (Plan-R1), \protect\textcolor{bestblue}{\rule{1.2em}{0.9em}} indicates an improvement over the reference, while \protect\textcolor{nogainorange}{\rule{1.2em}{0.9em}} indicates no improvement or degradation.}
\label{tab:main_benchmark}
\centering
\setlength{\tabcolsep}{1.1pt}
\renewcommand{\arraystretch}{1.2}
\begin{small}
\resizebox{\linewidth}{!}{%
\begin{tabular}{@{}llccccccccc@{}}
\toprule
Type & Planner & \multicolumn{2}{c}{Val14} & \multicolumn{2}{c}{Test14-hard} & \multicolumn{2}{c}{Test14-random} & \multicolumn{3}{c}{InterPlan} \\
\cmidrule(lr){3-4} \cmidrule(lr){5-6} \cmidrule(lr){7-8} \cmidrule(lr){9-11}
 & & NR $\uparrow$ & R $\uparrow$ & NR $\uparrow$ & R $\uparrow$ & NR $\uparrow$ & R $\uparrow$ & Full $\uparrow$ & InterPlan10 $\uparrow$ & InterPlanLT $\uparrow$ \\
\midrule
\textcolor{gray}{Expert} & \textcolor{gray}{Log-Replay} & \textcolor{gray}{93.53} & \textcolor{gray}{80.32} & \textcolor{gray}{85.96} & \textcolor{gray}{68.80} & \textcolor{gray}{94.03} & \textcolor{gray}{75.86} & \textcolor{gray}{14.76} & \textcolor{gray}{--} & \textcolor{gray}{--} \\
\midrule
\multirow{3}{*}{\shortstack[l]{Rule}} & IDM / IDM+Mobil & 75.60 & 77.33 & 56.15 & 62.26 & 70.39 & 72.42 & 47.07 & 31.28 & 38.49 \\
 & PDM-Closed*~\cite{PDM} & 92.84 & 92.12 & 65.08 & 75.19 & 90.05 & 91.64 & 69.64 & 41.81 & 45.93 \\
 & PDM-Hybrid*~\cite{PDM} & 92.77 & 92.11 & 65.99 & 76.07 & 90.10 & 91.28 & 41.61 & 26.12 & 27.79 \\
\midrule
\multirow{10}{*}{Learning} & UrbanDriver~\cite{urbandriver} & 68.57 & 64.11 & 50.40 & 49.95 & 51.83 & 67.15 & 5.56 & 3.62 & 7.75 \\
 & PDM-Open~\cite{PDM} & 53.53 & 54.24 & 33.51 & 35.83 & 52.81 & 57.23 & 26.22 & 24.67 & 17.51 \\
 & GameFormer~\cite{Gameformer} & 13.32 & 8.69 & 7.08 & 6.69 & 11.36 & 9.31 & 11.07 & 9.12 & 8.13 \\
 & PlanTF~\cite{PlanTF} & 84.27 & 76.95 & 69.70 & 61.61 & 85.62 & 79.58 & 47.72 & 33.00 & 32.38 \\
 & PLUTO~\cite{Pluto} & 88.89 & 78.11 & 70.03 & 59.74 & 89.90 & 78.62 & 57.74 & \underline{42.87} & \underline{38.46} \\
 & Diff. Planner$_{\text{DPM}}$~\cite{zheng2025diffusion} & 89.87 & 82.80 & 75.99 & 69.22 & 89.19 & 82.93 & 50.07 & 24.75 & 29.96 \\
 & Diff. Planner$_{\text{DDIM}}$~\cite{zheng2025diffusion} & 89.81 & 82.94 & 76.01 & 68.18 & 89.14 & 82.63 & 49.86 & 24.11 & 28.98 \\
 & Flow Planner~\cite{FlowPlanner} & \textbf{90.43} & 83.31 & 76.47 & 70.42 & 89.88 & 82.93 & \textbf{61.82} & 35.42 & 31.06 \\
  & PlannerRFT~\cite{li2026plannerrft} & \underline{89.96} & 84.46 & 77.16 & 72.21 & 90.76 & 85.80 & -- & -- & -- \\
  & Plan-R1~\cite{tang2025plan}
 & \refscore{88.98}
 & \refscore{\underline{87.69}}
 & \refscore{\underline{77.45}}
 & \refscore{\underline{77.20}}
 & \refscore{\textbf{91.23}}
 & \refscore{\underline{90.04}}
 & \refscore{56.64}
 & \refscore{40.95}
 & \refscore{34.89} \\
\midrule
Ours & AWM-Planner
 & \gain{89.72}{0.74}
 & \gain{\textbf{88.44}}{0.75}
 & \gain{\textbf{80.05}}{2.60}
 & \gain{\textbf{78.48}}{1.28}
 & \nogain{\underline{91.22}}{-0.01}
 & \gain{\textbf{90.71}}{0.67}
 & \gain{\underline{58.74}}{2.10}
 & \gain{\textbf{45.92}}{4.97}
 & \gain{\textbf{40.70}}{5.81} \\
\bottomrule
\end{tabular}
}
\end{small}
\vspace{-10pt}
\end{table*}

\vspace{-10pt}
\paragraph{AWM as a Transferable Adversarial Simulator.}
We next test whether the learned adversary captures planner-agnostic interaction stress. Table~\ref{tab:cross_planner_simagent} deploys AWM as the non-ego sim-agent traffic model for seven planners. AWM reduces the sim-agent closed-loop score for every planner, decreasing the mean score from \(70.24\) to \(67.46\). Although Plan-R1 has the largest drop, rule-based, optimization-based, diffusion-based, and transformer-based planners are all affected, indicating that the adversarial behavior generalizes beyond the planner used in Stage A.
The same evaluation also separates adversarial strength from planner robustness after fine-tuning. AWM-Planner starts from a higher score under the normal host than Plan-R1 (\(77.67\) vs. \(75.86\)) and suffers a smaller AWM-induced degradation (\(-3.14\) vs. \(-4.15\)). The submetric pattern is consistent with controlled interaction stress: the largest changes occur in progress and TTC, whereas drivable-area, speed-limit, and driving-direction scores remain nearly unchanged. 
The simulator diagnostics in Table~\ref{tab:app_simagent_diagnostics} further show that AWM utilizes nearly the full two-agent budget while remaining constrained by feasibility gates, indicating that the score drop is not a result of an unconstrained simulator collapse.

\begin{table*}[!htbp]
\vspace{-10pt}
\caption{Cross-planner Test14-hard sim-agent evaluation. \textsc{Normal} is the learned non-adversarial sim-agent baseline, while a held-out \textsc{AWM} uses the adversarial host as the simulator-side non-ego policy. Thus, the R-Score is an AWM/Normal sim-agent closed-loop score, not an ordinary planner closed-loop score. \(\Delta_R\) is computed as \(R_{\textsc{AWM}} - R_{\textsc{Normal}}\), where negative values indicate the R-Score drop induced by the adversarial host. Prog., Coll., TTC, Driv., Speed, and Dir. denote progress, collision, time-to-collision, drivable-area, speed-limit, and driving-direction submetrics. \protect\colorbox{gray!15}{\phantom{xx}} mark the corresponding \textsc{Normal} reference R-Score, and \protect\colorbox{bestblue}{\phantom{xx}} mark the AWM-induced R-Score change.}
\label{tab:cross_planner_simagent}
\centering
\setlength{\tabcolsep}{6pt}
\renewcommand{\arraystretch}{1.}
\begin{small}
\resizebox{1\linewidth}{!}{%
\begin{tabular}{@{}ll|cc|ccccccc@{}}
\toprule
Planner & World & R-Score $\uparrow$ & $\Delta_R$ & Prog. $\uparrow$ & Coll. $\uparrow$ & TTC $\uparrow$ & Driv. $\uparrow$ & Comfort $\uparrow$ & Speed $\uparrow$ & Dir. $\uparrow$ \\
\midrule
\multirow{2}{*}{Diffusion Planner~\cite{zheng2025diffusion}}
& Normal & \normalref{$70.81$} & -- & $88.06$ & $84.71$ & $73.33$ & $95.41$ & $86.20$ & $96.50$ & $98.96$ \\
& AWM & $68.10$ & \deltacell{$-2.71$} & $84.25$ & $83.64$ & $69.49$ & $94.85$ & $84.56$ & $96.48$ & $98.90$ \\
\midrule
\multirow{2}{*}{IDM+Mobil}
& Normal & \normalref{$55.64$} & -- & $67.55$ & $80.51$ & $65.44$ & $86.40$ & $88.97$ & $96.93$ & $98.53$ \\
& AWM & $53.21$ & \deltacell{$-2.43$} & $62.96$ & $80.15$ & $62.13$ & $86.03$ & $88.24$ & $96.97$ & $98.53$ \\
\midrule
\multirow{2}{*}{PDM-Closed~\cite{PDM}}
& Normal & \normalref{$65.45$} & -- & $73.97$ & $87.68$ & $70.96$ & $94.85$ & $83.09$ & $99.54$ & $99.08$ \\
& AWM & $64.20$ & \deltacell{$-1.25$} & $70.37$ & $88.97$ & $69.49$ & $94.85$ & $83.46$ & $99.54$ & $99.08$ \\
\midrule
\multirow{2}{*}{PLUTO~\cite{Pluto}}
& Normal & \normalref{$76.63$} & -- & $81.64$ & $92.50$ & $85.39$ & $96.57$ & $89.02$ & $97.04$ & $96.44$ \\
& AWM & $73.60$ & \deltacell{$-3.03$} & $75.28$ & $92.10$ & $81.99$ & $96.69$ & $87.50$ & $96.99$ & $96.69$ \\
\midrule
\multirow{2}{*}{PlanTF~\cite{PlanTF}}
& Normal & \normalref{$69.62$} & -- & $83.52$ & $86.21$ & $78.68$ & $94.49$ & $89.34$ & $97.03$ & $96.69$ \\
& AWM & $66.89$ & \deltacell{$-2.73$} & $79.20$ & $86.95$ & $74.63$ & $94.12$ & $83.09$ & $97.06$ & $96.69$ \\
\midrule
\multirow{2}{*}{Plan-R1~\cite{tang2025plan}}
& Normal & \normalref{$75.86$} & -- & $87.93$ & $89.15$ & $79.41$ & $95.22$ & $99.26$ & $99.07$ & $96.69$ \\
& AWM & $71.71$ & \deltacell{$-4.15$} & $82.83$ & $88.24$ & $73.16$ & $94.85$ & $98.16$ & $99.08$ & $97.06$ \\
\midrule
\multirow{2}{*}{\textbf{AWM-Planner (Ours)}}
& Normal & \normalref{\textbf{77.67}} & -- & $83.60$ & $93.38$ & $80.51$ & $96.69$ & $95.22$ & $99.47$ & $98.90$ \\
& AWM & \textbf{74.53} & \deltacell{$-3.14$} & $79.07$ & $91.54$ & $77.94$ & $96.69$ & $95.96$ & $99.47$ & $98.90$ \\
\bottomrule
\end{tabular}
}
\end{small}
\vspace{-5pt}
\end{table*}

\subsection{Algorithmic Analysis and Discussion}
\paragraph{Planner-World Cross-Validation.} Closed-loop scores couple the planner policy with the simulator dynamics. To isolate the effect of planner adaptation, we evaluate pre- and post-training planners under matched normal and adversarial world branches. Figure~\ref{fig:main_quantitative_diagnostics}(a) summarizes this matrix, and Table~\ref{tab:planner_prepost_world_matrix} provides the full values. The post-trained planner improves nominal reward and progress under the normal world, while the larger gains occur under the AWM world: adversarial reward increases by \(0.0119\), and Tail-CVaR increases from \(0.8660\) to \(0.9095\). This asymmetry matches the objective of Stage B, which should lift the adversarial lower tail while preserving nominal rollout quality.
Table~\ref{tab:planner_offline_branch_matrix} further decomposes the AWM host into Top-1, Pair, and Adaptive branches. Pair and Adaptive branches produce larger lower-tail stress than the single-agent Top-1 branch, and the adaptive branch remains less aggressive than forced Pair because the calibrator admits two-agent coalitions only when they are predicted to be useful. 
Thus, the planner gains in Table~\ref{tab:main_benchmark} indicate a targeted policy adaptation that specifically improves the response to structured adversarial interactions.

\begin{figure}[!htbp]
\centering
\begin{minipage}[t]{0.49\linewidth}
\centering
\includegraphics[width=\linewidth]{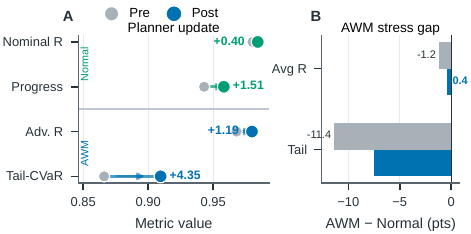}
\vspace{5pt}
{\small\textbf{(a)} Planner-world cross-validation}
\end{minipage}
\hfill
\begin{minipage}[t]{0.49\linewidth}
\centering
\includegraphics[width=\linewidth]{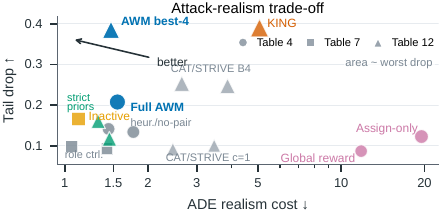}
\vspace{5pt}
{\small\textbf{(b)} Attack and realism trade-off}
\end{minipage}
\vspace{-5pt}
\caption{(a) Cross-validation shows that robust fine-tuning improves the adversarial tail while preserving nominal rollout quality (full values in Table~\ref{tab:planner_prepost_world_matrix}). (b) The attack and realism plot summarizes the AWM ablations in Tables~\ref{tab:awm_credit_assignment}, \ref{tab:awm_coalition_structure}, \ref{tab:awm_diagnostics}, \ref{tab:awm_role_conditioning} and the comparison in Table~\ref{tab:prior_generator_pilot}. AWM achieves a balanced operating point, whereas aggressive search increases attack strength at a clear realism cost.}
\label{fig:main_quantitative_diagnostics}
\end{figure}

\paragraph{Mechanistic Analysis of AWM.}
The AWM is useful for planner training only if its adversarial rollouts are both informative and plausible. The AWM-side ablations in Tables~\ref{tab:awm_credit_assignment}--\ref{tab:awm_role_conditioning} support this requirement. Role-conditioned counterfactual credit produces substantially stronger lower-tail degradation than global reward broadcast or assignment-only variants, while avoiding the large ADE/FDE collapse observed in coarse credit baselines. This indicates that role switching provides a more reliable training signal for coordinated adversarial behavior than assigning the same scene-level reward to all agents.
The coalition ablations show a complementary effect. Forced \(K{=}2\) attacks can increase raw stress, but they move the generator toward a less realistic mode. Instead, the adaptive calibrated AWM preserves most of the controlled attack effect while keeping pair usage sparse and well calibrated. 
Figure~\ref{fig:main_quantitative_diagnostics}(b) gives a compact view of this attack and realism trade-off (full results in Table~\ref{tab:prior_generator_pilot}). 
Matched best-4 search and the KING-style~\cite{hanselmann2022king} aggressive bound can produce larger raw disruption, but they also incur larger realism costs and background-collision proxies. In contrast, single-candidate CAT$^*$~\cite{zhang2023cat} and STRIVE$^*$~\cite{rempe2022generating} remain closer to ordinary AWM generation but produce weaker tail stress. AWM therefore provides the intended middle regime: sufficient adversarial stress for robust training without becoming a purely destructive test-time attacker.

\begin{table*}[!htbp]
\vspace{-10pt}
\caption{Training paradigm comparison. Open-loop rewards are on the planner's rollout utility scale; CL scores are ordinary nuPlan closed-loop scores. Relative GPU-hours are normalized to the decoupled two-stage pipeline, and plateau updates denote thousands of planner update steps until validation Tail-CVaR stops improving. Stable seeds count runs that finish without divergence or failure, and \textsc{Hard std.} is the seed-wise standard deviation of Test14-hard NR score.}
\label{tab:training_paradigm}
\centering
\setlength{\tabcolsep}{3pt}
\renewcommand{\arraystretch}{1.1}
\begin{small}
\resizebox{1\linewidth}{!}{%
\begin{tabular}{@{}lccccccccc@{}}
\toprule
\multirow{2}{*}{Training paradigm} & \multicolumn{3}{c}{Open-loop rollout} & \multicolumn{2}{c}{Planner CL} & \multicolumn{4}{c}{Computation and stability} \\
\cmidrule(lr){2-4}\cmidrule(lr){5-6}\cmidrule(lr){7-10}
 & Nom. R $\uparrow$ & Adv. R $\uparrow$ & Tail $\uparrow$ & Hard NR $\uparrow$ & Random R $\uparrow$ & Rel. GPU-h $\downarrow$ & Plateau k $\downarrow$ & Stable seeds $\uparrow$ & Hard std. $\downarrow$ \\
\midrule
Planner-only refinement & $0.9804$ & $0.9685$ & $0.8467$ & 77.45 & 90.04 & $0.62$ & $5.0$ & $3/3$ & $0.55$ \\
Simultaneous alternating & $0.9808$ & $0.9726$ & $0.8663$ & $78.30$ & $89.30$ & $2.35$ & $18.5$ & $2/3$ & $1.65$ \\
Alternating warmup + fixed AWM & $0.9812$ & $0.9744$ & $0.8810$ & $79.30$ & $90.20$ & $1.45$ & $12.0$ & $3/3$ & $0.85$ \\
\rowcolor{bestblue}\textbf{Decoupled Stage-A/B (ours)} & $\mathbf{0.9818}$ & $\mathbf{0.9756}$ & $\mathbf{0.8919}$ & \textbf{80.05} & \textbf{90.71} & $1.00$ & $9.0$ & $3/3$ & $\mathbf{0.40}$ \\
\bottomrule
\end{tabular}
}
\end{small}
\end{table*}

\paragraph{Planner Adaptation Ablations.}
We finally examine whether the planner's gain depends on the proposed decoupled optimization and regret-aware objective. Table~\ref{tab:training_paradigm} compares planner-only refinement, simultaneous alternating self-play, alternating warmup followed by fixed refinement, and the proposed decoupled pipeline. Our framework achieves the best adversarial Tail-CVaR, the highest closed-loop scores, and the lowest variability. Simultaneous alternating updates improve over planner-only refinement, but they require substantially more computation and exhibit weaker stability. This supports the design of first learning an opponent and then optimizing a constrained robust best response against the frozen distribution.
Table~\ref{tab:planner_objective_mini} isolates the objective of Stage-B. Expected-risk and no-CVaR variants preserve competitive averages, but they lose adversarial Tail-CVaR and Test14-hard NR score, which is consistent with the need to emphasize rare severe risks. Removing normal retention strengthens adversarial reward but degrades nominal progress and score. Additional opponent sensitivity and hyperparameter results in Tables~\ref{tab:planner_internal_host_switch} and~\ref{tab:planner_hparam_mini}, with Figure~\ref{fig:hparam_sensitivity}, show that the planner is stable under moderate host changes while following the expected robustness and nominal performance trade-off.

\begin{table*}[!htbp]
\vspace{-3pt}
\caption{Planner objective ablation with a robustness and nominal performance trade-off. Offline rewards are the planner's rollout utility; CL scores and metrics are ordinary nuPlan Test14 simulation.}
\label{tab:planner_objective_mini}
\centering
\setlength{\tabcolsep}{3pt}
\renewcommand{\arraystretch}{1.1}
\begin{small}
\resizebox{0.9\linewidth}{!}{%
\begin{tabular}{@{}lcccc|ccc|ccc@{}}
\toprule
\multirow{2}{*}{Variant} & \multicolumn{4}{c|}{Open-loop rollout} & \multicolumn{3}{c|}{CL: Test14-hard NR} & \multicolumn{3}{c}{CL: Test14-random R} \\
 & Nom. R & Nom. prog. & Adv. R & Tail & Score & Prog. & TTC & Score & Coll. & TTC \\
\midrule
\rowcolor{bestblue}Full & \textbf{0.9818} & $0.9545$ & $0.9756$ & $0.8919$ & 80.05 & 90.16 & \textbf{81.99} & \textbf{90.71} & \textbf{98.08} & \textbf{95.40} \\
Expected only & $0.9815$ & \textbf{0.9579} & $0.9681$ & $0.8640$ & $76.80$ & $86.20$ & $78.36$ & $90.60$ & $97.88$ & $95.20$ \\
w/o CVaR & $0.9816$ & $0.9530$ & $0.9670$ & $0.8501$ & $77.40$ & $87.00$ & $78.50$ & $90.55$ & $97.90$ & $95.15$ \\
w/o normal retention & $0.9790$ & $0.9390$ & \textbf{0.9762} & \textbf{0.8950} & \textbf{80.20} & \textbf{91.00} & $81.20$ & $88.60$ & $96.40$ & $92.05$ \\
\bottomrule
\end{tabular}
}
\end{small}
\end{table*}

\subsection{Case Studies}
\begin{figure}[!htbp]
\centering
\includegraphics[width=0.99\linewidth]{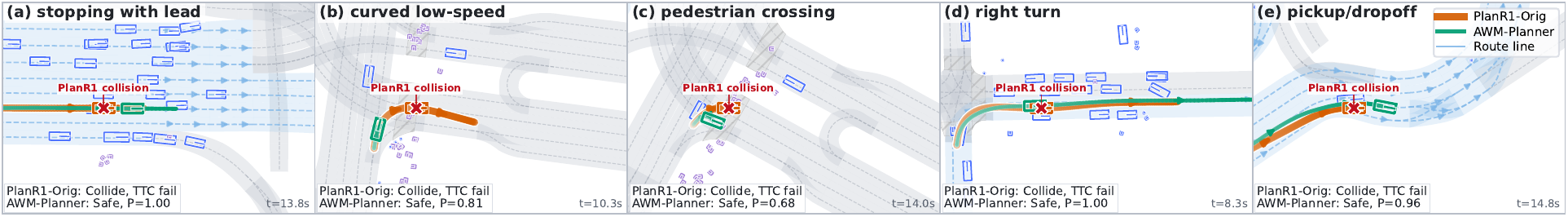}
\vspace{-5pt}
\caption{Test14-hard qualitative cases. Orange denotes the original Plan-R1 planner and green denotes the AWM-trained planner. Each panel overlays the full closed-loop trajectory in the same adversarial scene; red markers identify the original planner's collision or TTC failure point.}
\label{fig:main_qual_recovery}
\end{figure}
\vspace{-10pt}
\begin{wrapfigure}{l}{0.50\linewidth}
\vspace{-1.0em}
\centering
\includegraphics[width=\linewidth]{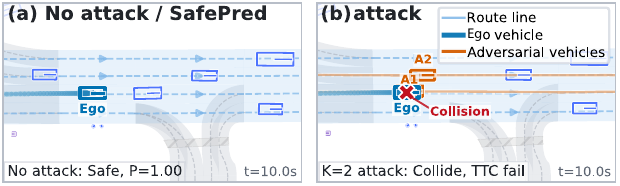}
\vspace{-2em}
\caption{K=2 cooperative attack in sim-agent evaluation. Ego is shown in blue, while A1/A2 are the two adversarial agents selected by AWM.}
\label{fig:main_k2_attack}
\vspace{-10pt}
\end{wrapfigure}
The examples in Figures~\ref{fig:main_qual_recovery} and~\ref{fig:main_k2_attack} illustrate the two mechanisms quantified above. In the recovery cases, AWM-Planner reacts earlier than the original Plan-R1 and avoids collision or TTC failures in the same adversarial scenes. Additional recovery cases, \texttt{InterPlan-LongTail} examples, and \(K{=}2\) cooperative attacks are shown in Figures~7-9.
The simulator-side example in Figure~\ref{fig:main_k2_attack} shows how the learned adversary creates stress without destroying the scene. Under the normal host, the ego can proceed through the local interaction. Under the \(K{=}2\) attack, two selected non-ego agents jointly constrain the ego corridor and induce a collision or TTC failure. This behavior matches the results in Table~\ref{tab:cross_planner_simagent}: AWM mainly reduces progress and TTC, while rule-compliance metrics remain nearly unchanged.

\section{Conclusion}
This paper introduced AWM, a multi-agent self-play fine-tuning framework leveraging predictive world models as structured adversaries for robust motion planning. To overcome the intractability of high-dimensional adversarial training, we formulated the problem as a constrained min-max game and proposed a theoretically grounded decoupled solver. By integrating scene-adaptive coalition learning with counterfactual credit assignment, AWM efficiently exposes sparse, synergistic interaction failures within the planner's inherent autoregressive interface. Subsequently, a regret-aware robust best response, with reference-anchored trust region constraints, fine-tunes the planner against the induced tail risk. Theoretical analysis and empirical evaluations demonstrate that our framework significantly improves planner robustness in rare, complex interactions while maintaining nominal performance. 
Cross-planner sim-agent evaluation and ablations further support the transferability of the learned adversarial traffic and the necessity of fine-grained credit assignment, learnable adversarial coalition, and tail-focused learning. 
However, AWM remains limited by the fidelity of the tokenized world model and lacks exhaustive scenario coverage or formal closed-loop safety guarantees.

\bibliographystyle{plainnat}
\bibliography{references}

\newpage

\appendix
\section*{Appendix}
\startcontents[appendix]
\begingroup
\setcounter{tocdepth}{3}

\small                 
\linespread{0.88}\selectfont  
\setlength{\parskip}{3pt}     

\printcontents[appendix]{}{1}{\section*{Appendix Contents}\vskip-10pt\hrulefill\vskip-15pt}
\endgroup

\newpage
\section{Related Work}

\paragraph{Autoregressive Motion Generation.}
Autoregressive modeling has recently emerged as a dominant paradigm for both multi-agent scenario simulation and ego-motion planning in autonomous driving.
Early sequence modeling methods primarily operate in continuous spaces and leverage Transformer-based architectures \cite{feng2025artemis,sun2023large} or diffusion models \cite{zhang2025epona,zheng2025diffusion} to refine continuous waypoints iteratively. However, inspired by the success of Large Language Models (LLMs), discrete next-token prediction modeling has gained significant traction due to its scalability and temporal causality.
Agent simulation models such as Trajeglish \cite{philion2023trajeglish}, SMART \cite{wu2024smart}, and BehaviorGPT \cite{zhou2024behaviorgpt} tokenize maps and trajectories to progressively decode joint multi-agent futures, achieving realistic and interactive rollouts.
Building upon this motion-as-language concept \cite{seff2023motionlm}, recent efforts have extended this discrete representation to closed-loop ego planning \cite{hu2024solving,ye2025dap,tang2025plan}, casting ego decision-making as motion token generation.
Most relevant to our work are architectures that explicitly decouple predictive scene evolution (the world model) from ego planning (the policy), such as the planning-prediction dual-branch design in \cite{tang2025plan,ye2025dap} and the policy-transition decomposition in \cite{zhang2025carplanner}. This predictive branch allows the planner to efficiently interact with reactive non-ego responses during rollouts without relying on a simulator.
However, prior methods treat the world model as a frozen or passive behavioral prior that merely replicates logged data, lacking a structured mechanism to generate customized non-ego behaviors within the same tokenized framework.

\paragraph{Reinforcement Learning Fine-Tuning (RLFT).}
While large-scale behavior cloning establishes a strong traffic prior, it invariably suffers from covariate shift in closed-loop evaluation and fails to enforce safety-critical driving principles \cite{karkus2025beyond}. To bridge this gap, RLFT is increasingly adopted for aligning driving policies with downstream objectives.
In the continuous domain, frameworks such as \cite{huang2025gen,li2026plannerrft} employ learned reward models and policy-guided diffusion to fine-tune generative planners for sample efficiency and alignment. Within discrete architectures, Plan-R1~\cite{tang2025plan} applies GRPO-style optimization with rule-based rewards while maintaining a frozen reactive world model for surrounding agents. Similarly, \citet{ye2025dap} incorporates offline SAC-BC to mitigate imitation ambiguity, and \citet{zhang2025carplanner} suggests that autoregressive structures naturally facilitate large-scale RLFT.
Furthermore, RLFT has also been applied to simulation agents for better collision realism and behavior fidelity \cite{peng2024improving,cao2024reinforcement,chen2025rift}. Despite these advances, existing RLFT pipelines primarily optimize the ego policy against the expected distribution of a naturalistic traffic prior.
Due to the rarity of long-tail events, the resulting planners can learn average behaviors and remain vulnerable to rare but safety-critical risks.

\paragraph{Adversarial Training and Testing.}
Evaluating and ensuring the robustness of autonomous driving needs moving beyond naturalistic traffic scenarios to adversarial edge cases \cite{liu2024curse}.
Early adversarial generation methods \cite{wang2021advsim,rempe2022generating,chang2024safe,mei2025llm,xu2025diffscene} often rely on heuristic perturbations, optimization-based attacks on continuous trajectories, or training specific adversarial agents to induce collisions for stress testing.
To further improve policy robustness, closed-loop adversarial training frameworks \cite{zhang2023cat,nie2025steerable,liu2025adv,stoler2025seal,nie2026adv} alternate between safety-critical scenario generation and ego policy updates.
Recent literature emphasizes that effective adversaries must be structured and reactive rather than purely collision-seeking perturbations to ensure the generated scenarios remain realistic and useful for training. Several techniques such as skill learning \cite{stoler2025seal}, preference alignment \cite{nie2025steerable}, reverse-time token reconstruction \cite{liu2025adv} are shown to be effective for preserving realism or controllability.
Despite these advances, existing methods are largely treated as external environment augmentations, heuristic resampling against a static planner, computationally heavy post-hoc optimization or test-time biasing.
More importantly, existing closed-loop adversarial training frameworks usually assume model-free RL and thus rely on CPU-intensive policy rollouts on physical simulators, significantly limiting their scalability and efficiency \cite{nie2026adv}. Therefore, they often operate in a small set of scenarios (e.g., 500), making them less effective for large-scale motion planners.

\section{Extended Architectural and Algorithmic Details}
\label{app:extended_details}

This appendix provides technical details to supplement the method presented in Section~\ref{sec:method}. Algorithm~\ref{alg:awm_self_play} first summarizes the full decoupled self-play procedure. Section~\ref{app:architecture} details the shared discrete autoregressive architecture and the role-injection mechanism. Section~\ref{app:awm_details} elaborates on the mathematical formulations of the scene-adaptive coalition learning pipeline, including probe statistics, feature representations, and the full training objective. Section~\ref{app:planner_details} provides the exact analytical derivations and loss formulations for the regret-aware constrained planner optimization. Finally, Section~\ref{app:awm_sim_agent} explains how the learned AWM is instantiated as a closed-loop sim-agent evaluator for transferable adversarial testing.

\begin{algorithm}[!htbp]
\caption{Decoupled Self-Play Adversarial Fine-Tuning}
\label{alg:awm_self_play}
\centering
\fcolorbox{black!35}{gray!3}{%
\begin{minipage}{0.96\linewidth}
\footnotesize
\begin{algorithmic}[1]
\Require Scenario distribution \(\mathcal D\); nominal planner \(\pi_{\mathrm{plan}}^{\mathrm{nom}}\); benign world model \(\pi_{\mathrm{pred}}\); coalition budget \(K_{\max}\); CVaR tail mass \(\tau\); safety thresholds \(\kappa\).
\Ensure Frozen adversarial host \(\pi_{\mathrm{awm}}^{\star}\), calibrated opponent distribution \(\mathcal P_{\mathrm{adv}}^{\star}\), robust planner \(\widehat{\pi}_{\mathrm{plan}}\).
\State Initialize planner and world-model branches with the shared autoregressive factorization in Eq.~\ref{eq:factorization}; set \(\pi_{\mathrm{ref}}\leftarrow\pi_{\mathrm{plan}}^{\mathrm{nom}}\).

\Statex \textbf{Stage A: learn the adversarial world model for Eq.~\ref{eq:stage_a_min}.}
\For{AWM minibatches \(\mathcal B\subset\mathcal D\)}
    \For{each scene \(s\in\mathcal B\)}
        \State Background probe: assign \(\rho_i=\text{\texttt{bg}}\) for all non-ego agents and roll out the role-conditioned generator in Eq.~\ref{eq:role_generation}.
        \State Build a sparse proposal pool \(P_s\) using realism-penalized threat and residual coverage (Appendix~\ref{app:awm_probe}).
        \If{\(P_s=\emptyset\)}
            \State Set \(A_s^{\mathrm{top1}}=A_s^{\mathrm{pair}}=\emptyset\) and \(p_s=0\).
        \Else
            \State Select \(i_s^\star=\arg\max_{i\in P_s} f_{\mathrm{pri}}(x_i)\); condition on \(\rho_{i_s^\star}=\text{\texttt{atk}}\) and choose \(\hat j_s=\arg\max_{j\in P_s\setminus\{i_s^\star\}} f_{\mathrm{pair}}(\phi(i_s^\star,j))\).
            \State Define \(A_s^{\mathrm{top1}}=\{i_s^\star\}\), \(A_s^{\mathrm{pair}}=\{i_s^\star,\hat j_s\}\), and pair-admission probability \(p_s=\sigma(\alpha_s^{\mathrm{cal}}/\tau_{\mathrm{risk}}))\).
            \State Update \(\pi_{\mathrm{awm}}\), \(f_{\mathrm{pri}}\), and \(f_{\mathrm{pair}}\) with role-switched counterfactual credit (Eq.~\ref{eq:hybrid_awm_reward}); update the calibrator \(q_\eta\) using Appendix~\ref{app:awm_training}.
        \EndIf
    \EndFor
\EndFor
\State Freeze \(\pi_{\mathrm{awm}}^{\star}\leftarrow\pi_{\mathrm{awm}}\) and \(q_\eta^{\star}\leftarrow q_\eta\).
\State Define \(\mathcal P_{\mathrm{adv}}^{\star}(B_s=\text{\texttt{top1}}\mid s)=1-p_s\) and \(\mathcal P_{\mathrm{adv}}^{\star}(B_s=\text{\texttt{pair}}\mid s)=p_s\) as in Eq.~\ref{eq:adv_branch_dist}.

\Statex \textbf{Stage B: train the constrained planner best response in Eq.~\ref{eq:stage_b_max}.}
\For{planner minibatches \(\mathcal B\subset\mathcal D\)}
    \For{each scene \(s\in\mathcal B\)}
        \State Roll out \(\pi_\theta\) on the benign branch \(\pi_{\mathrm{pred}}\) and adversarial branches \(B_s\in\{\text{\texttt{top1}},\text{\texttt{pair}}\}\) induced by \(\pi_{\mathrm{awm}}^{\star}\).
        \State Compute the nominal-retention loss \(\mathcal L_{\mathrm{nom}}\) in Eq.~\ref{eq:nominal_trust_region}.
        \State Compute adversarial regrets \(\Delta_s^b(\theta)=U_s^b(\pi_{\mathrm{ref}})-U_s^b(\pi_\theta)\) and the two-point regret distribution in Eqs.~\ref{eq:adv_regret}--\ref{eq:regret_dist}.
        \State Obtain \(w_s^{\mathrm{top1}},w_s^{\mathrm{pair}}\) from the closed-form regret-CVaR allocation in Appendix~\ref{app:cvar_weights}.
        \State Accumulate the adversarial tail-risk loss \(\mathcal L_{\mathrm{rob}}\) in Eq.~\ref{eq:rob_loss} and safety statistics $\mathcal L_{\mathrm{dual}}$.
    \EndFor
    \State Update \(\theta\) with \(\mathcal L_{\mathrm{nom}}+\mathcal L_{\mathrm{rob}}\); update dual variables using thresholds \(\kappa\) (Appendix~\ref{app:dual_update}).
\EndFor
\State Select \(\widehat{\pi}_{\mathrm{plan}}\) by validation robustness under \(\mathcal P_{\mathrm{adv}}^{\star}\) subject to nominal-retention criteria.
\State \Return \(\pi_{\mathrm{awm}}^{\star}\), \(\mathcal P_{\mathrm{adv}}^{\star}\), and \(\widehat{\pi}_{\mathrm{plan}}\).
\end{algorithmic}
\end{minipage}}
\end{algorithm}

\subsection{Discrete Autoregressive Planning Architecture and Role Conditioning}
\label{app:architecture}

This section details the architectural choices required to implement the unified planning-prediction decomposition introduced in Section~\ref{sec:preliminary}, and expands upon the role-conditioned generation mechanism. By casting both planning and environment simulation as discrete next-token prediction tasks \cite{tang2025plan}, we ensure that adversarial interventions modify only the behavioral distribution of the agents, without violating the underlying physical transition dynamics.

\subsubsection{Unified Autoregressive Motion Generation Interface}
\label{app:ar_interface}
A cornerstone of our framework is that both the ego planner and the environment evolution can be formulated as discrete next-token prediction tasks. Let $x_t$ denote the interactivep scene state at step $t$, which includes static map elements, historical agent trajectories, and the generated future prefix up to $t$. For each agent type (e.g., vehicles, pedestrians), we maintain a discrete motion vocabulary $\mathcal{T}$ whose entries correspond to local motion primitives (e.g., discretized displacement and heading increments $(\Delta x, \Delta y, \Delta \psi)$) over a fixed autoregressive interval following previous practices \cite{wu2024smart,seff2023motionlm}.

During the rollout, the standard two-branch architecture factorizes the joint one-step generation into a planning branch $\pi_{\mathrm{plan}}$ for the ego agent and an environment branch $\pi_{\mathrm{pred}}$ for all non-ego agents:
\begin{equation}
    p(\vz_t \mid x_t) = \pi_{\mathrm{plan}}(z_{e,t} \mid x_t) \prod_{i \in \mathcal{V}_s} \pi_{\mathrm{pred}}(z_{i,t} \mid x_t).
\end{equation}
Once the step-wise token set $\vz_t = \{z_{i,t}\}_{i \in \{e\} \cup \mathcal{V}_s}$ is sampled, it is decoded into continuous kinematic updates. A deterministic transition operator $\Gamma$ then updates the global coordinates and advances the scene state: $x_{t+1} = \Gamma(x_t, \vz_t)$.
Crucially, this rollout interface is strictly preserved throughout our framework. The AWM alters \textit{who} generates the non-ego tokens and \textit{how} they behave, but it does not modify the underlying simulator, the token space, or the physical transition dynamics.
Consequently, the normal, top-1, and pair adversarial branches share the identical state representation, transition operator, and physical motion manifold, ensuring that the planner is evaluated fairly across different attack complexities and are strictly comparable.

\subsubsection{Model Instantiation and Role Conditioning Mechanism}
\label{app:role_injection_details}
To instantiate the policies, both the planner and the environment branches share a unified high-level network template but maintain separate parameters. Specifically, for a branch $k \in \{\texttt{pred}, \texttt{plan}\}$, the architecture comprises a map encoder $E_{\mathrm{map}}^k$, an interaction backbone $B^k$, and a token decoder $D^k$:
\begin{equation}
    g^k = E_{\mathrm{map}}^k(x_t), \quad h_{i,t}^k = B^k(x_t, g^k), \quad \ell_{i,t}^k = D^k(h_{i,t}^k),
\end{equation}
where $h_{i,t}^k$ is the hidden state for agent $i$ and $\ell_{i,t}^k$ are the output logits over the vocabulary $\mathcal{T}$. The backbone $B^k$ typically employs stacked transformer blocks over relation graphs (e.g., temporal self-attention, map-to-agent cross-attention, and agent-agent interaction) \cite{tang2025plan,wu2024smart,zhou2024behaviorgpt}.

To construct the AWM from the pretrained predictive model $\pi_{\mathrm{pred}}$ without training a separate adversarial generator from scratch, we introduce a latent role conditioning $\rho_i \in \{\texttt{ego}, \texttt{bg}, \texttt{atk}\}$ for each agent.
The role identity is injected into the network at two levels to alter the agent's interaction behavior and output distribution. First, an agent-side role embedding $E_{\mathrm{agent}}(\rho_i)$ is added to the agent's initial hidden state before the spatial-temporal interaction blocks, allowing the role to influence how the agent attends to the map and other vehicles. Second, a decoder-side embedding $E_{\mathrm{dec}}(\rho_i)$ is added immediately before the final token projection. The role-conditioned generation is given by:

\begin{equation}
    \tilde{h}_{i,t}^{\mathrm{awm}} = h_{i,t}^{\mathrm{awm}} + E_{\mathrm{agent}}(\rho_i), \quad \tilde{\ell}_{i,t}^{\mathrm{awm}} = D^{\mathrm{awm}}\left(\tilde{h}_{i,t}^{\mathrm{awm}} + E_{\mathrm{dec}}(\rho_i)\right).
\end{equation}
The background role embedding $E(\texttt{bg})$ is initialized as a zero vector. Consequently, when no attacker is activated ($\rho_i = \texttt{bg}, \forall i \in \mathcal{V}_s$), the AWM perfectly recovers the naturalistic behavior of the pretrained $\pi_{\mathrm{pred}}$. Furthermore, switching between factual and counterfactual coalitions (as required in Section~\ref{sec:awm}) only requires modifying the role table $\rho_i$ without altering the network parameters or the simulator. This guarantees that the calculated counterfactual utility gaps isolate the true marginal contribution of the coalition rather than architectural discrepancies, which is the cornerstone of the counterfactual credit assignment (Section~\ref{sec:awm}).

\subsection{Algorithmic Details of Adversarial Coalition Learning}
\label{app:awm_details}

This section expands upon the scene-adaptive coalition learning pipeline, detailing the mathematical formulations for the probe signals, the neural selector inputs, and the multi-stage training objectives.

\subsubsection{Probe-Time Threat Signals and Residual Proposal Construction}
\label{app:awm_probe}
Directly evaluating the neural primary selector on all background agents is computationally prohibitive. The objective of the proposal stage is to reduce the combinatorial multi-agent search space into a sparse potential candidate.
Therefore, we execute a lightweight background probe rollout (where $\rho_i = \texttt{bg}, \forall i \in \mathcal{V}_s$) to extract heuristic threat signals and construct a sparse proposal set $P_s$.

For each non-ego agent $i$, we first compute a dense, heuristic ego-centric threat signal $r_{i,\mathrm{dense}}(t)$ at each step $t$, which aggregates proximity ($r_{\mathrm{prox}}$), time-to-collision (TTC) approximations ($r_{\mathrm{cpa}}$), blocking behaviors ($r_{\mathrm{block}}$), and cut-in maneuvers ($r_{\mathrm{cutin}}$):
\begin{equation}
    r_{i,\mathrm{dense}}(t) = w_{\mathrm{prox}} r_{i,\mathrm{prox}}(t) + w_{\mathrm{cpa}} r_{i,\mathrm{cpa}}(t) + w_{\mathrm{block}} r_{i,\mathrm{block}}(t) + w_{\mathrm{cutin}} r_{i,\mathrm{cutin}}(t).
\end{equation}
To capture severe safety violations, we define the raw probe signal by incorporating hard collision indicators:
\begin{equation}
    r_i^{\mathrm{probe}}(t) = \max \left( r_{i,\mathrm{dense}}(t), \lambda_{\mathrm{ttc}} \mathbb{I}_{i,\mathrm{ttc}}(t), \lambda_{\mathrm{col}} \mathbb{I}_{i,\mathrm{col}}(t) \right).
\end{equation}

To discourage the selection of agents that might execute physically invalid or trivial attacks (e.g., driving off-road to hit the ego), we define a comprehensive penalized signal:
\begin{equation}
    \bar{r}_i^{\mathrm{probe}}(t) = r_i^{\mathrm{probe}}(t) - w_{\mathrm{kin}} c_i^{\mathrm{kin}}(t) - w_{\mathrm{road}} c_i^{\mathrm{road}}(t) - w_{\mathrm{obs}} c_i^{\mathrm{obs}}(t),
\end{equation}
where $c_i^{(\cdot)}$ are penalty indicators for kinematic violations, off-road driving, and static obstacle collisions, respectively, weighted by constants $w_{(\cdot)}$.
Using the discounted return operator $G(v)_\tau = \sum_{t=\tau}^{H} \gamma^{t-\tau} v(t)$, we obtain the scalarized pure threat score $q_i = \max_{\tau} G(r_i^{\mathrm{probe}})_\tau$ and the comprehensive score $\bar{q}_i = \max_{\tau} G(\bar{r}_i^{\mathrm{probe}})_\tau$.
The first proposal candidate is greedily selected based on the highest $\bar{q}_i$. To ensure the proposal set $P_s$ is diverse, subsequent candidates are added based on their \textit{residual threat coverage}. Given a partially constructed proposal set $S$, the residual threat of a new agent $i$ is $\tilde{r}_i^S(t) = \max\left(r_i^{\mathrm{probe}}(t) - \max_{j \in S} r_j^{\mathrm{probe}}(t), 0\right)$. The residual score is $\psi_i^S = \max_{\tau} G(\tilde{r}_i^S)_\tau$. We only admit agent $i$ if its relative residual ratio $\eta_i^S = \psi_i^S / (q_i + \epsilon)$ exceeds a predefined threshold. The rationale behind this design is to ensure that newly proposed agents explain complementary spatial-temporal adversarial potential rather than merely duplicating the interaction patterns of already selected candidates, thereby preserving diversity for the pair-ranking stage.

\subsubsection{Feature Representations for Primary Selector and Pair Ranker}
\label{app:awm_features}
While scalar probe scores provide a heuristic prior for efficient filtering, they lack the higher-order interaction context necessary to identify the true optimal attacker. The learnable primary selector $f_{\mathrm{pri}}$ and the conditional pair ranker $f_{\mathrm{pair}}$ bridge this gap by fusing neural hidden states with proposal statistics.

\paragraph{Primary Selector.} For each candidate $i \in P_s$, the primary selector $f_{\mathrm{pri}}$ operates on a fused feature vector $x_i$ that concatenates the high-dimensional hidden state from the background probe with the heuristic proposal statistics:
\begin{equation}
    x_i = \left[ h_i^{\mathrm{probe}}, \; \mathbb{I}(i \in P_s), \; \frac{\mathrm{rank}(i)}{B_{\mathrm{prop}}}, \; q_i, \; \bar{q}_i, \; \psi_i^S, \; m_i \right],
\end{equation}
where $h_i^{\mathrm{probe}}$ is the final-layer hidden state of the AWM extracted from the background probe, $B_{\mathrm{prop}}$ is the proposal budget, and $m_i$ is a binary stage marker. The selector outputs a confidence score $a_i = f_{\mathrm{pri}}(x_i)$, and the primary attacker is determined as $i_s^* = \arg\max_{i \in P_s} a_i$.
After selecting the primary attacker, we retain a small runner-up candidate set to serve as potential partners:
\begin{equation}
R_s =
\operatorname{Top}_{B_{\mathrm{pair}}}
\left(
\left\{ i \in P_s \setminus \{i_s^*\} \right\};
a_i
\right),
\end{equation}
where $B_{\mathrm{pair}}$ is the maximum partner-candidate budget and
$\operatorname{Top}_{B_{\mathrm{pair}}}(\cdot; a_i)$ returns the candidates with the largest primary-selector scores. Thus, $P_s$ is the proposal pool produced by the residual threat-coverage procedure, while $R_s$ is the smaller selector-filtered candidate pool used for conditional pair ranking.

\paragraph{Conditional Pair Ranker and Calibrator.}
To evaluate the synergistic value of a secondary attacker, we execute a second conditional probe where $i_s^*$ is assigned $\rho = \texttt{atk}$. This yields conditional hidden states $h^{\mathrm{cond}}$ that encode the scene's reaction to the primary attack. For any candidate partner $j \in P_s \setminus \{i_s^*\}$, the pair ranker $f_{\mathrm{pair}}$ takes a relational feature $\phi(i_s^*, j)$ that captures the conditional interaction dynamics:
\begin{equation}
    \phi(i_s^*, j) = \left[ h_{i_s^*}^{\mathrm{cond}}, \; h_j^{\mathrm{cond}}, \; \left|h_{i_s^*}^{\mathrm{cond}} - h_j^{\mathrm{cond}}\right|, \; h_{i_s^*}^{\mathrm{cond}} \odot h_j^{\mathrm{cond}}, \; \bar{q}_{i_s^*}, \; \bar{q}_j, \; \psi_j^{\{i_s^*\}}, \; \Delta a_{i_s^*,j} \right],
\end{equation}
where $\odot$ denotes element-wise multiplication, and $\Delta a_{i_s^*,j} = f_{\mathrm{pri}}(x_{i_s^*}) - f_{\mathrm{pri}}(x_j)$ represents the confidence gap from the primary selector. The pair ranker predicts the marginal coalition gain $\hat{g}_s(i_s^*, j) = f_{\mathrm{pair}}(\phi(i_s^*, j))$, and the optimal partner is $\hat{j}_s = \arg\max_j \hat{g}_s(i_s^*, j)$.

Subsequently, the scene-adaptive calibrator $q_\eta$ determines whether admitting $\hat{j}_s$ is globally beneficial (should actually be admitted). The scene-level feature aggregates the selected pair representation, the predicted pair gain, the primary-selector confidence, and two cardinality-based uncertainty indicators:
\begin{equation}
    \psi_s = \left[ \phi(i_s^*, \hat{j}_s), \; \hat{g}_s(i_s^*, \hat{j}_s), \; a_{i_s^*}, \; \sigma(a_{i_s^*}), \; |P_s|, \; |R_s| \right].
\end{equation}
Here, $|P_s|$ denotes the number of plausible attackers admitted by the proposal stage, and $|R_s|$ denotes the number of selector-retained runner-up candidates considered for pair ranking. These two scalars provide the calibrator with scene-level ambiguity information: a larger $|P_s|$ indicates a denser or more ambiguous adversarial proposal pool, while a larger $|R_s|$ indicates more plausible secondary partners after primary selection.
The calibrator is trained via Binary Cross-Entropy (BCE) to predict the probability $p_s$ that the true marginal improvement $G_s^{\mathrm{pair}} = U_s(\{i_s^*\}) - U_s(\{i_s^*, \hat{j}_s\})$ is strictly positive.

\subsubsection{Multi-Stage Training Objective for AWM}
\label{app:awm_training}
To stabilize the optimization of the role-conditioned policy alongside the coalition search modules, we employ a two-stage training paradigm.

\textbf{Stage I (Host and Search Optimization):} We jointly train the role-conditioned AWM policy, the primary selector, and the conditional pair ranker. The objective is:
\begin{equation}
    \mathcal{L}_{\mathrm{stage1}} = \mathcal{L}_{\mathrm{policy}}(\Delta^{\mathrm{hyb}}) + \lambda_{\mathrm{pri}} \mathcal{L}_{\mathrm{pri}} + \lambda_{\mathrm{pair}} \mathcal{L}_{\mathrm{pair}} + \Omega_{\mathrm{stab}},
\end{equation}
Here, $\mathcal{L}_{\mathrm{policy}}$ is the policy gradient loss driven by the hybrid counterfactual reward $\Delta_i^{\mathrm{hyb}}$ (defined in Eq.~\ref{eq:hybrid_awm_reward}). $\mathcal{L}_{\mathrm{pri}}$ and $\mathcal{L}_{\mathrm{pair}}$ supervise the selector and ranker using the observed utility gaps as regression and ranking targets.
Specifically: (1) For the primary selector, the target label for candidate $i$ is its absolute adversarial impact compared to the nominal background traffic: $y_i^{\mathrm{pri}} = U_s(\emptyset) - U_s(\{i\})$, where $\emptyset$ denotes no active attackers.
(2) For the conditional pair ranker, the target label for a partner candidate $j$ is its marginal coalition gain over the primary attacker $i_s^*$: $y_{i_s^*, j}^{\mathrm{pair}} = U_s(\{i_s^*\}) - U_s(\{i_s^*, j\})$.
Finally, the stabilizer $\Omega_{\mathrm{stab}}$ anchors the AWM's background behavior to the pretrained predictive prior $\pi_{\mathrm{pred}}$ via KL divergence, preventing the model from collapsing into unrealistic adversarial behaviors.

\textbf{Stage II (Calibrator Optimization):} We freeze the components from Stage-I and train the scene-level calibrator $q_\eta$ using the BCE loss $\mathcal{L}_{\mathrm{cal}}$. This decoupling is crucial because the calibrator's objective is to act as an adaptive gate that decides \textit{when} a multi-agent attack is statistically justified, rather than discovering new attackers.
Stage-II trains the scene-level calibrator $q_\eta$ to ensure unbiased admission probabilities.

\subsection{Details of Regret-Aware Constrained Planner Optimization}
\label{app:planner_details}

This section details the planner-side robust optimization (Section~\ref{sec:planner}), including the exact formulation of the trust region constraints, the closed-form derivation of the CVaR weights, and the dual update mechanism for safety constraints. It explains how the AWM is converted into a risk distribution and how the constrained optimization is solved.

\subsubsection{Reference-Anchored Nominal Retention}
\label{app:planner_nominal}
During the training of the planner, we do not sample arbitrary attacks from a replay buffer. Instead, for each scene $s$, the frozen AWM executes the coalition pipeline to generate a top-1 attacker gate $g_s^{\mathrm{top1}}$ and a pair gate $g_s^{\mathrm{pair}}$ (if a valid partner exists). This yields three explicit rollout branches, evaluated by their respective utilities:
\begin{equation}
    J_s^{\mathrm{norm}}(\pi) = U_s(\pi, \pi_{\mathrm{pred}}), \quad J_s^{\mathrm{top1}}(\pi) = U_s(\pi, \pi_{\mathrm{awm}}; g_s^{\mathrm{top1}}), \quad J_s^{\mathrm{pair}}(\pi) = U_s(\pi, \pi_{\mathrm{awm}}; g_s^{\mathrm{pair}}).
\end{equation}

To prevent the planner from sacrificing naturalistic driving capabilities in pursuit of worst-case robustness (Section~\ref{sec:planner}), we enforce a reference-anchored nominal retention objective on the normal branch. Let $\pi_{\mathrm{ref}}$ be the frozen baseline planner. We define the normal utility margin as $m_s^{\mathrm{norm}}(\theta) = J_s^{\mathrm{norm}}(\pi_\theta) - J_s^{\mathrm{norm}}(\pi_{\mathrm{ref}})$.
To penalize degradation without forcing the planner to over-optimize already safe scenes, we introduce an asymmetric margin penalty that activates only when the planner's performance drops below an acceptable margin $\epsilon_{\mathrm{norm}} \ge 0$:
\begin{equation}
    \psi_s^{\mathrm{norm}}(\theta) = \left[ -\epsilon_{\mathrm{norm}} - m_s^{\mathrm{norm}}(\theta) \right]_+^2,
\end{equation}
where $[\cdot]_+ = \max(\cdot, 0)$. The total nominal retention loss combines the standard policy gradient surrogate $\ell_s^{\mathrm{norm}}(\theta)$, a trajectory-level KL divergence constraint, and the margin penalty:
\begin{equation}
    \mathcal{L}_{\mathrm{nom}}(\theta) = \mathbb{E}_{s \sim \mathcal{D}} \left[ \lambda_{\mathrm{pg}} \ell_s^{\mathrm{norm}}(\theta) + \lambda_{\mathrm{kl}} D_{\mathrm{KL}}(\pi_\theta \parallel \pi_{\mathrm{ref}} \mid x_{s, 1:H}^{\mathrm{norm}}) + \lambda_{\mathrm{margin}} \psi_s^{\mathrm{norm}}(\theta) \right],
\end{equation}
where $\ell_s^{\mathrm{norm}}(\theta)$ is the standard GRPO \cite{shao2024deepseekmath} surrogate loss maximizing nominal utility.
This asymmetric design penalizes performance degradation but does not force the planner to over-optimize on already safe scenes.

\subsubsection{Analytical Regret-CVaR Weighting}
\label{app:cvar_weights}

The core of robust adaptation is to optimize the risk objective with respect to the AWM-induced regret distribution.
To optimize the adversarial tail-risk, we compute the reference-relative regret $\Delta_s^{b}(\theta) = J_s^{b}(\pi_{\mathrm{ref}}) - J_s^{b}(\pi_\theta)$ for $b \in \{\mathrm{top1}, \mathrm{pair}\}$. The AWM calibrator induces a discrete two-point risk distribution:
\begin{equation}
    \mathcal{D}_s^\theta = \left\{ \left(\Delta_s^{\mathrm{top1}}(\theta), 1-p_s\right), \left(\Delta_s^{\mathrm{pair}}(\theta), p_s\right) \right\},
\end{equation}
where $p_s$ is the calibrated probability of admitting the pair attack.
We apply the Conditional Value-at-Risk (CVaR) operator \cite{rockafellar2000optimization} at tail level $\tau \in (0, 1]$ over this specific distribution.
Because AWM naturally induces a discrete two-point distribution over the attack types $b \in \{\texttt{top1}, \texttt{pair}\}$, the CVaR allocation admits an elegant \textit{closed-form analytical solution}, avoiding the high variance typically associated with sampling-based CVaR estimation.
Let $b_{(1)}$ and $b_{(2)}$ denote the two branches sorted by their regret in descending order (i.e., $\Delta_s^{b_{(1)}} \ge \Delta_s^{b_{(2)}}$), and let $\mu_s^{b_{(1)}}$ be the probability mass of the worst-case branch.
For a specified tail level $\tau$, the CVaR operator focuses strictly on the worst $\tau$-proportion of the distribution. The exact dynamic weight $w_s^{b_{(1)}}$ for the worse branch is given by:
\begin{equation}
    w_s^{b_{(1)}}(\tau) =
    \begin{cases}
        1, & \text{if } \mu_s^{b_{(1)}} \ge \tau, \\
        \frac{\mu_s^{b_{(1)}}}{\tau}, & \text{if } \mu_s^{b_{(1)}} < \tau.
    \end{cases}
\end{equation}
The weight for the better branch is simply the residual tail mass: $w_s^{b_{(2)}}(\tau) = 1 - w_s^{b_{(1)}}(\tau)$.
Intuitively, these weights dynamically route the gradient updates. If the probability of the worst-case attack exceeds $\tau$, the planner focuses entirely on surviving that specific attack ($w=1$) and absorbs as much of the tail mass $\tau$ as its probability allows. If the worst-case attack is extremely rare (probability $< \tau$), the planner distributes its learning capacity proportionally between both branches.

In our implementation, these weights are treated as detached constants during backpropagation. The final robust objective dynamically scales the branch-wise policy gradient losses:
\begin{equation}
    \mathcal{L}_{\mathrm{rob}}(\theta) = \mathbb{E}_{s \sim \mathcal{D}} \left[ \text{sg}\left(w_s^{\mathrm{top1}}(\tau)\right) \ell_s^{\mathrm{top1}}(\theta) + \text{sg}\left(w_s^{\mathrm{pair}}(\tau)\right) \ell_s^{\mathrm{pair}}(\theta) \right],
\end{equation}
where $\text{sg}(\cdot)$ denotes the stop-gradient operator, making the planner optimize its policy rather than attempting to manipulate the risk distribution itself.
This dynamic weighting reconciles the sparsity of the AWM's attacks with the planner's safety demands, ensuring the planner's gradients are guided by the scene-adaptive vulnerabilities discovered by the AWM.
Because pair attacks are scene-dependent, their calibrated probability $p_s$ is often small. Under a standard expected loss, a rare but fatal pair attack would have its gradient severely diluted by $p_s$, causing the planner to under-optimize for it. By applying CVaR, if the low-probability pair branch currently induces the highest regret, the operator aggressively upweights it (e.g., scaling its weight from $p_s$ to $p_s/\tau$ or $1$). This encourages the planner to defend against its most severe weakness rather than just average-case perturbations.

\subsubsection{Safety-Constrained Dual Update Parameterization}
\label{app:dual_update}
To prevent the planner from exploiting the regret objective by violating hard traffic rules (e.g., evading an attacker by driving off-road), we enforce explicit safety constraints via a dual update mechanism (Section~\ref{subsubsec:dual_update}).
Let $v_{s,m}^b \in [0, 1]$ denote the violation indicator for the $m$-th safety cost under branch $b$. The expected branch-mixed violation rate is $\bar{v}_{s,m} = (1-p_s) v_{s,m}^{\texttt{top1}} + p_s v_{s,m}^{\texttt{pair}}$. To ensure optimization stability and guarantee that the utility multipliers remain strictly positive and bounded, we parameterize the progress reward multiplier $\lambda_R$ and the $M$ safety cost multipliers $\lambda_m$ jointly using a softmax projection over unconstrained dual variables $\zeta \in \mathbb{R}^M$ and a fixed progress logit $\alpha_R$:
\begin{equation}
    [\lambda_R, \lambda_1, \ldots, \lambda_M] = \operatorname{softmax}([\alpha_R, \zeta_1, \ldots, \zeta_M]).
\end{equation}
This parameterization ensures $\lambda_m \in (0, 1)$ and $\sum \lambda = 1$. Given the admissible violation thresholds $\kappa \in \mathbb{R}^M$, the dual objective to be maximized with respect to $\zeta$ is:
\begin{equation}
    \mathcal{L}_{\mathrm{dual}}(\zeta) = \sum_{m=1}^{M} \lambda_m(\zeta) \left( \mathbb{E}_{s \sim \mathcal{D}}[\bar{v}_{s,m}] - \kappa_m \right).
\end{equation}
During training, the planner parameters $\theta$ and the dual variables $\zeta$ are updated via alternating gradient descent-ascent. This separation allows the planner to aggressively minimize adversarial regret while treating hard traffic rules as strict boundaries rather than easily exploitable reward shaping.

\subsection{AWM as a Closed-Loop Sim-Agent}
\label{app:awm_sim_agent}

This section describes how the learned AWM is used as a simulator-side traffic world model for the transferable adversarial evaluation in Table~\ref{tab:cross_planner_simagent}. The goal is to expose a planner to adversarial but feasible non-ego reactions within the closed-loop reactive simulation. This replaces the rule-based or replay agents with our learned AWM policies.
We therefore instantiate AWM as a drop-in non-ego policy: the evaluated planner still controls only the ego vehicle, the simulator still advances the world state with its standard transition and scoring logic, and AWM only changes the motion proposals of selected surrounding agents.

\paragraph{State synchronization and policy roles.}
At each simulation step \(t\), the simulator provides the current closed-loop state \(x_t\), including the map context, the ego state produced by the evaluated planner, the generated non-ego histories, and the generated future prefix. We encode \(x_t\) using the same tokenized autoregressive representation used in the AWM training. The normal sim-agent samples each non-ego token from the background predictive policy \(\pi_{\mathrm{pred}}\). The adversarial sim-agent instead runs the AWM coalition pipeline on the current state: the proposal module constructs a sparse candidate set, the primary selector chooses the main adversarial agent, the pair ranker proposes a complementary partner, and the calibrator decides whether the pair branch is admitted. The selected coalition receives the adversarial role \(\rho_i=\texttt{atk}\), while all other agents retain the background role \(\rho_i=\texttt{bg}\).

\paragraph{Closed-loop execution.}
The sim-agent executes a mixed non-ego policy rather than replacing the entire traffic scene with adversarial behavior. For each non-ego agent \(i\), we first generate a normal token and, if \(i\) belongs to the admitted coalition \(A_t\), an adversarial token from the role-conditioned AWM. The token finally submitted to the simulator is
\begin{equation}
z_{i,t}^{\mathrm{sim}} =
\begin{cases}
z_{i,t}^{\mathrm{awm}} \sim \pi_{\mathrm{awm}}(\cdot \mid x_t,\rho_i=\texttt{atk}), & i \in A_t \ \text{and}\ g_{i,t}=1,\\
z_{i,t}^{\mathrm{norm}} \sim \pi_{\mathrm{pred}}(\cdot \mid x_t,\rho_i=\texttt{bg}), & \text{otherwise},
\end{cases}
\end{equation}
where \(g_{i,t}\) is a feasibility gate defined below. The accepted token is decoded into a short-horizon displacement and heading update, and the simulator applies the same state transition used for the normal learned sim-agent. This design keeps the evaluation controlled: the AWM changes the conditional behavior distribution of selected agents, but it does not alter the ego planner, the map, the vehicle model, the collision checker, or the closed-loop score.

\paragraph{Feasibility-gated adversarial replacement.}
Before an adversarial token is accepted, the sim-agent checks whether the decoded motion remains within the shared motion vocabulary's plausible region and the simulator's local validity constraints. The gate rejects proposals that introduce implausible kinematic jumps, immediate map invalidity, static-object conflicts, or degenerate ego-local overlaps that would be better interpreted as simulator artifacts than as adversarial traffic. Formally, the gate can be viewed as
\begin{equation}
g_{i,t}
= \mathbb{I}\!\left[
c_{i,t}^{\mathrm{kin}}\le \kappa_{\mathrm{kin}},\
c_{i,t}^{\mathrm{map}}\le \kappa_{\mathrm{map}},\
c_{i,t}^{\mathrm{static}}\le \kappa_{\mathrm{static}},\
c_{i,t}^{\mathrm{ego}}\le \kappa_{\mathrm{ego}}
\right],
\end{equation}
where the four checks summarize kinematic continuity, map consistency, static-object validity, and ego-local sanity constraints. If the gate fails, the agent falls back to the normal sim-agent token for the same state and step. The fallback makes the AWM host an adversarial simulator under feasibility constraints, rather than a mechanism for injecting arbitrary invalid trajectories.

\paragraph{Temporal use of the coalition.}
The coalition is selected from the current closed-loop state, not from a fixed logged trajectory. This matters because the evaluated planner can change the future interaction geometry, and the AWM must respond to the planner's generated ego behavior. Recomputing the role assignment online lets the adversarial host adapt when the ego yields, accelerates, or changes its path. At the same time, the attack budget \(K_{\max}=2\) and the calibrator prevent the sim-agent from turning every nearby participant into an attacker. Therefore, the closed-loop host remains sparse: it stresses the planner through a small number of behaviorally meaningful agents while leaving the rest of the traffic under the normal predictive policy.

\paragraph{Interpretation of AWM-sim scores.}
The AWM sim-agent score is a closed-loop evaluation score under an adversarial non-ego traffic host, and it is distinct from both open-loop AWM rollout rewards and ordinary planner closed-loop scores reported in the main benchmark (Table~\ref{tab:main_benchmark}). A lower score under the AWM host indicates that the planner is more vulnerable when nearby traffic follows the learned adversarial policy. To verify that score drops are caused by controlled adversarial interaction rather than simulator collapse, we report simulator-side diagnostics in Table~\ref{tab:app_simagent_diagnostics}.


\section{Theoretical Analysis}
\label{app:theory_final}


\subsection{Notation and Autoregressive Local Stability}
\label{app:theory_setup_final}

For a scene $s$, let $\Omega_s^\star$ denote the set of all admissible adversarial executions.
An element $\omega\in\Omega_s^\star$ specifies both an active coalition $A_s\subseteq \mathcal V_s$ satisfying $|A_s|\le K_{\max}$ and the corresponding role-conditioned non-ego rollout law induced by the AWM under the shared tokenized transition operator.
Let $\nu_s\in\Delta(\Omega_s^\star)$ be a scene-conditional adversarial environment distribution.
For a planner $\pi$, define the expected finite-horizon utility
\begin{equation}
    J_s(\pi,\nu_s)
    :=
    \mathbb E_{\omega\sim \nu_s}
    \mathbb E_{\tau\sim(\pi,\omega)}
    \left[
        \sum_{t=1}^{H}\gamma^{t-1}u_t(x_t,z_t)
    \right],
\end{equation}
where the rollout follows the autoregressive dynamics $x_{t+1}=\Gamma(x_t,\vz_t)$ from Section~\ref{subsec:autoregressive_generation}.
Throughout the analysis, we assume the step utility is bounded:
\begin{equation}
    |u_t(x_t,z_t)|\le u_{\max},
    \qquad
    \forall t\in\{1,\ldots,H\}.
\end{equation}
Define the finite-horizon sensitivity constant
\begin{equation}
    \Lambda_{H,\gamma}
    :=
    2u_{\max}\sum_{t=1}^{H}t\gamma^{t-1}
    =
    \begin{cases}
        \displaystyle
        2u_{\max}
        \frac{1-(H+1)\gamma^H+H\gamma^{H+1}}{(1-\gamma)^2},
        & \gamma\in(0,1),\\[8pt]
        u_{\max}H(H+1),
        & \gamma=1.
    \end{cases}
    \label{eq:final_lambda_H}
\end{equation}
Let $\pi_{\mathrm{ref}}$ be the frozen reference planner.
For the theoretical analysis, we use the stronger statewise trust region
\begin{equation}
    \Pi_\epsilon
    :=
    \left\{
        \pi:
        \sup_x
        D_{\mathrm{KL}}
        \big(
            \pi(\cdot\mid x)
            \,\|\,
            \pi_{\mathrm{ref}}(\cdot\mid x)
        \big)
        \le \epsilon
    \right\}.
    \label{eq:final_uniform_trust_region}
\end{equation}
The actual implementation uses a trajectory-level KL penalty, so Eq.~\ref{eq:final_uniform_trust_region} should be read as an idealized sufficient condition. We also implicitly restrict attention to planners that are absolutely continuous with respect to $\pi_{\mathrm{ref}}$ on the discrete token support, so that the KL divergence is finite.
By Pinsker's inequality, any $\pi\in\Pi_\epsilon$ satisfies
\begin{equation}
    \delta(\pi,\pi_{\mathrm{ref}})
    :=
    \sup_x
    D_{\mathrm{TV}}
    \big(
        \pi(\cdot\mid x),
        \pi_{\mathrm{ref}}(\cdot\mid x)
    \big)
    \le
    \sqrt{\epsilon/2}
    =:
    \delta_\epsilon .
    \label{eq:final_delta_epsilon}
\end{equation}

\begin{lemma}[Autoregressive planner sensitivity]
\label{lem:final_ar_sensitivity}
For any fixed scene $s$, any fixed adversarial environment distribution $\nu_s\in\Delta(\Omega_s^\star)$, and any two planners $\pi,\pi'$, the finite-horizon utility satisfies
\begin{equation}
    \big|
        J_s(\pi,\nu_s)-J_s(\pi',\nu_s)
    \big|
    \le
    \Lambda_{H,\gamma}\,
    \delta(\pi,\pi'),
    \label{eq:final_ar_sensitivity}
\end{equation}
where
\begin{equation}
    \delta(\pi,\pi')
    :=
    \sup_x
    D_{\mathrm{TV}}
    \big(
        \pi(\cdot\mid x),
        \pi'(\cdot\mid x)
    \big).
\end{equation}
Consequently, for every $\pi\in\Pi_\epsilon$,
\begin{equation}
    \big|
        J_s(\pi,\nu_s)-J_s(\pi_{\mathrm{ref}},\nu_s)
    \big|
    \le
    \Lambda_{H,\gamma}\delta_\epsilon .
    \label{eq:final_ar_sensitivity_trust}
\end{equation}
\end{lemma}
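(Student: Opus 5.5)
The proof is a standard coupling, or simulation-lemma, argument on the finite-horizon trajectory law. First, fix $\omega\in\Omega_s^\star$. By Fubini and the triangle inequality, it suffices to bound $|J_s(\pi,\delta_\omega)-J_s(\pi',\delta_\omega)|$ uniformly in $\omega$, because averaging over $\nu_s$ preserves a uniform bound.

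For fixed $\omega$, let $P^\pi_t$ be the law of the prefix $(x_1,\vz_1,\ldots,x_t,\vz_t)$ generated by $\pi$ for the ego factor, the role-conditioned AWM law of $\omega$ for the non-ego factors, and the deterministic map $\Gamma$. Define $P^{\pi'}_t$ in the same way. Write
\begin{equation}
    J_s(\pi,\delta_\omega)-J_s(\pi',\delta_\omega)=\sum_{t=1}^H\gamma^{t-1}\Big(\mathbb E_{P^\pi_t}[u_t]-\mathbb E_{P^{\pi'}_t}[u_t]\Big).
\end{equation}
Since $|u_t|\le u_{\max}$, each term is at most $2u_{\max}\,D_{\mathrm{TV}}(P^\pi_t,P^{\pi'}_t)$. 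Here I use the variational form $|\mathbb E_P f-\mathbb E_Q f|\le 2\|f\|_\infty D_{\mathrm{TV}}(P,Q)$, with $D_{\mathrm{TV}}$ the half-$\ell_1$ distance. This form is what matches the factor $2u_{\max}$ in $\Lambda_{H,\gamma}$.

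The key step is the prefix bound $D_{\mathrm{TV}}(P^\pi_t,P^{\pi'}_t)\le t\,\delta(\pi,\pi')$, which I prove by induction using the chain rule / hybrid argument. The two prefix laws share every conditional kernel except the ego token kernel: the non-ego kernels are identical because $\omega$ is fixed, and the transition $\Gamma$ is deterministic. Let the one-step kernels be $K_k=\pi\otimes\omega$ and $K'_k=\pi'\otimes\omega$. Then
\begin{equation}
    D_{\mathrm{TV}}(P^\pi_t,P^{\pi'}_t)\le D_{\mathrm{TV}}(P^\pi_{t-1},P^{\pi'}_{t-1})+\sup_{x}D_{\mathrm{TV}}\big(K_t(\cdot\mid x),K'_t(\cdot\mid x)\big).
\end{equation}
This follows by comparing $P^\pi_{t-1}K_t$, $P^{\pi'}_{t-1}K_t$ and $P^{\pi'}_{t-1}K'_t$ and using that Markov kernels contract TV.

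I still need to bound the one-step product-kernel distance by $\delta(\pi,\pi')$. The ego and non-ego tokens are sampled conditionally independently given $x_t$, and the non-ego factor is common, so
\begin{equation}
    D_{\mathrm{TV}}\big(\pi(\cdot\mid x)\otimes q,\ \pi'(\cdot\mid x)\otimes q\big)=D_{\mathrm{TV}}\big(\pi(\cdot\mid x),\pi'(\cdot\mid x)\big).
\end{equation}
One subtlety needs care. If the AWM roles or coalition are recomputed online from $x_t$, then the non-ego law is still a kernel of $x_t$ alone and the argument is unchanged. I will state this as part of what ``fixed $\omega$'' means.

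I expect the main obstacle to be this bookkeeping: being explicit that the only $\pi$-dependence enters through the ego factor, including that it propagates through state histories. Once that is in place, combining the two bounds gives
\begin{equation}
    |J_s(\pi,\nu_s)-J_s(\pi',\nu_s)|\le 2u_{\max}\sum_{t=1}^H t\gamma^{t-1}\,\delta(\pi,\pi')=\Lambda_{H,\gamma}\,\delta(\pi,\pi').
\end{equation}
The closed forms in Eq.~\ref{eq:final_lambda_H} follow from differentiating the geometric series.

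The trust-region consequence, Eq.~\ref{eq:final_ar_sensitivity_trust}, follows by setting $\pi'=\pi_{\mathrm{ref}}$ and invoking Pinsker's inequality, Eq.~\ref{eq:final_delta_epsilon}, statewise to get $\delta(\pi,\pi_{\mathrm{ref}})\le\delta_\epsilon$ for $\pi\in\Pi_\epsilon$.
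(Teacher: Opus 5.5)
Your proof is correct and follows essentially the paper's argument. The paper couples the two rollouts step by step (identical non-ego samples, maximal coupling of the ego tokens) and union-bounds the divergence event $E_t$ to get $\mathbb P(E_t)\le t\,\delta(\pi,\pi')$; you reach the equivalent bound $D_{\mathrm{TV}}(P^\pi_t,P^{\pi'}_t)\le t\,\delta(\pi,\pi')$ by a hybrid/triangle-inequality argument on prefix laws, and both then charge at most $2u_{\max}$ per step via the bounded utility, with your remarks on the shared non-ego kernel and the product-kernel distance making explicit what the paper's coupling uses implicitly.
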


\emph{Proof.}
Couple the two rollouts of $\pi$ and $\pi'$ against the same sampled environment law $\omega\sim\nu_s$ and the same exogenous randomness. Conditional on identical prefixes up to a given step, the two rollouts have the same state and therefore the same non-ego conditional laws; we couple the non-ego samples identically until the ego token first differs.
As long as the generated token prefixes are identical, the scene states are identical because the transition operator $\Gamma$ is deterministic given the sampled token set.
At any step $t$, conditional on identical prefixes up to $t-1$, a maximal coupling of the ego token distributions gives
\begin{equation}
    \mathbb P(z_{e,t}\neq z'_{e,t})
    \le
    D_{\mathrm{TV}}
    \big(
        \pi(\cdot\mid x_t),
        \pi'(\cdot\mid x_t)
    \big)
    \le
    \delta(\pi,\pi').
\end{equation}
Let $E_t$ be the event that the two coupled rollouts have diverged at or before step $t$.
By the union bound,
\begin{equation}
    \mathbb P(E_t)
    \le
    t\,\delta(\pi,\pi').
\end{equation}
On $E_t^c$, the two-step utilities are identical.
On $E_t$, their difference is at most $2u_{\max}$.
Therefore,
\begin{equation}
\begin{aligned}
    \big|
        J_s(\pi,\nu_s)-J_s(\pi',\nu_s)
    \big|
    &\le
    \sum_{t=1}^{H}
    \gamma^{t-1}
    \cdot
    2u_{\max}
    \mathbb P(E_t) \\
    &\le
    2u_{\max}
    \sum_{t=1}^{H}
    t\gamma^{t-1}
    \delta(\pi,\pi') =
    \Lambda_{H,\gamma}\delta(\pi,\pi').
\end{aligned}
\end{equation}
Eq.~\ref{eq:final_ar_sensitivity_trust} follows from Eq.~\ref{eq:final_delta_epsilon}.
\hfill $\square$

\subsection{Local Stackelberg Justification of the Decoupled Stage-A/B Solver}
\label{app:theory_stackelberg_final}
We now analyze the error introduced by freezing Stage~A before updating the planner.
The ideal adversary class $\Delta(\Omega_s^\star)$ is too large to optimize directly.
Stage~A restricts it to the structured class induced by the AWM pipeline:
$\texttt{proposal}
    \;\rightarrow\;
    \texttt{primary selector}
    \;\rightarrow\;
    \texttt{conditional pair ranker}
    \;\rightarrow\;
    \texttt{scene calibrator}$.

Let $\widehat{\Omega}_s\subseteq\Omega_s^\star$ denote the support covered by this structured pipeline, and define $\mathcal{Q}_s^\star$ as the full set of admissible adversarial environment distributions satisfying the attack budget and realism constraints. Let $\widehat{\mathcal{Q}}_s\subseteq\mathcal{Q}_s^\star$ denote the structured class achieved by our AWM pipeline:

\begin{equation}
    \widehat{\mathcal Q}_s
    :=
    \Delta(\widehat{\Omega}_s)
    \subseteq
    \mathcal Q_s^\star
    :=
    \Delta(\Omega_s^\star).
\end{equation}
For a fixed planner $\pi$, define the full and structured worst-case utilities:
\begin{equation}
    g_s(\pi)
    :=
    \inf_{\nu_s\in\mathcal Q_s^\star}
    J_s(\pi,\nu_s),
    \qquad
    \widehat g_s(\pi)
    :=
    \inf_{\nu_s\in\widehat{\mathcal Q}_s}
    J_s(\pi,\nu_s).
\end{equation}
Because $\widehat{\mathcal Q}_s\subseteq\mathcal Q_s^\star$, we have
\begin{equation}
    \widehat g_s(\pi)\ge g_s(\pi).
\end{equation}
The structured class may not cover the full adversarial class. We define the scene-level coverage gap as
\begin{equation}
    \delta_{\mathrm{cov},s}(\pi)
    :=
    \widehat g_s(\pi)-g_s(\pi)
    \ge 0.
    \label{eq:final_coverage_gap_scene}
\end{equation}

\begin{proposition}[Local error of freezing the Stage-A adversary]
\label{prop:final_frozen_adv_gap}
Assume Stage~A returns a frozen structured adversary $\widehat\nu_s\in\widehat{\mathcal Q}_s$ satisfying the reference-planner suboptimality condition
\begin{equation}
    J_s(\pi_{\mathrm{ref}},\widehat\nu_s)
    \le
    \widehat g_s(\pi_{\mathrm{ref}})
    +
    \varepsilon_{A,s}.
    \label{eq:final_stageA_subopt}
\end{equation}
Then, for every planner $\pi\in\Pi_\epsilon$,
\begin{equation}
    0
    \le
    J_s(\pi,\widehat\nu_s)-g_s(\pi)
    \le
    \varepsilon_{A,s}
    +
    \delta_{\mathrm{cov},s}(\pi_{\mathrm{ref}})
    +
    2\Lambda_{H,\gamma}\delta_\epsilon .
    \label{eq:final_frozen_adv_gap_scene}
\end{equation}
\end{proposition}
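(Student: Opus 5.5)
The lower bound is immediate. Since $\widehat\nu_s\in\widehat{\mathcal Q}_s\subseteq\mathcal Q_s^\star$, the definition of $g_s$ as an infimum over $\mathcal Q_s^\star$ gives $J_s(\pi,\widehat\nu_s)\ge g_s(\pi)$ for every planner $\pi$.

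For the upper bound we telescope through the reference planner and write
\begin{equation*}
J_s(\pi,\widehat\nu_s)-g_s(\pi)
=\big[J_s(\pi,\widehat\nu_s)-J_s(\pi_{\mathrm{ref}},\widehat\nu_s)\big]
+\big[J_s(\pi_{\mathrm{ref}},\widehat\nu_s)-g_s(\pi_{\mathrm{ref}})\big]
+\big[g_s(\pi_{\mathrm{ref}})-g_s(\pi)\big].
\end{equation*}
The three brackets are bounded separately.

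\emph{First bracket.} Since $\widehat\nu_s$ is a fixed environment distribution and $\pi\in\Pi_\epsilon$, Eq.~\ref{eq:final_ar_sensitivity_trust} of Lemma~\ref{lem:final_ar_sensitivity} bounds it by $\Lambda_{H,\gamma}\delta_\epsilon$.

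\emph{Second bracket.} By the Stage-A condition Eq.~\ref{eq:final_stageA_subopt}, we have $J_s(\pi_{\mathrm{ref}},\widehat\nu_s)\le \widehat g_s(\pi_{\mathrm{ref}})+\varepsilon_{A,s}$. By the definition of the coverage gap, $\widehat g_s(\pi_{\mathrm{ref}})=g_s(\pi_{\mathrm{ref}})+\delta_{\mathrm{cov},s}(\pi_{\mathrm{ref}})$. Together these bound the bracket by $\varepsilon_{A,s}+\delta_{\mathrm{cov},s}(\pi_{\mathrm{ref}})$.

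\emph{Third bracket.} This is the step that needs care, because it compares two infima rather than two fixed quantities. Lemma~\ref{lem:final_ar_sensitivity} holds uniformly over every $\nu_s\in\Delta(\Omega_s^\star)$ with the same constant. Hence for each $\nu_s\in\mathcal Q_s^\star$,
\begin{equation*}
J_s(\pi_{\mathrm{ref}},\nu_s)\le J_s(\pi,\nu_s)+\Lambda_{H,\gamma}\delta_\epsilon .
\end{equation*}
Taking the infimum over $\nu_s$ on both sides gives $g_s(\pi_{\mathrm{ref}})\le g_s(\pi)+\Lambda_{H,\gamma}\delta_\epsilon$. In other words, an infimum of uniformly Lipschitz functions is Lipschitz with the same constant. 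It is convenient to argue with an $\eta$-minimizer $\nu_s^\eta$ of $J_s(\pi,\cdot)$ and then let $\eta\to 0$; this avoids assuming the infimum is attained.

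Summing the three bounds yields $\varepsilon_{A,s}+\delta_{\mathrm{cov},s}(\pi_{\mathrm{ref}})+2\Lambda_{H,\gamma}\delta_\epsilon$, which is the claimed upper bound. The main obstacle is the third bracket: the uniformity of Lemma~\ref{lem:final_ar_sensitivity} over environment distributions is exactly what permits passing to the infimum. If uniformity were unavailable, that step would fail.
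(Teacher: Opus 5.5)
Your proof is correct and follows the paper's argument essentially step for step. It uses the same lower bound from $\widehat{\mathcal Q}_s\subseteq\mathcal Q_s^\star$ and the same three-term telescoping through $\pi_{\mathrm{ref}}$, bounding term (I) by Lemma~\ref{lem:final_ar_sensitivity}, term (II) by the Stage-A condition plus the coverage-gap definition, and term (III) by the fact that an infimum of uniformly $\Lambda_{H,\gamma}$-Lipschitz functions is Lipschitz with the same constant; your explicit passage to the infimum in (III) only spells out what the paper states in one line, and the $\eta$-minimizer device is unnecessary since taking infima of both sides directly suffices.
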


\emph{Proof.}
The lower bound follows immediately from the definition of $g_s(\pi)$:
\begin{equation}
    g_s(\pi)
    =
    \inf_{\nu_s\in\mathcal Q_s^\star}
    J_s(\pi,\nu_s)
    \le
    J_s(\pi,\widehat\nu_s).
\end{equation}
For the upper bound, add and subtract the reference-planner quantities:
\begin{align}
    J_s(\pi,\widehat\nu_s)-g_s(\pi)
    &=
    \underbrace{
    J_s(\pi,\widehat\nu_s)
    -
    J_s(\pi_{\mathrm{ref}},\widehat\nu_s)
    }_{(I)}
    +
    \underbrace{
    J_s(\pi_{\mathrm{ref}},\widehat\nu_s)
    -
    g_s(\pi_{\mathrm{ref}})
    }_{(II)}
    \nonumber
    +
    \underbrace{
    g_s(\pi_{\mathrm{ref}})
    -
    g_s(\pi)
    }_{(III)} .
    \label{eq:final_stackelberg_decomp}
\end{align}
By Lemma~\ref{lem:final_ar_sensitivity},
\begin{equation}
    (I)
    \le
    \Lambda_{H,\gamma}\delta_\epsilon .
\end{equation}
For term $(II)$, Eq.~\ref{eq:final_stageA_subopt} and Eq.~\ref{eq:final_coverage_gap_scene} give
\begin{equation}
\begin{aligned}
    (II)
    &=
    J_s(\pi_{\mathrm{ref}},\widehat\nu_s)
    -
    g_s(\pi_{\mathrm{ref}})\\
    &\le
    \varepsilon_{A,s}
    +
    \widehat g_s(\pi_{\mathrm{ref}})
    -
    g_s(\pi_{\mathrm{ref}})\\
    &=
    \varepsilon_{A,s}
    +
    \delta_{\mathrm{cov},s}(\pi_{\mathrm{ref}}).
\end{aligned}
\end{equation}
For term $(III)$, observe that $g_s(\pi)$ is the pointwise infimum of functions $J_s(\pi,\nu_s)$ that are all $\Lambda_{H,\gamma}$-Lipschitz in $\pi$ by Lemma~\ref{lem:final_ar_sensitivity}.
The pointwise infimum of uniformly Lipschitz functions is also Lipschitz with the same constant.
Thus
\begin{equation}
    (III)
    \le
    |g_s(\pi_{\mathrm{ref}})-g_s(\pi)|
    \le
    \Lambda_{H,\gamma}\delta_\epsilon .
\end{equation}
Combining the bounds on $(I)$, $(II)$, and $(III)$ proves Eq.~\ref{eq:final_frozen_adv_gap_scene}.
\hfill $\square$

\begin{corollary}[Approximate local robust value guarantee]
\label{cor:final_local_value_guarantee}
Define the ideal local robust value
\begin{equation}
    V_\epsilon^\star
    :=
    \sup_{\pi\in\Pi_\epsilon}
    \mathbb E_{s\sim\mathcal D}
    \big[
        g_s(\pi)
    \big].
\end{equation}
Let Stage~B return a planner $\widehat\pi\in\Pi_\epsilon$ that is an $\varepsilon_B$-approximate best response to the frozen Stage-A adversary:
\begin{equation}
    \mathbb E_s[J_s(\widehat\pi,\widehat\nu_s)]
    \ge
    \sup_{\pi\in\Pi_\epsilon}
    \mathbb E_s[J_s(\pi,\widehat\nu_s)]
    -
    \varepsilon_B .
    \label{eq:final_stageB_subopt}
\end{equation}
Let
\begin{equation}
    \bar\varepsilon_A
    :=
    \mathbb E_s[\varepsilon_{A,s}],
    \qquad
    \bar\delta_{\mathrm{cov}}
    :=
    \mathbb E_s[
        \delta_{\mathrm{cov},s}(\pi_{\mathrm{ref}})
    ].
\end{equation}
Then
\begin{equation}
    \mathbb E_s[g_s(\widehat\pi)]
    \ge
    V_\epsilon^\star
    -
    \varepsilon_B
    -
    \bar\varepsilon_A
    -
    \bar\delta_{\mathrm{cov}}
    -
    2\Lambda_{H,\gamma}\delta_\epsilon .
    \label{eq:final_local_value_guarantee}
\end{equation}
\end{corollary}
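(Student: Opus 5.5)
The plan is to chain three inequalities: the pointwise upper bound of Proposition~\ref{prop:final_frozen_adv_gap}, the Stage~B near-optimality condition Eq.~\ref{eq:final_stageB_subopt}, and the trivial fact that the frozen adversary is feasible for the inner infimum. The key observation is that Proposition~\ref{prop:final_frozen_adv_gap} gives, for every $\pi\in\Pi_\epsilon$ and every scene $s$,
\begin{equation*}
    g_s(\pi)\ge J_s(\pi,\widehat\nu_s)-\varepsilon_{A,s}-\delta_{\mathrm{cov},s}(\pi_{\mathrm{ref}})-2\Lambda_{H,\gamma}\delta_\epsilon .
\end{equation*}
Crucially, the slack on the right does not depend on $\pi$: the coverage gap is evaluated at $\pi_{\mathrm{ref}}$, and the Lipschitz term is uniform over $\Pi_\epsilon$. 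It also gives the reverse inequality $g_s(\pi)\le J_s(\pi,\widehat\nu_s)$.

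First, I apply the displayed lower bound at $\pi=\widehat\pi\in\Pi_\epsilon$ and take $\mathbb E_{s\sim\mathcal D}$. Using the definitions of $\bar\varepsilon_A$ and $\bar\delta_{\mathrm{cov}}$, this gives
\begin{equation*}
    \mathbb E_s[g_s(\widehat\pi)]\ge\mathbb E_s[J_s(\widehat\pi,\widehat\nu_s)]-\bar\varepsilon_A-\bar\delta_{\mathrm{cov}}-2\Lambda_{H,\gamma}\delta_\epsilon .
\end{equation*}
Second, Eq.~\ref{eq:final_stageB_subopt} lets me replace $\mathbb E_s[J_s(\widehat\pi,\widehat\nu_s)]$ by $\sup_{\pi\in\Pi_\epsilon}\mathbb E_s[J_s(\pi,\widehat\nu_s)]-\varepsilon_B$. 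Third, since $\widehat\nu_s\in\widehat{\mathcal Q}_s\subseteq\mathcal Q_s^\star$, we have $J_s(\pi,\widehat\nu_s)\ge g_s(\pi)$ for every $\pi$. Taking expectations and then the supremum over $\Pi_\epsilon$ yields
\begin{equation*}
    \sup_{\pi\in\Pi_\epsilon}\mathbb E_s[J_s(\pi,\widehat\nu_s)]\ge V_\epsilon^\star .
\end{equation*}
Chaining the three steps gives Eq.~\ref{eq:final_local_value_guarantee}.

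The only delicate point is measure-theoretic: the expectations over $s$ of $g_s$, $\varepsilon_{A,s}$ and $\delta_{\mathrm{cov},s}$ must be well defined. Boundedness is not an issue, since $|J_s|\le u_{\max}\sum_t\gamma^{t-1}$ bounds all utilities and the infima. I will therefore assume measurability in $s$, or read the expectations as outer expectations, which the paper leaves implicit.

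It is also worth noting that the third step makes no use of the Stage~A error. Only the lower-bound half of Proposition~\ref{prop:final_frozen_adv_gap} enters, applied at $\widehat\pi$.
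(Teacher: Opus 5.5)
Your proposal is correct and matches the paper's proof step for step: take expectations in Proposition~\ref{prop:final_frozen_adv_gap} at $\widehat\pi$, invoke Eq.~\ref{eq:final_stageB_subopt}, and then use $J_s(\pi,\widehat\nu_s)\ge g_s(\pi)$ to bound $\sup_{\pi\in\Pi_\epsilon}\mathbb E_s[J_s(\pi,\widehat\nu_s)]$ below by $V_\epsilon^\star$. One small wording slip is in your closing remark: what you apply at $\widehat\pi$ is the proposition's upper bound on $J_s-g_s$ (equivalently a lower bound on $g_s$), while the trivial half $J_s\ge g_s$ is what drives the third step, for every $\pi$.
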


\emph{Proof.}
Taking expectation in Proposition~\ref{prop:final_frozen_adv_gap} yields, for every $\pi\in\Pi_\epsilon$,
\begin{equation}
    \mathbb E_s[g_s(\pi)]
    \ge
    \mathbb E_s[J_s(\pi,\widehat\nu_s)]
    -
    \bar\varepsilon_A
    -
    \bar\delta_{\mathrm{cov}}
    -
    2\Lambda_{H,\gamma}\delta_\epsilon .
    \label{eq:final_expected_frozen_bound}
\end{equation}
Applying Eq.~\ref{eq:final_expected_frozen_bound} at $\pi=\widehat\pi$ and using Eq.~\ref{eq:final_stageB_subopt},
\begin{equation}
\begin{aligned}
    \mathbb E_s[g_s(\widehat\pi)]
    &\ge
    \mathbb E_s[J_s(\widehat\pi,\widehat\nu_s)]
    -
    \bar\varepsilon_A
    -
    \bar\delta_{\mathrm{cov}}
    -
    2\Lambda_{H,\gamma}\delta_\epsilon\\
    &\ge
    \sup_{\pi\in\Pi_\epsilon}
    \mathbb E_s[J_s(\pi,\widehat\nu_s)]
    -
    \varepsilon_B
    -
    \bar\varepsilon_A
    -
    \bar\delta_{\mathrm{cov}}
    -
    2\Lambda_{H,\gamma}\delta_\epsilon\\
    &\ge
    \sup_{\pi\in\Pi_\epsilon}
    \mathbb E_s[g_s(\pi)]
    -
    \varepsilon_B
    -
    \bar\varepsilon_A
    -
    \bar\delta_{\mathrm{cov}}
    -
    2\Lambda_{H,\gamma}\delta_\epsilon\\
    &=
    V_\epsilon^\star
    -
    \varepsilon_B
    -
    \bar\varepsilon_A
    -
    \bar\delta_{\mathrm{cov}}
    -
    2\Lambda_{H,\gamma}\delta_\epsilon .
\end{aligned}
\end{equation}
\hfill $\square$

\begin{corollary}[Support-mass interpretation of the coverage gap]
\label{cor:final_support_mass_gap}
Let
\begin{equation}
    U_H
    :=
    u_{\max}\sum_{t=1}^{H}\gamma^{t-1}
\end{equation}
so that $|J_s(\pi,\omega)|\le U_H$ for every rollout execution $\omega$.
Assume that a full adversarial best response at the reference planner exists, and let $\nu_{s,\mathrm{full}}^\star\in\arg\min_{\nu_s\in\mathcal Q_s^\star}J_s(\pi_{\mathrm{ref}},\nu_s)$. If the infimum is not attained, the same statement holds with an additional arbitrarily small optimality slack by taking an approximate minimizer. Define the omitted support mass
\begin{equation}
    \eta_s
    :=
    \nu_{s,\mathrm{full}}^\star
    \big(
        \Omega_s^\star\setminus\widehat{\Omega}_s
    \big).
\end{equation}
If $\eta_s<1$, then
\begin{equation}
    \delta_{\mathrm{cov},s}(\pi_{\mathrm{ref}})
    \le
    2U_H\eta_s .
    \label{eq:final_support_gap_scene}
\end{equation}
Consequently,
\begin{equation}
    \bar\delta_{\mathrm{cov}}
    \le
    2U_H
    \mathbb E_{s\sim\mathcal D}[\eta_s].
    \label{eq:final_support_gap_expected}
\end{equation}
\end{corollary}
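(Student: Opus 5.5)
The plan is to bound the coverage gap by building an explicit structured competitor from the full best response, namely its conditioning on the covered support. Write $\nu^\star := \nu_{s,\mathrm{full}}^\star$ and $J_s(\pi,\omega)$ for the conditional expected utility given execution $\omega$, so that $J_s(\pi,\nu_s)=\int J_s(\pi,\omega)\,d\nu_s(\omega)$ by linearity in the mixing distribution. Since $\eta_s<1$, the set $\widehat\Omega_s$ has positive $\nu^\star$-mass, and the conditional law
\[
\widehat\nu(\cdot) := \frac{\nu^\star(\cdot\cap\widehat\Omega_s)}{1-\eta_s}
\]
is well defined and lies in $\Delta(\widehat\Omega_s)=\widehat{\mathcal Q}_s$. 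Consequently $\widehat g_s(\pi_{\mathrm{ref}})\le J_s(\pi_{\mathrm{ref}},\widehat\nu)$. Together with $g_s(\pi_{\mathrm{ref}})=J_s(\pi_{\mathrm{ref}},\nu^\star)$, this gives
\[
\delta_{\mathrm{cov},s}(\pi_{\mathrm{ref}})\le J_s(\pi_{\mathrm{ref}},\widehat\nu)-J_s(\pi_{\mathrm{ref}},\nu^\star).
\]

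Next I decompose $\nu^\star=(1-\eta_s)\widehat\nu+\eta_s\bar\nu$, where $\bar\nu$ is $\nu^\star$ conditioned on the omitted set $\Omega_s^\star\setminus\widehat\Omega_s$. If $\eta_s=0$, drop this term; in that case the gap is $0$ and the bound is trivial. Linearity then yields
\[
J_s(\pi_{\mathrm{ref}},\widehat\nu)-J_s(\pi_{\mathrm{ref}},\nu^\star)=\eta_s\big(J_s(\pi_{\mathrm{ref}},\widehat\nu)-J_s(\pi_{\mathrm{ref}},\bar\nu)\big).
\]
The step utilities satisfy $|u_t|\le u_{\max}$, so every rollout value, and hence every mixture value, lies in $[-U_H,U_H]$. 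The bracket is therefore at most $2U_H$, which proves Eq.~\ref{eq:final_support_gap_scene}. Taking $\mathbb E_{s\sim\mathcal D}$ of both sides gives Eq.~\ref{eq:final_support_gap_expected}. Here I assume the needed measurability of $s\mapsto\eta_s$ and $s\mapsto\delta_{\mathrm{cov},s}$.

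If the infimum defining $g_s(\pi_{\mathrm{ref}})$ is not attained, I replace $\nu^\star$ by a $\varsigma$-minimizer, run the identical argument, and obtain the bound $2U_H\eta_s+\varsigma$. The omitted mass $\eta_s$ is then defined for that approximate minimizer, as the statement indicates.

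The main obstacle is a modeling point rather than an inequality: the argument requires $J_s(\pi,\cdot)$ to be affine in the environment distribution, so that the conditional restriction is a valid structured competitor. This holds because $J_s$ is defined as $\mathbb E_{\omega\sim\nu_s}$ of a per-execution value, and $\widehat{\mathcal Q}_s$ is the full simplex over $\widehat\Omega_s$, hence closed under conditioning. I would state both facts explicitly before the decomposition.
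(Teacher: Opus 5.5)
Your proposal is correct and follows essentially the same route as the paper: both use the renormalization of $\nu_{s,\mathrm{full}}^\star$ onto $\widehat\Omega_s$ as the structured competitor, and both then bound $J_s(\pi_{\mathrm{ref}},\widetilde\nu_s)-J_s(\pi_{\mathrm{ref}},\nu_{s,\mathrm{full}}^\star)$ by $2U_H\eta_s$. The only difference is cosmetic. The paper phrases the last step as $D_{\mathrm{TV}}(\widetilde\nu_s,\nu_{s,\mathrm{full}}^\star)=\eta_s$ combined with boundedness, whereas your mixture decomposition $\nu^\star=(1-\eta_s)\widehat\nu+\eta_s\bar\nu$ is exactly the computation behind that TV bound, and it makes explicit that $J_s$ is affine in $\nu_s$.
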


\emph{Proof.}
For each scene $s$, let $\widetilde\nu_s$ be the renormalization of $\nu_{s,\mathrm{full}}^\star$ onto $\widehat{\Omega}_s$.
Then $\widetilde\nu_s\in\widehat{\mathcal Q}_s$ and
\begin{equation}
    D_{\mathrm{TV}}
    (
        \widetilde\nu_s,
        \nu_{s,\mathrm{full}}^\star
    )
    =
    \eta_s .
\end{equation}
By the boundedness of the utility,
\begin{equation}
    \left|
        J_s(\pi_{\mathrm{ref}},\widetilde\nu_s)
        -
        J_s(\pi_{\mathrm{ref}},\nu_{s,\mathrm{full}}^\star)
    \right|
    \le
    2U_H\eta_s .
\end{equation}
Because $\widehat g_s(\pi_{\mathrm{ref}})\le J_s(\pi_{\mathrm{ref}},\widetilde\nu_s)$ and $g_s(\pi_{\mathrm{ref}})=J_s(\pi_{\mathrm{ref}},\nu_{s,\mathrm{full}}^\star)$,
\begin{equation}
\begin{aligned}
    \delta_{\mathrm{cov},s}(\pi_{\mathrm{ref}})
    &=
    \widehat g_s(\pi_{\mathrm{ref}})
    -
    g_s(\pi_{\mathrm{ref}})\\
    &\le
    J_s(\pi_{\mathrm{ref}},\widetilde\nu_s)
    -
    J_s(\pi_{\mathrm{ref}},\nu_{s,\mathrm{full}}^\star)
    \le
    2U_H\eta_s .
\end{aligned}
\end{equation}
Averaging over $s$ proves Eq.~\ref{eq:final_support_gap_expected}.
\hfill $\square$

\paragraph{Interpretation.}
Proposition~\ref{prop:final_frozen_adv_gap} and Corollaries~\ref{cor:final_local_value_guarantee}--\ref{cor:final_support_mass_gap} formalize the role of the decoupled Stage-A/B solver.
The frozen AWM objective is optimistic relative to the full worst-case game because it evaluates the planner against one learned structured adversary rather than the entire adversary class.
However, freezing the AWM is a local Stackelberg approximation whose error is controlled by four interpretable quantities: the Stage-A optimization error $\bar\varepsilon_A$, the Stage-B best-response error $\varepsilon_B$, the structured-class coverage gap $\bar\delta_{\mathrm{cov}}$, and the planner trust-region radius $\epsilon$ through the $\mathcal O(\sqrt{\epsilon})$ term $2\Lambda_{H,\gamma}\delta_\epsilon$.
The support-mass bound further explains why the proposal--top1--pair--calibrator design matters: it should cover the low-utility adversarial executions that carry mass under the true worst-case adversary, while avoiding an intractable dense subset search.

\subsection{Scene-Adaptive Pair Admission as a Bayes Gate}
\label{app:theory_calibrator_final}

The selector and pair ranker decide \emph{who} should attack.
The calibrator decides \emph{whether} the second attacker should be admitted.
This distinction can be formalized as a Bayes decision problem.

For a scene $s$, let the candidate pair gain be
\begin{equation}
    G_s^{\mathrm{pair}}
    :=
    U_s(\{i_s^\star\})
    -
    U_s(\{i_s^\star,\widehat j_s\}).
\end{equation}
Positive $G_s^{\mathrm{pair}}$ means that the pair attack reduces planner utility more than the top-1 attack.
Let $\psi_s$ be the scene-level calibrator feature.

\begin{proposition}[Optimal pair-admission rule]
\label{prop:final_pair_admission_rule}
Among all binary admission policies $a(\psi_s)\in\{0,1\}$, the policy maximizing expected admitted pair gain
\begin{equation}
    \mathbb E
    \big[
        a(\psi_s)G_s^{\mathrm{pair}}
    \big]
\end{equation}
is
\begin{equation}
    a^\star(\psi_s)
    =
    \mathbb I
    \left\{
        \mathbb E[
            G_s^{\mathrm{pair}}
            \mid
            \psi_s
        ]
        >0
    \right\}.
    \label{eq:final_bayes_pair_gate}
\end{equation}
If the calibrator is calibrated for the binary label
\begin{equation}
    Y_s
    :=
    \mathbb I\{
        G_s^{\mathrm{pair}}>0
    \},
    \qquad
    p_s
    =
    \mathbb P(Y_s=1\mid\psi_s),
\end{equation}
and if
\begin{equation}
    m_+(\psi_s)
    :=
    \mathbb E[
        G_s^{\mathrm{pair}}
        \mid
        \psi_s,Y_s=1
    ],
    \qquad
    m_-(\psi_s)
    :=
    \mathbb E[
        -G_s^{\mathrm{pair}}
        \mid
        \psi_s,Y_s=0
    ],
\end{equation}
then, on feature values where the conditional quantities are well defined and $m_+(\psi_s)+m_-(\psi_s)>0$, Eq.~\ref{eq:final_bayes_pair_gate} is equivalent to
\begin{equation}
    p_s
    >
    \frac{
        m_-(\psi_s)
    }{
        m_+(\psi_s)+m_-(\psi_s)
    }.
    \label{eq:final_pair_threshold_general}
\end{equation}
When the conditional magnitudes are approximately stable over the deployment distribution, this reduces to a scalar threshold rule $p_s>\tau_{\mathrm{cal}}^\star$, which is the tractable rule used by our scene-adaptive calibrator.
\end{proposition}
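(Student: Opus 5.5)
The plan is to treat this as a pointwise Bayes decision problem and use the tower property of conditional expectation. Since $a(\psi_s)$ is $\sigma(\psi_s)$-measurable, I will first write
\begin{equation*}
    \mathbb E\big[a(\psi_s)G_s^{\mathrm{pair}}\big]
    =
    \mathbb E\big[a(\psi_s)\,\mathbb E[G_s^{\mathrm{pair}}\mid\psi_s]\big].
\end{equation*}
This requires $G_s^{\mathrm{pair}}$ to be integrable. Integrability follows from the bounded-utility assumption of Appendix~\ref{app:theory_setup_final}: each $U_s(\cdot)$ is bounded by $U_H$, so $|G_s^{\mathrm{pair}}|\le 2U_H$.

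Next comes a pointwise maximization. For each feature value, $a\in\{0,1\}$ multiplies the scalar $\mathbb E[G_s^{\mathrm{pair}}\mid\psi_s]$, so choosing $a=1$ exactly when this scalar is positive maximizes the integrand pointwise. Hence $a^\star$ maximizes the expectation. Ties at zero can be broken arbitrarily, which is why the rule uses strict inequality. Because $a^\star$ is a measurable function of $\psi_s$, it is an admissible policy, so no measurable-selection subtlety arises.

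For the threshold form, I will split the conditional expectation on the binary label $Y_s$ using the law of total expectation:
\begin{equation*}
    \mathbb E[G_s^{\mathrm{pair}}\mid\psi_s]
    =
    p_s\,m_+(\psi_s)-(1-p_s)\,m_-(\psi_s),
\end{equation*}
where $p_s=\mathbb P(Y_s=1\mid\psi_s)$ by the calibration assumption. The condition that this quantity is positive is linear in $p_s$. It rearranges to $p_s\,(m_++m_-)>m_-$. Dividing by $m_++m_->0$ gives Eq.~\ref{eq:final_pair_threshold_general}.

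The one delicate point is the boundary cases $p_s\in\{0,1\}$, where one of $m_\pm$ is undefined. I will handle these by the convention that the corresponding term carries zero weight, so the decomposition still holds. This is the sense of ``well defined'' in the statement. The final remark, that approximately constant $m_\pm$ yield a scalar threshold $\tau^\star_{\mathrm{cal}}=m_-/(m_++m_-)$, is interpretive and needs only this substitution, not a separate proof.
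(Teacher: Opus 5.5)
Your proposal is correct and follows the paper's proof essentially step for step: you use the tower property to reduce the problem to pointwise maximization of $a(\psi_s)\,\mathbb E[G_s^{\mathrm{pair}}\mid\psi_s]$, then split on $Y_s$ to get $p_s m_+(\psi_s)-(1-p_s)m_-(\psi_s)$ and rearrange to the threshold. Your extra remarks on integrability (from the bounded-utility assumption) and on ties and boundary cases are small rigor refinements that the paper leaves implicit.
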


\emph{Proof.}
For any measurable admission policy $a(\psi_s)$,
\begin{align}
    \mathbb E[
        a(\psi_s)G_s^{\mathrm{pair}}
    ]
    &=
    \mathbb E
    \left[
        a(\psi_s)
        \mathbb E[
            G_s^{\mathrm{pair}}
            \mid
            \psi_s
        ]
    \right].
\end{align}
Since $a(\psi_s)$ is binary, the pointwise maximizer admits the pair exactly when the conditional expected gain is positive, proving Eq.~\ref{eq:final_bayes_pair_gate}.
Next,
\begin{equation}
\begin{aligned}
    \mathbb E[
        G_s^{\mathrm{pair}}
        \mid
        \psi_s
    ]
    &=
    p_s
    \mathbb E[
        G_s^{\mathrm{pair}}
        \mid
        \psi_s,Y_s=1
    ]
    +
    (1-p_s)
    \mathbb E[
        G_s^{\mathrm{pair}}
        \mid
        \psi_s,Y_s=0
    ]\\
    &=
    p_s m_+(\psi_s)
    -
    (1-p_s)m_-(\psi_s).
\end{aligned}
\end{equation}
This quantity is positive if and only if Eq.~\ref{eq:final_pair_threshold_general} holds.
\hfill $\square$

\paragraph{Interpretation.}
The calibrator is not merely a heuristic confidence score.
It is a learned approximation to the Bayes gate that decides whether the additional attack dimension has positive expected value in the current scene.
This explains why adaptive $K>1$ is preferable to fixed $K=2$: pair attacks should be admitted only when their conditional expected marginal gain is positive.

\subsection{Role-Switched Counterfactual Credit as a Difference-Reward Estimator}
\label{app:theory_credit_final}

We next justify the role-switched counterfactual rewards used to train the AWM.
For this subsection, fix a scene $s$ and a factual coalition $A=A_s^{\mathrm{fact}}$.
Let $U_s(A)$ denote the planner's utility under the factual coalition, and define the adversarial team reward
\begin{equation}
    R_s(A)
    :=
    -U_s(A).
\end{equation}
A larger $R_s(A)$ means a stronger attack.

For an active attacker $i\in A$, let $Y_i=(z_{i,1},\ldots,z_{i,H})$ denote its sampled factual attack-token trajectory, and let
\begin{equation}
    \Psi_i
    :=
    \nabla_\phi^{(i)}
    \log p_\phi(Y_i\mid \mathcal H_i,\rho_i=\texttt{atk})
    =
    \sum_{t=1}^{H}
    \nabla_\phi
    \log
    \pi_{\mathrm{awm},\phi}
    \big(
        z_{i,t}
        \mid
        x_t,\rho_i=\texttt{atk}
    \big)
    \label{eq:final_attacker_score}
\end{equation}
be the score-function term associated with attacker $i$; $\mathcal H_i$ denotes the factual pre-sampling history needed to define the conditional trajectory law. When parameters are shared across attackers, the full attacker-side policy-gradient estimator sums Eq.~\ref{eq:final_attacker_score} over active attackers.


Let $\mathrm{bg}(A)$ denote the counterfactual rollout obtained by switching all active attackers in $A$ to the background role without removing the agents from the scene. Define the admissible baselines
\begin{equation}
    B_i^{\mathrm{loo}}
    :=
    -U_s(A\setminus\{i\}),
    \qquad
    B^{\mathrm{team}}
    :=
    -U_s(\mathrm{bg}(A)),
\end{equation}
where both quantities are evaluated by the action-independent counterfactual replay described above. Then
\begin{equation}
    \Delta_i^{\mathrm{loo}}
    =
    U_s(A\setminus\{i\})-U_s(A)
    =
    R_s(A)-B_i^{\mathrm{loo}},
\end{equation}
and
\begin{equation}
    \Delta_s^{\mathrm{team}}
    =
    U_s(\mathrm{bg}(A))-U_s(A)
    =
    R_s(A)-B^{\mathrm{team}}.
\end{equation}

\begin{proposition}[Hybrid counterfactual reward is gradient-aligned]
\label{prop:final_hybrid_credit}
Assume the counterfactual utilities used to form $B_i^{\mathrm{loo}}$ and $B^{\mathrm{team}}$ are evaluated with stop-gradient, are $\mathcal F_i^{\mathrm{cf}}$-measurable, and are conditionally independent of the factual attacker trajectory $Y_i$ given $\mathcal F_i^{\mathrm{cf}}$.
For weights $\omega_{\mathrm{loo}},\omega_{\mathrm{team}}\ge 0$, define
\begin{equation}
    \Delta_i^{\mathrm{hyb}}
    :=
    \omega_{\mathrm{loo}}\Delta_i^{\mathrm{loo}}
    +
    \omega_{\mathrm{team}}\Delta_s^{\mathrm{team}},
    \qquad
    W:=\omega_{\mathrm{loo}}+\omega_{\mathrm{team}}.
\end{equation}
Then
\begin{equation}
    \mathbb E[
        \Psi_i
        \Delta_i^{\mathrm{hyb}}
    ]
    =
    W\,
    \nabla_\phi^{(i)}
    \mathbb E[
        R_s(A)
    ],
    \label{eq:final_hybrid_credit_unbiased}
\end{equation}
where $\nabla_\phi^{(i)}$ denotes the policy-gradient contribution associated with attacker $i$.
In particular, if $W>0$, the hybrid counterfactual reward has the same ascent direction as the adversarial team objective up to the positive scalar $W$; if $W=1$, it is an unbiased estimator of that component.
\end{proposition}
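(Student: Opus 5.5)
The argument is the standard baseline-invariance computation for score-function estimators, applied separately to each term of $\Delta_i^{\mathrm{hyb}}$. First rewrite the hybrid reward using the identities stated just above the proposition:
\begin{equation}
    \Delta_i^{\mathrm{hyb}}
    =
    \omega_{\mathrm{loo}}\big(R_s(A)-B_i^{\mathrm{loo}}\big)
    +
    \omega_{\mathrm{team}}\big(R_s(A)-B^{\mathrm{team}}\big)
    =
    W\,R_s(A)
    -
    \big(\omega_{\mathrm{loo}}B_i^{\mathrm{loo}}+\omega_{\mathrm{team}}B^{\mathrm{team}}\big).
\end{equation}
Then $\mathbb E[\Psi_i\Delta_i^{\mathrm{hyb}}]$ splits into a factual term $W\,\mathbb E[\Psi_i R_s(A)]$ and a baseline term $-\mathbb E[\Psi_i \bar B_i]$, with $\bar B_i:=\omega_{\mathrm{loo}}B_i^{\mathrm{loo}}+\omega_{\mathrm{team}}B^{\mathrm{team}}$. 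The proof then reduces to two claims: the factual term equals $W\nabla_\phi^{(i)}\mathbb E[R_s(A)]$, and the baseline term vanishes.

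For the factual term, I would use the likelihood-ratio identity. Condition on the pre-sampling history $\mathcal H_i$, which includes everything other than attacker $i$'s token choices that the conditional trajectory law $p_\phi(Y_i\mid\mathcal H_i,\rho_i=\texttt{atk})$ depends on. Interchanging gradient and sum over the discrete token trajectories gives $\mathbb E[\Psi_i R_s(A)\mid\mathcal H_i]=\nabla^{(i)}_\phi\mathbb E[R_s(A)\mid\mathcal H_i]$. Interchange is immediate because the token space is finite and the utility is bounded by $U_H$. Taking the outer expectation gives the attacker-$i$ contribution to $\nabla_\phi\mathbb E[R_s(A)]$, which is exactly how $\nabla_\phi^{(i)}$ is defined. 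The product form in Eq.~\ref{eq:final_attacker_score} is used only to identify $\Psi_i$ with $\nabla\log p_\phi(Y_i\mid\cdot)$.

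For the baseline term, I would condition on $\mathcal F_i^{\mathrm{cf}}$. By assumption $\bar B_i$ is $\mathcal F_i^{\mathrm{cf}}$-measurable and conditionally independent of $Y_i$ given $\mathcal F_i^{\mathrm{cf}}$. Hence
\begin{equation}
    \mathbb E[\Psi_i\bar B_i\mid\mathcal F_i^{\mathrm{cf}}]
    =
    \bar B_i\,\mathbb E[\Psi_i\mid\mathcal F_i^{\mathrm{cf}}],
\end{equation}
and $\mathbb E[\Psi_i\mid\mathcal F_i^{\mathrm{cf}}]=\sum_{Y}\nabla_\phi p_\phi(Y\mid\cdot)=\nabla_\phi 1=0$. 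The stop-gradient assumption ensures no derivative of $\bar B_i$ itself enters the estimator. Combining the two terms yields Eq.~\ref{eq:final_hybrid_credit_unbiased}. The final sentences of the proposition follow directly: $W>0$ rescales without changing direction, and $W=1$ gives unbiasedness.

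The main obstacle is the second step. The vanishing of $\mathbb E[\Psi_i\mid\mathcal F_i^{\mathrm{cf}}]$ requires that conditioning on $\mathcal F_i^{\mathrm{cf}}$ leaves the law of $Y_i$ equal to $p_\phi(\cdot\mid\mathcal H_i,\rho_i=\texttt{atk})$. If the counterfactual replay shared randomness with attacker $i$'s factual tokens, this could fail. I would make the argument rigorous by taking $\mathcal F_i^{\mathrm{cf}}\supseteq\sigma(\mathcal H_i)$ and noting that the conditional independence hypothesis gives $\mathbb E[\Psi_i\mid\mathcal F_i^{\mathrm{cf}}]=\mathbb E[\Psi_i\mid\mathcal H_i]=0$. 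I would state this interpretation of the hypothesis explicitly, since it is exactly what the role-switched replay with independent counterfactual sampling is designed to provide.
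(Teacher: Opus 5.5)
Your proposal is correct and matches the paper's proof essentially step for step. The score-function identity gives the factual term $W\,\mathbb E[\Psi_i R_s(A)]$, both counterfactual baselines drop out because $\mathbb E[\Psi_i\mid\mathcal F_i^{\mathrm{cf}}]=0$, and linearity supplies the factor $W$. Your explicit reading of the hypothesis as $Y_i\perp\mathcal F_i^{\mathrm{cf}}\mid\mathcal H_i$ with $\sigma(\mathcal H_i)\subseteq\mathcal F_i^{\mathrm{cf}}$ is a genuine sharpening. Conditional independence of an $\mathcal F_i^{\mathrm{cf}}$-measurable baseline from $Y_i$ given $\mathcal F_i^{\mathrm{cf}}$ holds automatically, so the paper relies on this stronger property only implicitly, when it treats $p_\phi(y_i\mid\mathcal F_i^{\mathrm{cf}})$ as the policy law.
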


\emph{Proof.}
By the score-function identity applied to the factual attack trajectory,
\begin{equation}
    \nabla_\phi^{(i)}
    \mathbb E[
        R_s(A)
    ]
    =
    \mathbb E[
        \Psi_i R_s(A)
    ].
    \label{eq:final_score_identity_credit}
\end{equation}
It remains to show that the counterfactual baselines vanish after multiplication by $\Psi_i$.
Because $B_i^{\mathrm{loo}}$ is $\mathcal F_i^{\mathrm{cf}}$-measurable and conditionally independent of $Y_i$ given $\mathcal F_i^{\mathrm{cf}}$,
\begin{equation}
\begin{aligned}
    \mathbb E[
        \Psi_i B_i^{\mathrm{loo}}
    ]
    &=
    \mathbb E
    \big[
        B_i^{\mathrm{loo}}
        \mathbb E[
            \Psi_i
            \mid
            \mathcal F_i^{\mathrm{cf}}
        ]
    \big].
\end{aligned}
\end{equation}
For the conditional score,
\begin{equation}
\begin{aligned}
    \mathbb E[
        \Psi_i
        \mid
        \mathcal F_i^{\mathrm{cf}}
    ]
    &=
    \sum_{y_i}
    p_\phi(y_i\mid \mathcal F_i^{\mathrm{cf}})
    \nabla_\phi
    \log p_\phi(y_i\mid \mathcal F_i^{\mathrm{cf}})\\
    &=
    \sum_{y_i}
    \nabla_\phi
    p_\phi(y_i\mid \mathcal F_i^{\mathrm{cf}})
    =
    \nabla_\phi
    \sum_{y_i}
    p_\phi(y_i\mid \mathcal F_i^{\mathrm{cf}})
    =0.
\end{aligned}
\end{equation}
Thus $\mathbb E[\Psi_i B_i^{\mathrm{loo}}]=0$. The same argument gives $\mathbb E[\Psi_i B^{\mathrm{team}}]=0$. Therefore,
\begin{equation}
\begin{aligned}
    \mathbb E[
        \Psi_i \Delta_i^{\mathrm{hyb}}
    ]
    &=
    \omega_{\mathrm{loo}}
    \mathbb E[
        \Psi_i(R_s(A)-B_i^{\mathrm{loo}})
    ]
    +
    \omega_{\mathrm{team}}
    \mathbb E[
        \Psi_i(R_s(A)-B^{\mathrm{team}})
    ]\\
    &=
    (\omega_{\mathrm{loo}}+\omega_{\mathrm{team}})
    \mathbb E[
        \Psi_i R_s(A)
    ]
    =
    W\,
    \nabla_\phi^{(i)}
    \mathbb E[
        R_s(A)
    ].
\end{aligned}
\end{equation}
\hfill $\square$

\begin{lemma}[Variance-optimal action-independent baseline]
\label{lem:final_variance_optimal_baseline}
Let $\mathcal F_{-i}$ be any sigma-field that excludes the factual attacker sample $Y_i$ and satisfies $\mathbb E[\Psi_i\mid\mathcal F_{-i}]=0$. Consider any $\mathcal F_{-i}$-measurable baseline $b_i(\mathcal F_{-i})$.
Define the score-weighted second moment
\begin{equation}
    \mathcal V(b_i)
    :=
    \mathbb E
    \left[
        \|\Psi_i\|_2^2
        \big(
            R_s(A)-b_i(\mathcal F_{-i})
        \big)^2
    \right].
\end{equation}
On events where $\mathbb E[\|\Psi_i\|_2^2\mid\mathcal F_{-i}]>0$, the unique minimizer among all $\mathcal F_{-i}$-measurable baselines is
\begin{equation}
    b_i^\star(\mathcal F_{-i})
    =
    \frac{
        \mathbb E[
            \|\Psi_i\|_2^2 R_s(A)
            \mid
            \mathcal F_{-i}
        ]
    }{
        \mathbb E[
            \|\Psi_i\|_2^2
            \mid
            \mathcal F_{-i}
        ]
    }.
    \label{eq:final_optimal_baseline_weighted}
\end{equation}
On events where this conditional second moment is zero, the choice of baseline is immaterial. Moreover, for any other baseline $b_i$,
\begin{equation}
    \mathcal V(b_i)
    =
    \mathcal V(b_i^\star)
    +
    \mathbb E
    \left[
        \mathbb E[
            \|\Psi_i\|_2^2
            \mid
            \mathcal F_{-i}
        ]
        \big(
            b_i(\mathcal F_{-i})
            -
            b_i^\star(\mathcal F_{-i})
        \big)^2
    \right].
    \label{eq:final_baseline_decomposition}
\end{equation}
If $\mathbb E[\|\Psi_i\|_2^2\mid\mathcal F_{-i},R_s(A)]
=
\mathbb E[\|\Psi_i\|_2^2\mid\mathcal F_{-i}]$, then
\begin{equation}
    b_i^\star(\mathcal F_{-i})
    =
    \mathbb E[
        R_s(A)
        \mid
        \mathcal F_{-i}
    ].
    \label{eq:final_conditional_expectation_baseline}
\end{equation}
\end{lemma}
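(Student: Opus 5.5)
The plan is a weighted least-squares projection argument, carried out conditionally on $\mathcal F_{-i}$. Write $w:=\|\Psi_i\|_2^2\ge 0$, $R:=R_s(A)$, $\bar w:=\mathbb E[w\mid\mathcal F_{-i}]$, and let $b$ be any $\mathcal F_{-i}$-measurable baseline. By the tower property,
\[
\mathcal V(b)=\mathbb E\big[\mathbb E[w(R-b)^2\mid\mathcal F_{-i}]\big].
\]
Since $b$ is $\mathcal F_{-i}$-measurable, I can pull it out of the inner conditional expectation. Expanding the square gives, pointwise in $\mathcal F_{-i}$, a quadratic in the scalar $b$:
\[
\mathbb E[wR^2\mid\mathcal F_{-i}]-2b\,\mathbb E[wR\mid\mathcal F_{-i}]+b^2\bar w .
\]

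On the event $\{\bar w>0\}$, I complete the square with $b^\star:=\mathbb E[wR\mid\mathcal F_{-i}]/\bar w$, which is $\mathcal F_{-i}$-measurable. The quadratic then equals
\[
\mathbb E[w(R-b^\star)^2\mid\mathcal F_{-i}]+\bar w\,(b-b^\star)^2 .
\]
This identity comes from writing $R-b=(R-b^\star)+(b^\star-b)$. The cross term $2(b^\star-b)\,\mathbb E[w(R-b^\star)\mid\mathcal F_{-i}]$ vanishes by the definition of $b^\star$.

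On $\{\bar w=0\}$, we have $w=0$ almost surely on that event, because $w\ge0$. Hence the integrand $w(R-b)^2$ vanishes there regardless of $b$, so the choice of baseline is immaterial. There I set $b^\star$ arbitrarily, for example to $0$, and the same decomposition holds trivially since $\bar w(b-b^\star)^2=0$.

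Taking outer expectations yields Eq.~\ref{eq:final_baseline_decomposition}. Both sides are nonnegative, so adding the nonnegative penalty term commutes with expectation, with no integrability issue beyond finiteness of $\mathcal V(b^\star)$; I would assume $\mathbb E[wR^2]<\infty$, and otherwise the identity holds in $[0,\infty]$. Uniqueness follows because the penalty $\mathbb E[\bar w(b-b^\star)^2]$ vanishes iff $b=b^\star$ almost surely on $\{\bar w>0\}$.

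For Eq.~\ref{eq:final_conditional_expectation_baseline}, I would condition the numerator on $(\mathcal F_{-i},R)$:
\[
\mathbb E[wR\mid\mathcal F_{-i}]=\mathbb E\big[R\,\mathbb E[w\mid\mathcal F_{-i},R]\mid\mathcal F_{-i}\big]=\bar w\,\mathbb E[R\mid\mathcal F_{-i}],
\]
using the stated mean-independence assumption. Dividing by $\bar w$ gives the claim.

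I would also note that the hypothesis $\mathbb E[\Psi_i\mid\mathcal F_{-i}]=0$ is not needed for the minimization itself. It only guarantees, as in Proposition~\ref{prop:final_hybrid_credit}, that every such baseline leaves the gradient unbiased, which justifies restricting to this class.

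The main obstacle is bookkeeping rather than depth. The delicate points are the degenerate event $\{\bar w=0\}$ and the measurability of $b^\star$ as a ratio of conditional expectations. I would handle these by defining $b^\star$ via the indicator of $\{\bar w>0\}$, and by using regular versions of the conditional expectations.
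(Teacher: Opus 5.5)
Your proposal is correct and follows essentially the same route as the paper: you condition on $\mathcal F_{-i}$, treat the conditional objective as a quadratic in the scalar baseline, and complete the square to obtain both the minimizer $b_i^\star$ and the decomposition; for the last claim you apply the tower property through $(\mathcal F_{-i},R_s(A))$ to factor the numerator. Your explicit handling of the degenerate event $\{\mathbb E[\|\Psi_i\|_2^2\mid\mathcal F_{-i}]=0\}$, and your observation that mean-independence suffices (rather than the full conditional independence the paper's proof invokes), are somewhat more careful than the paper's version but leave the argument unchanged.
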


\emph{Proof.}
Condition on $\mathcal F_{-i}$.
For any scalar value $b_i=b_i(\mathcal F_{-i})$, the conditional objective is
\begin{equation}
\begin{aligned}
    \mathcal V(b_i\mid\mathcal F_{-i})
    &=
    \mathbb E[
        \|\Psi_i\|_2^2 R_s(A)^2
        \mid
        \mathcal F_{-i}
    ]
    -
    2b_i
    \mathbb E[
        \|\Psi_i\|_2^2 R_s(A)
        \mid
        \mathcal F_{-i}
    ]\\
    &\quad
    +
    b_i^2
    \mathbb E[
        \|\Psi_i\|_2^2
        \mid
        \mathcal F_{-i}
    ].
\end{aligned}
\end{equation}
This is a quadratic in $b_i$.
Differentiating and setting the derivative to zero gives Eq.~\ref{eq:final_optimal_baseline_weighted}.
Completing the square yields Eq.~\ref{eq:final_baseline_decomposition}.
If the score magnitude is conditionally independent of the reward given $\mathcal F_{-i}$, the numerator in Eq.~\ref{eq:final_optimal_baseline_weighted} factorizes, giving Eq.~\ref{eq:final_conditional_expectation_baseline}.
\hfill $\square$

\begin{remark}[Pairwise Shapley reconstruction]
\label{rem:final_pairwise_shapley}
When the admitted coalition has size two, $A=\{i,j\}$, define the cooperative game
\begin{equation}
    v(B)
    :=
    U_s(\mathrm{bg}(A))
    -
    U_s(B),
    \qquad
    B\subseteq A,
\end{equation}
so that $v(\emptyset)=0$ and $v(A)=\Delta_s^{\mathrm{team}}$.
The Shapley values of the two attackers are
\begin{equation}
\begin{aligned}
    \phi_i
    &=
    \frac12
    \big(
        v(\{i\})-v(\emptyset)
    \big)
    +
    \frac12
    \big(
        v(\{i,j\})-v(\{j\})
    \big),\\
    \phi_j
    &=
    \frac12
    \big(
        v(\{j\})-v(\emptyset)
    \big)
    +
    \frac12
    \big(
        v(\{i,j\})-v(\{i\})
    \big).
\end{aligned}
\end{equation}
Using
\begin{equation}
    \Delta_i^{\mathrm{loo}}
    =
    v(\{i,j\})-v(\{j\}),
    \qquad
    \Delta_j^{\mathrm{loo}}
    =
    v(\{i,j\})-v(\{i\}),
\end{equation}
we obtain
\begin{equation}
    \phi_i
    =
    \frac12
    \big(
        \Delta_s^{\mathrm{team}}
        +
        \Delta_i^{\mathrm{loo}}
        -
        \Delta_j^{\mathrm{loo}}
    \big),
    \qquad
    \phi_j
    =
    \frac12
    \big(
        \Delta_s^{\mathrm{team}}
        +
        \Delta_j^{\mathrm{loo}}
        -
        \Delta_i^{\mathrm{loo}}
    \big).
    \label{eq:final_pair_shapley}
\end{equation}
Thus, in the $K_{\max}=2$ regime used by our method, the team and leave-one-out counterfactuals span the exact fair-credit decomposition of the corresponding pairwise cooperative game.
\end{remark}

\paragraph{Interpretation.}
Proposition~\ref{prop:final_hybrid_credit} and Lemma~\ref{lem:final_variance_optimal_baseline} show that role-switched counterfactual rewards are not arbitrary heuristics.
They are difference-reward control variates for score-function optimization \cite{greensmith2004variance,foerster2018counterfactual} under the action-independence condition stated above.
Because the factual and counterfactual rollouts share the same model, token vocabulary, transition operator, and scene context, the baselines remove nuisance scene variation; because the counterfactual utility is detached and independent of the factual attacker sample, the attacker-side score-function direction is preserved.
The team term further preserves non-additive coalition information, which is essential for learning cooperative pair attacks.

\subsection{Regret as a Reference-Anchored Control Variate}
\label{app:theory_regret_final}
We now analyze the planner-side robust objective.
Stage~B freezes the learned AWM and optimizes the planner against the induced top1/pair branch distribution. Throughout this subsection, the branch identity rule and the calibrated probability $p_s$ are treated as frozen with respect to the planner parameters.
Let
\begin{equation}
    J_s^b(\pi),
    \qquad
    b\in\{\mathrm{top1},\mathrm{pair}\},
\end{equation}
denote the planner utility in scene $s$ under a frozen adversarial branch $b$.
We define the reference-relative adversarial regret as
\begin{equation}
    \Delta_s^b(\theta)
    :=
    J_s^b(\pi_{\mathrm{ref}})
    -
    J_s^b(\pi_\theta).
    \label{eq:final_branch_regret}
\end{equation}
A positive value means that the current planner underperforms the reference planner under the same attack.

\begin{proposition}[Regret preserves branchwise gradient direction and reduces scene difficulty]
\label{prop:final_regret_control_variate}
For any branch $b$,
\begin{equation}
    \nabla_\theta
    \mathbb E_s[
        \Delta_s^b(\theta)
    ]
    =
    -
    \nabla_\theta
    \mathbb E_s[
        J_s^b(\pi_\theta)
    ].
    \label{eq:final_regret_gradient_equiv}
\end{equation}
Thus, minimizing regret is branchwise equivalent to maximizing utility.
Moreover,
\begin{align}
    \mathrm{Var}_s[
        \Delta_s^b(\theta)
    ]
    &=
    \mathrm{Var}_s[
        J_s^b(\pi_\theta)
    ]
    +
    \mathrm{Var}_s[
        J_s^b(\pi_{\mathrm{ref}})
    ]
    \nonumber\\
    &\quad
    -
    2\mathrm{Cov}_s
    \big(
        J_s^b(\pi_\theta),
        J_s^b(\pi_{\mathrm{ref}})
    \big).
    \label{eq:final_regret_variance_identity}
\end{align}
Consequently,
\begin{equation}
    \mathrm{Var}_s[
        \Delta_s^b(\theta)
    ]
    \le
    \mathrm{Var}_s[
        J_s^b(\pi_\theta)
    ]
\end{equation}
whenever
\begin{equation}
    2\mathrm{Cov}_s
    \big(
        J_s^b(\pi_\theta),
        J_s^b(\pi_{\mathrm{ref}})
    \big)
    \ge
    \mathrm{Var}_s[
        J_s^b(\pi_{\mathrm{ref}})
    ].
    \label{eq:final_regret_variance_condition}
\end{equation}
Finally, if $\pi_\theta\in\Pi_\epsilon$, then
\begin{equation}
    |\Delta_s^b(\theta)|
    \le
    \Lambda_{H,\gamma}\delta_\epsilon,
    \qquad
    \mathrm{Var}_s[
        \Delta_s^b(\theta)
    ]
    \le
    \Lambda_{H,\gamma}^2\delta_\epsilon^2 .
    \label{eq:final_regret_trust_bound}
\end{equation}
\end{proposition}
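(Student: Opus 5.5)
The plan is to prove the three claims of Proposition~\ref{prop:final_regret_control_variate} in order: the gradient identity, the variance identity with its sufficient condition, and then the trust-region bounds. All three use only the definition in Eq.~\ref{eq:final_branch_regret} and Lemma~\ref{lem:final_ar_sensitivity}.

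\textbf{Gradient identity.} By Eq.~\ref{eq:final_branch_regret}, $\mathbb E_s[\Delta_s^b(\theta)]=\mathbb E_s[J_s^b(\pi_{\mathrm{ref}})]-\mathbb E_s[J_s^b(\pi_\theta)]$. The first term does not depend on $\theta$, for two reasons: $\pi_{\mathrm{ref}}$ is frozen, and, as stipulated at the start of this subsection, the branch identity rule and $p_s$ are frozen with respect to $\theta$. The frozen AWM branch law is therefore the same in both terms. Differentiating gives Eq.~\ref{eq:final_regret_gradient_equiv}, assuming gradient and scene expectation can be interchanged. This holds under bounded utilities and a smooth softmax parameterization on a finite token set, via a score-function representation with dominated convergence; I will state it as a standing regularity condition.

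\textbf{Variance identity and sufficient condition.} Eq.~\ref{eq:final_regret_variance_identity} is the standard expansion $\mathrm{Var}(X-Y)=\mathrm{Var}X+\mathrm{Var}Y-2\mathrm{Cov}(X,Y)$. Here $X=J_s^b(\pi_{\mathrm{ref}})$ and $Y=J_s^b(\pi_\theta)$ are random through $s\sim\mathcal D$, and both have finite second moments since $|J|\le U_H$. Subtracting $\mathrm{Var}_s[J_s^b(\pi_\theta)]$ from both sides shows that the inequality holds exactly when $\mathrm{Var}_s[J_s^b(\pi_{\mathrm{ref}})]-2\mathrm{Cov}_s\le 0$, which is Eq.~\ref{eq:final_regret_variance_condition}.

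\textbf{Trust-region bounds.} For the first bound in Eq.~\ref{eq:final_regret_trust_bound}, fix $s$ and $b$. The frozen branch $b$ defines a fixed environment law $\nu_s\in\Delta(\Omega_s^\star)$: a point mass on the admitted coalition with the frozen role-conditioned AWM. Hence $J_s^b(\pi)=J_s(\pi,\nu_s)$, and Eq.~\ref{eq:final_ar_sensitivity_trust} of Lemma~\ref{lem:final_ar_sensitivity} gives $|\Delta_s^b(\theta)|\le\Lambda_{H,\gamma}\delta_\epsilon$ for $\pi_\theta\in\Pi_\epsilon$. The second bound then follows from $\mathrm{Var}_s[Z]\le\mathbb E_s[Z^2]\le\sup_s Z^2$. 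The bound is uniform in $s$ because $\Pi_\epsilon$ is defined with a supremum over all states, so $\delta_\epsilon$ does not depend on the scene.

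\textbf{Main obstacle.} No step is technically hard. The points needing care are confirming that each branch utility $J_s^b$ really is an instance of $J_s(\cdot,\nu_s)$ with $\nu_s$ independent of the planner, so that Lemma~\ref{lem:final_ar_sensitivity} applies, and justifying the interchange of gradient and expectation. If the coalition pipeline were recomputed online from the planner's rollout, $\nu_s$ would depend on $\theta$. I would therefore state explicitly that in Stage~B the coalition and $p_s$ are computed once from the frozen AWM and not re-derived from the $\pi_\theta$ rollout. If they were planner-reactive, I would instead absorb the planner dependence into the execution $\omega$, which the coupling argument of Lemma~\ref{lem:final_ar_sensitivity} already tolerates because non-ego laws are functions of the shared prefix.
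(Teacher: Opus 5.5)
Your proposal is correct and follows the paper's proof essentially step for step. You get the gradient identity from $\pi_{\mathrm{ref}}$, the branch rule and $p_s$ being frozen; the variance identity from the elementary expansion of $\mathrm{Var}(X-Y)$; and the trust-region bounds from Lemma~\ref{lem:final_ar_sensitivity} applied to the fixed branch environment, together with $\mathrm{Var}\le c^2$ for a variable bounded by $c$. Your extra remarks are harmless refinements the paper leaves implicit: treating the interchange of $\nabla_\theta$ and $\mathbb E_s$ as a regularity condition, and noting that the covariance condition is actually necessary and sufficient.
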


\emph{Proof.}
Since $\pi_{\mathrm{ref}}$ is frozen,
\begin{equation}
    \nabla_\theta
    \Delta_s^b(\theta)
    =
    -
    \nabla_\theta
    J_s^b(\pi_\theta),
\end{equation}
which proves Eq.~\ref{eq:final_regret_gradient_equiv} after taking expectation over scenes.
Eq.~\ref{eq:final_regret_variance_identity} follows from
\begin{equation}
    \mathrm{Var}(Y-X)
    =
    \mathrm{Var}(Y)
    +
    \mathrm{Var}(X)
    -
    2\mathrm{Cov}(X,Y),
\end{equation}
with $Y=J_s^b(\pi_{\mathrm{ref}})$ and $X=J_s^b(\pi_\theta)$.
The sufficient condition in Eq.~\ref{eq:final_regret_variance_condition} follows immediately.

For Eq.~\ref{eq:final_regret_trust_bound}, apply Lemma~\ref{lem:final_ar_sensitivity} to the fixed branch environment:
\begin{equation}
    |\Delta_s^b(\theta)|
    =
    |
        J_s^b(\pi_{\mathrm{ref}})
        -
        J_s^b(\pi_\theta)
    |
    \le
    \Lambda_{H,\gamma}
    \delta(\pi_\theta,\pi_{\mathrm{ref}})
    \le
    \Lambda_{H,\gamma}\delta_\epsilon .
\end{equation}
A random variable bounded in absolute value by $c$ has variance at most $c^2$, yielding the variance bound.
\hfill $\square$

\begin{remark}[Generalized control-variate coefficient]
\label{rem:final_generalized_beta}
More generally, define
\begin{equation}
    \Delta_{s,\beta}^b(\theta)
    :=
    \beta J_s^b(\pi_{\mathrm{ref}})
    -
    J_s^b(\pi_\theta).
\end{equation}
For every $\beta\in\mathbb R$,
\begin{equation}
    \nabla_\theta
    \mathbb E_s[
        \Delta_{s,\beta}^b(\theta)
    ]
    =
    -
    \nabla_\theta
    \mathbb E_s[
        J_s^b(\pi_\theta)
    ].
\end{equation}
When $\mathrm{Var}_s(J_s^b(\pi_{\mathrm{ref}}))>0$, the variance-minimizing coefficient is
\begin{equation}
    \beta^\star
    =
    \frac{
        \mathrm{Cov}_s
        (
            J_s^b(\pi_\theta),
            J_s^b(\pi_{\mathrm{ref}})
        )
    }{
        \mathrm{Var}_s
        (
            J_s^b(\pi_{\mathrm{ref}})
        )
    }.
    \label{eq:final_beta_star}
\end{equation}
Our implementation uses the simple unit coefficient $\beta=1$, which is natural when $\pi_\theta$ remains close to $\pi_{\mathrm{ref}}$ under the trust region and the two branch utilities are highly correlated.
\end{remark}

\paragraph{Interpretation.}
Regret does not change the branchwise optimization direction.
It changes the cross-scene random variable to which the tail-risk operator is applied.
In autonomous driving, absolute utility can be dominated by scene-intrinsic difficulty.
If
\begin{equation}
    J_s^b(\pi)
    =
    d_s^b
    +
    \widetilde J_s^b(\pi),
    \label{eq:final_additive_difficulty}
\end{equation}
where $d_s^b$ is independent of the planner, then
\begin{equation}
    \Delta_s^b(\theta)
    =
    \widetilde J_s^b(\pi_{\mathrm{ref}})
    -
    \widetilde J_s^b(\pi_\theta),
\end{equation}
so the intrinsic difficulty term cancels exactly.
This explains why CVaR should be applied to regret rather than absolute utility: absolute-utility CVaR can overemphasize intrinsically hard scenes, whereas regret-CVaR focuses on planner-induced degradation relative to the same reference policy.

\subsection{Closed-Form CVaR on the AWM-Induced Two-Point Risk Distribution}
\label{app:theory_cvar_final}
For each scene, the frozen AWM induces a two-point adversarial branch distribution:
\begin{equation}
    \mathcal D_s^\theta
    =
    \left\{
        \big(
            \Delta_s^{\mathrm{top1}}(\theta),
            1-p_s
        \big),
        \big(
            \Delta_s^{\mathrm{pair}}(\theta),
            p_s
        \big)
    \right\},
    \label{eq:final_two_point_regret_distribution}
\end{equation}
where $p_s$ is the calibrated pair-admission probability and is frozen during the planner update.
We use upper-tail CVaR with tail mass $\tau\in(0,1]$:
\begin{equation}
    \mathrm{CVaR}_\tau(Z)
    :=
    \min_{\eta\in\mathbb R}
    \left\{
        \eta
        +
        \frac{1}{\tau}
        \mathbb E[
            (Z-\eta)_+
        ]
    \right\}.
    \label{eq:final_cvar_definition}
\end{equation}
This is the standard Rockafellar--Uryasev variational form for upper-tail risk \cite{rockafellar2000optimization}, written with tail mass $\tau$.

\begin{proposition}[Closed-form CVaR weights and rare-risk amplification]
\label{prop:final_cvar_weights}
Let the two branch regrets be sorted as
\begin{equation}
    \Delta_s^{b_{(1)}}(\theta)
    \ge
    \Delta_s^{b_{(2)}}(\theta),
\end{equation}
and let $\mu_s^{b_{(1)}}\in\{p_s,1-p_s\}$ be the probability mass of the worse branch.
Then
\begin{equation}
    \mathrm{CVaR}_\tau(\mathcal D_s^\theta)
    =
    w_s^{b_{(1)}}(\tau)
    \Delta_s^{b_{(1)}}(\theta)
    +
    w_s^{b_{(2)}}(\tau)
    \Delta_s^{b_{(2)}}(\theta),
    \label{eq:final_cvar_closed_form}
\end{equation}
where
\begin{equation}
    w_s^{b_{(1)}}(\tau)
    =
    \begin{cases}
        1,
        &
        \mu_s^{b_{(1)}}\ge \tau,\\[4pt]
        \dfrac{\mu_s^{b_{(1)}}}{\tau},
        &
        \mu_s^{b_{(1)}}<\tau,
    \end{cases}
    \qquad
    w_s^{b_{(2)}}(\tau)
    =
    1-w_s^{b_{(1)}}(\tau).
    \label{eq:final_cvar_weights}
\end{equation}
Away from the tie set
$\Delta_s^{\mathrm{top1}}=\Delta_s^{\mathrm{pair}}$, the subgradient is
\begin{equation}
    \nabla_\theta
    \mathrm{CVaR}_\tau(\mathcal D_s^\theta)
    =
    w_s^{\mathrm{top1}}(\tau)
    \nabla_\theta
    \Delta_s^{\mathrm{top1}}(\theta)
    +
    w_s^{\mathrm{pair}}(\tau)
    \nabla_\theta
    \Delta_s^{\mathrm{pair}}(\theta).
    \label{eq:final_cvar_gradient}
\end{equation}
If the pair branch is the worse branch and $p_s<\tau$, then its CVaR coefficient is $p_s/\tau$.
Compared with ordinary expected regret, where the coefficient is $p_s$, the pair branch receives the amplification factor
\begin{equation}
    \frac{p_s/\tau}{p_s}
    =
    \frac{1}{\tau}.
    \label{eq:final_pair_amplification}
\end{equation}
\end{proposition}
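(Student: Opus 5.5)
The plan is to evaluate the Rockafellar--Uryasev minimization in Eq.~\ref{eq:final_cvar_definition} explicitly for the two-point law $\mathcal D_s^\theta$. Abbreviate the sorted support as $a:=\Delta_s^{b_{(1)}}(\theta)\ge c:=\Delta_s^{b_{(2)}}(\theta)$, with masses $\mu:=\mu_s^{b_{(1)}}$ and $1-\mu$. The objective
\begin{equation*}
F(\eta)=\eta+\tfrac{1}{\tau}\big[\mu(a-\eta)_+ + (1-\mu)(c-\eta)_+\big]
\end{equation*}
is convex and piecewise linear in $\eta$, with kinks only at $c$ and $a$. Its minimum is therefore attained at one of these two points, which I will identify from the slopes.

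On $(c,a)$ the slope of $F$ is $1-\mu/\tau$. On $(-\infty,c)$ it is $1-1/\tau\le 0$, and on $(a,\infty)$ it is $1$.

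\begin{itemize}
\item \emph{Case $\mu\ge\tau$.} The slope on $(c,a)$ is nonpositive, so the minimizer is $\eta=a$. This gives $F(a)=a$, i.e.\ $w^{b_{(1)}}=1$ and $w^{b_{(2)}}=0$.
\item \emph{Case $\mu<\tau$.} The slope on $(c,a)$ is positive, so the minimizer is $\eta=c$. This gives
\begin{equation*}
F(c)=c+\tfrac{\mu}{\tau}(a-c)=\tfrac{\mu}{\tau}\,a+\big(1-\tfrac{\mu}{\tau}\big)c,
\end{equation*}
which matches Eq.~\ref{eq:final_cvar_weights}.
\end{itemize}

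If $a=c$, both formulas return $a$, so the closed form Eq.~\ref{eq:final_cvar_closed_form} holds everywhere. The boundary case $\tau=1$ is also covered: $\mu<1$ yields the plain mean, and $\mu=1$ yields $a$.

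For the gradient claim Eq.~\ref{eq:final_cvar_gradient}, fix $\theta$ off the tie set. Assuming $\theta\mapsto\Delta_s^b(\theta)$ is differentiable (or at least continuous), the strict ordering of the two regrets persists in a neighborhood of $\theta$. There the labels $b_{(1)},b_{(2)}$ are constant. The masses are constant as well, because $p_s$ is frozen with respect to $\theta$. The weights are therefore constant, and the chain rule applied to the locally linear expression Eq.~\ref{eq:final_cvar_closed_form} gives the stated formula.

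One detail needs separate handling: the weights can jump when $\mu$ crosses $\tau$. Since $\mu$ does not depend on $\theta$, no such crossing can occur as $\theta$ varies, so this does not affect differentiability. On the tie set itself, the CVaR is a maximum of two smooth affine combinations, and the stated expression is one element of the Clarke subdifferential. I would mention this but not rely on it.

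The amplification statement is immediate. If the pair branch is the worse branch and $p_s<\tau$, its coefficient is $p_s/\tau$. Under expected regret $(1-p_s)\Delta^{\mathrm{top1}}+p_s\Delta^{\mathrm{pair}}$ its coefficient is $p_s$, so the ratio is $1/\tau$.

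The main obstacle is the minimizer identification. The subtle points are the slope on $(c,a)$ and the degenerate cases $\mu=\tau$ and $a=c$. I would handle all of them by arguing from the one-sided slopes, rather than solving a first-order condition that need not exist.
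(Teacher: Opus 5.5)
Your proposal is correct and follows essentially the same route as the paper. Both arguments treat the Rockafellar--Uryasev objective as a piecewise-linear function of $\eta$ with kinks at the two sorted regrets, place the minimizer at $\eta=\Delta_s^{b_{(1)}}$ when $\mu\ge\tau$ and at $\eta=\Delta_s^{b_{(2)}}$ when $\mu<\tau$, and then differentiate the locally fixed linear combination away from ties. Your slope and convexity framing, your treatment of the degenerate cases $a=c$ and $\tau=1$, and your explicit appeal to local constancy of the ordering and of the frozen $p_s$ justify the gradient step somewhat more carefully than the paper's one-line version.
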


\emph{Proof.}
Let $z_{(1)}\ge z_{(2)}$ denote the two sorted regret values and let $\mu$ be the probability mass of $z_{(1)}$.
For $\eta\in\mathbb R$, define
\begin{equation}
    f(\eta)
    :=
    \eta
    +
    \frac{1}{\tau}
    \left[
        \mu(z_{(1)}-\eta)_+
        +
        (1-\mu)(z_{(2)}-\eta)_+
    \right].
\end{equation}
We analyze three regions.

If $\eta\ge z_{(1)}$, both hinge terms vanish and $f(\eta)=\eta$, minimized at $\eta=z_{(1)}$ with value $z_{(1)}$.

If $z_{(2)}\le\eta\le z_{(1)}$, only the first hinge is active:
\begin{equation}
    f(\eta)
    =
    \eta
    +
    \frac{\mu}{\tau}
    (z_{(1)}-\eta)
    =
    \frac{\mu}{\tau}z_{(1)}
    +
    \left(
        1-\frac{\mu}{\tau}
    \right)\eta .
\end{equation}
If $\mu\ge\tau$, the slope is non-positive, so the minimum over this interval is attained at $\eta=z_{(1)}$, giving value $z_{(1)}$.
If $\mu<\tau$, the slope is positive, so the minimum is attained at $\eta=z_{(2)}$, giving
\begin{equation}
    \frac{\mu}{\tau}z_{(1)}
    +
    \left(
        1-\frac{\mu}{\tau}
    \right)z_{(2)}.
\end{equation}

If $\eta\le z_{(2)}$, both hinge terms are active:
\begin{equation}
\begin{aligned}
    f(\eta)
    &=
    \eta
    +
    \frac{1}{\tau}
    \left[
        \mu(z_{(1)}-\eta)
        +
        (1-\mu)(z_{(2)}-\eta)
    \right]\\
    &=
    \frac{\mu}{\tau}z_{(1)}
    +
    \frac{1-\mu}{\tau}z_{(2)}
    +
    \left(
        1-\frac{1}{\tau}
    \right)\eta .
\end{aligned}
\end{equation}
Since $\tau\in(0,1]$, the slope is non-positive, so the minimum over this region is attained at the largest feasible $\eta$, namely $\eta=z_{(2)}$.
This gives the same value as the previous case when $\mu<\tau$.
Combining the cases proves Eq.~\ref{eq:final_cvar_closed_form} and Eq.~\ref{eq:final_cvar_weights}.
The subgradient expression follows by differentiating the piecewise-linear closed form away from branch ties.
The amplification claim follows by setting $\mu=p_s$ when the pair branch is worse and $p_s<\tau$.
\hfill $\square$

\paragraph{Interpretation.}
The calibrated Pair attacks are intentionally sparse: in most scenes, the calibrated probability $p_s$ may be small.
Expected regret would multiply the pair-branch gradient by $p_s$, causing rare but severe cooperative failures to be under-optimized.
CVaR prevents this dilution.
If the rare pair branch is currently the worst branch, it is amplified by exactly $1/\tau$ until it either no longer dominates the tail or its probability mass saturates the entire tail.
This is why the planner-side objective is regret-sensitive and CVaR-oriented rather than a simple expectation over AWM samples.

In the implementation, the CVaR weights are treated with stop-gradient. On every region where the branch ordering is fixed, this coincides with one valid subgradient of the piecewise-linear CVaR objective in Eq.~\ref{eq:final_cvar_gradient}. At branch ties, the CVaR subdifferential is set-valued; any convex combination of the tied-branch gradients is valid.

\subsection{Nominal Retention as a Local Performance Floor}
\label{app:theory_nominal_final}

We finally analyze the nominal-retention and safety-constraint mechanisms.
Let
\begin{equation}
    J_s^{\mathrm{norm}}(\pi)
    :=
    U_s(\pi,\pi_{\mathrm{pred}})
\end{equation}
be the scene-level nominal utility, and define
\begin{equation}
    J^{\mathrm{norm}}(\pi)
    :=
    \mathbb E_{s\sim\mathcal D}
    [
        J_s^{\mathrm{norm}}(\pi)
    ].
\end{equation}
Recall the normal utility margin
\begin{equation}
    m_s^{\mathrm{norm}}(\theta)
    :=
    J_s^{\mathrm{norm}}(\pi_\theta)
    -
    J_s^{\mathrm{norm}}(\pi_{\mathrm{ref}})
\end{equation}
and the asymmetric penalty
\begin{equation}
    \psi_s^{\mathrm{norm}}(\theta)
    :=
    \left[
        -\epsilon_{\mathrm{norm}}
        -
        m_s^{\mathrm{norm}}(\theta)
    \right]_+^2 .
    \label{eq:final_nominal_margin_penalty}
\end{equation}

\begin{proposition}[Nominal performance floor]
\label{prop:final_nominal_floor}
For every $\pi_\theta\in\Pi_\epsilon$ and every scene $s$,
\begin{equation}
    \big|
        J_s^{\mathrm{norm}}(\pi_\theta)
        -
        J_s^{\mathrm{norm}}(\pi_{\mathrm{ref}})
    \big|
    \le
    \Lambda_{H,\gamma}\delta_\epsilon .
    \label{eq:final_nominal_trust_bound}
\end{equation}
Moreover, the asymmetric margin penalty implies the pointwise bound
\begin{equation}
    m_s^{\mathrm{norm}}(\theta)
    \ge
    -\epsilon_{\mathrm{norm}}
    -
    \sqrt{
        \psi_s^{\mathrm{norm}}(\theta)
    }.
    \label{eq:final_margin_pointwise}
\end{equation}
Taking expectation over scenes gives
\begin{equation}
    J^{\mathrm{norm}}(\pi_\theta)
    \ge
    J^{\mathrm{norm}}(\pi_{\mathrm{ref}})
    -
    \epsilon_{\mathrm{norm}}
    -
    \sqrt{
        \mathbb E_s[
            \psi_s^{\mathrm{norm}}(\theta)
        ]
    }.
    \label{eq:final_nominal_margin_bound}
\end{equation}
Therefore,
\begin{equation}
    J^{\mathrm{norm}}(\pi_\theta)
    \ge
    J^{\mathrm{norm}}(\pi_{\mathrm{ref}})
    -
    \min
    \left\{
        \Lambda_{H,\gamma}\delta_\epsilon,
        \;
        \epsilon_{\mathrm{norm}}
        +
        \sqrt{
            \mathbb E_s[
                \psi_s^{\mathrm{norm}}(\theta)
            ]
        }
    \right\}.
    \label{eq:final_nominal_combined_bound}
\end{equation}
In particular, if $\psi_s^{\mathrm{norm}}(\theta)=0$ almost surely, then
\begin{equation}
    J_s^{\mathrm{norm}}(\pi_\theta)
    \ge
    J_s^{\mathrm{norm}}(\pi_{\mathrm{ref}})
    -
    \epsilon_{\mathrm{norm}}
    \qquad
    \text{for almost every scene }s.
    \label{eq:final_zero_margin_bound}
\end{equation}
\end{proposition}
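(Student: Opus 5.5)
The plan has two independent lower bounds on the nominal margin. The first comes from the trust region through Lemma~\ref{lem:final_ar_sensitivity}; the second is a purely algebraic consequence of the asymmetric penalty in Eq.~\ref{eq:final_nominal_margin_penalty}. Eq.~\ref{eq:final_nominal_combined_bound} then follows by keeping the better of the two.

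\emph{Step 1 (trust-region bound, Eq.~\ref{eq:final_nominal_trust_bound}).} I would apply Lemma~\ref{lem:final_ar_sensitivity} with the benign environment in place of $\nu_s$, i.e.\ the non-ego law given by $\pi_{\mathrm{pred}}$. By the zero-initialized \texttt{bg} embedding of Appendix~\ref{app:role_injection_details}, this is the all-\texttt{bg} execution, so it can be read as a point mass on that execution. The point to verify is that the coupling argument in the lemma's proof uses only three facts: the transition $\Gamma$ is deterministic, the non-ego conditional laws depend on the current state alone, and the step utilities are bounded by $u_{\max}$. None of these depends on the environment being adversarial, so the lemma holds for $\pi_{\mathrm{pred}}$ verbatim. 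Combining it with $\delta(\pi_\theta,\pi_{\mathrm{ref}})\le\delta_\epsilon$ from Eq.~\ref{eq:final_delta_epsilon} gives the scene-wise bound. Averaging over $s$ then yields $J^{\mathrm{norm}}(\pi_\theta)\ge J^{\mathrm{norm}}(\pi_{\mathrm{ref}})-\Lambda_{H,\gamma}\delta_\epsilon$.

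\emph{Step 2 (pointwise margin bound, Eq.~\ref{eq:final_margin_pointwise}).} I would split into two cases on the sign of $a:=-\epsilon_{\mathrm{norm}}-m_s^{\mathrm{norm}}(\theta)$.
\begin{itemize}
\item If $a>0$, then $\sqrt{\psi_s^{\mathrm{norm}}}=a$ exactly, which rearranges to $m_s^{\mathrm{norm}}=-\epsilon_{\mathrm{norm}}-\sqrt{\psi_s^{\mathrm{norm}}}$.
\item If $a\le 0$, then $m_s^{\mathrm{norm}}\ge-\epsilon_{\mathrm{norm}}\ge-\epsilon_{\mathrm{norm}}-\sqrt{\psi_s^{\mathrm{norm}}}$.
\end{itemize}
In both cases Eq.~\ref{eq:final_margin_pointwise} holds.

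\emph{Step 3 (expectation).} Taking $\mathbb E_s$ of Eq.~\ref{eq:final_margin_pointwise} gives the bound with $\mathbb E_s\sqrt{\psi_s^{\mathrm{norm}}}$ in place of the square-root term. Since $\sqrt{\cdot}$ is concave, Jensen's inequality gives $\mathbb E_s\sqrt{\psi_s^{\mathrm{norm}}}\le\sqrt{\mathbb E_s\psi_s^{\mathrm{norm}}}$, which is Eq.~\ref{eq:final_nominal_margin_bound}. Integrability is not an issue, because $|m_s^{\mathrm{norm}}|\le 2U_H$ by boundedness of the utility, so $\psi_s^{\mathrm{norm}}$ is bounded.

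\emph{Step 4 (combine and special case).} Steps 1 and 3 give two lower bounds on the same quantity $J^{\mathrm{norm}}(\pi_\theta)$. Taking the larger lower bound turns the subtracted terms into their minimum, which is Eq.~\ref{eq:final_nominal_combined_bound}. For Eq.~\ref{eq:final_zero_margin_bound}, note that $\psi_s^{\mathrm{norm}}=0$ almost surely makes Eq.~\ref{eq:final_margin_pointwise} read $m_s^{\mathrm{norm}}\ge-\epsilon_{\mathrm{norm}}$ for almost every $s$.

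The only step needing genuine care is Step 1, namely checking that Lemma~\ref{lem:final_ar_sensitivity}, stated for $\nu_s\in\Delta(\Omega_s^\star)$, covers the benign branch. I would handle this either by the all-\texttt{bg} identification or by noting that the lemma's proof never uses the adversarial structure. Everything else is elementary.
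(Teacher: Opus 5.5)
Your proposal is correct and follows the paper's proof essentially step for step: the sensitivity lemma applied to the benign $\pi_{\mathrm{pred}}$ environment, the rearrangement $x\le[x]_+$ (which your case split reproduces), Jensen for the square root, and keeping the tighter of the two lower bounds. For Step 1, rely on your second justification (the lemma's coupling argument never uses adversarial structure) rather than the all-\texttt{bg} identification, because $E(\texttt{bg})$ is zero only at initialization and after Stage~A the AWM's background behavior is merely KL-anchored to $\pi_{\mathrm{pred}}$ by $\Omega_{\mathrm{stab}}$, not equal to it.
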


\emph{Proof.}
Eq.~\ref{eq:final_nominal_trust_bound} is Lemma~\ref{lem:final_ar_sensitivity} applied to the nominal environment $\pi_{\mathrm{pred}}$.
For the margin bound, define
\begin{equation}
    a_s(\theta)
    :=
    \left[
        -\epsilon_{\mathrm{norm}}
        -
        m_s^{\mathrm{norm}}(\theta)
    \right]_+
    =
    \sqrt{
        \psi_s^{\mathrm{norm}}(\theta)
    }.
\end{equation}
Since $x\le[x]_+$ for all real $x$,
\begin{equation}
    -\epsilon_{\mathrm{norm}}
    -
    m_s^{\mathrm{norm}}(\theta)
    \le
    a_s(\theta),
\end{equation}
which rearranges to Eq.~\ref{eq:final_margin_pointwise}.
Taking expectations,
\begin{align}
    J^{\mathrm{norm}}(\pi_\theta)
    -
    J^{\mathrm{norm}}(\pi_{\mathrm{ref}})
    &=
    \mathbb E_s[
        m_s^{\mathrm{norm}}(\theta)
    ]\\
    &\ge
    -\epsilon_{\mathrm{norm}}
    -
    \mathbb E_s[
        a_s(\theta)
    ].
\end{align}
By Jensen's inequality,
\begin{equation}
    \mathbb E_s[
        a_s(\theta)
    ]
    \le
    \sqrt{
        \mathbb E_s[
            \psi_s^{\mathrm{norm}}(\theta)
        ]
    },
\end{equation}
which proves Eq.~\ref{eq:final_nominal_margin_bound}.
Both Eq.~\ref{eq:final_nominal_trust_bound} and Eq.~\ref{eq:final_nominal_margin_bound} are valid lower bounds on expected nominal utility, so taking the tighter one gives Eq.~\ref{eq:final_nominal_combined_bound}.
If $\psi_s^{\mathrm{norm}}(\theta)=0$, Eq.~\ref{eq:final_margin_pointwise} reduces to Eq.~\ref{eq:final_zero_margin_bound}.
\hfill $\square$

\paragraph{Interpretation.}
Proposition~\ref{prop:final_nominal_floor} is the finite-horizon autoregressive analogue of trust-region and conservative policy-improvement arguments \cite{kakade2002approximately,schulman2015trust}.
The KL anchor limits how far the planner can move from the reference behavior, while the asymmetric margin penalty directly controls degradation in normal traffic.
Thus robustness is not obtained by unconstrained overfitting to adversarial samples; it is obtained by a controlled local update with an explicit nominal-performance floor.

\subsection{Safety Dual Variables as a Projected Max-Violation Surrogate}
\label{app:theory_dual_final}

Finally, we analyze the safety dual update in Section~\ref{subsubsec:dual_update}.
For each safety channel $m$, define the branch-mixed violation gap
\begin{equation}
    h_m(\theta)
    :=
    \mathbb E_{s\sim\mathcal D}
    [
        \bar v_{s,m}
    ]
    -
    \kappa_m,
    \qquad
    \bar v_{s,m}
    :=
    (1-p_s)v_{s,m}^{\mathrm{top1}}
    +
    p_s v_{s,m}^{\mathrm{pair}}.
    \label{eq:final_violation_gap}
\end{equation}
In a classical Lagrangian relaxation, one would use nonnegative multipliers $\lambda_m\ge0$ and maximize $\sum_m\lambda_m h_m(\theta)$.
Our implementation instead uses a bounded softmax parameterization
\begin{equation}
    [
        \lambda_R,
        \lambda_1,\ldots,\lambda_M
    ]
    =
    \operatorname{softmax}
    (
        [
            \alpha_R,
            \zeta_1,\ldots,\zeta_M
        ]
    ),
\end{equation}
and maximizes
\begin{equation}
    \mathcal L_{\mathrm{dual}}(\zeta)
    =
    \sum_{m=1}^{M}
    \lambda_m(\zeta)h_m(\theta).
    \label{eq:final_softmax_dual}
\end{equation}
For analysis, define $h_R:=0$ for the progress channel.

\begin{proposition}[Softmax safety multipliers as a bounded max-violation surrogate]
\label{prop:final_softmax_dual}
For fixed planner parameters $\theta$, the gradient of Eq.~\ref{eq:final_softmax_dual} with respect to the safety logit $\zeta_m$ is
\begin{equation}
    \frac{\partial \mathcal L_{\mathrm{dual}}}{\partial \zeta_m}
    =
    \lambda_m
    \left(
        h_m
        -
        \sum_{\ell=1}^{M}
        \lambda_\ell h_\ell
    \right).
    \label{eq:final_dual_gradient}
\end{equation}
Thus dual ascent increases the relative weight of safety channels whose violation gap exceeds the current softmax-weighted average, and decreases the relative weight of channels below that average.
Moreover,
\begin{equation}
    \sup_{\zeta\in\mathbb R^M}
    \mathcal L_{\mathrm{dual}}(\zeta)
    =
    \max
    \left\{
        0,
        \max_{m\in\{1,\ldots,M\}}
        h_m(\theta)
    \right\}.
    \label{eq:final_dual_supremum}
\end{equation}
\end{proposition}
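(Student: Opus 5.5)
For fixed $\theta$, each $h_m$ is a constant, so the statement concerns only the function $\zeta\mapsto\sum_{m=1}^M\lambda_m(\zeta)h_m$ with $\lambda=\operatorname{softmax}([\alpha_R,\zeta])$. I will treat the two claims separately: the gradient formula by direct differentiation, and the supremum by a convex-combination argument plus an explicit maximizing sequence.

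For Eq.~\ref{eq:final_dual_gradient}, I use the softmax Jacobian $\partial\lambda_\ell/\partial\zeta_m=\lambda_\ell(\mathbb I\{\ell=m\}-\lambda_m)$, valid for $\ell\in\{1,\ldots,M\}$ and also for the progress index $R$, whose logit $\alpha_R$ is fixed. Then
\begin{equation}
\frac{\partial\mathcal L_{\mathrm{dual}}}{\partial\zeta_m}=\sum_{\ell=1}^M h_\ell\lambda_\ell(\mathbb I\{\ell=m\}-\lambda_m)=\lambda_m\Big(h_m-\sum_{\ell=1}^M\lambda_\ell h_\ell\Big).
\end{equation}
The sum runs only over safety channels, so $h_R=0$ imposes no further constraint. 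Since $\lambda_m>0$, the sign of the gradient is the sign of $h_m$ minus the weighted average. This gives the stated interpretation.

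For Eq.~\ref{eq:final_dual_supremum}, I write $\mathcal L_{\mathrm{dual}}=\lambda_R\cdot 0+\sum_m\lambda_m h_m$, which is a convex combination of $\{0,h_1,\ldots,h_M\}$ with weights summing to one. It is therefore bounded above by $\max\{0,\max_m h_m\}$.

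For the matching lower bound I split into two cases.
\begin{itemize}
\item If $\max_m h_m>0$, pick $m^\star\in\arg\max_m h_m$, set $\zeta_{m^\star}=t$ and all other $\zeta_\ell=0$. As $t\to\infty$, $\lambda_{m^\star}\to1$ and all other weights tend to $0$, so $\mathcal L_{\mathrm{dual}}\to h_{m^\star}$.
\item If $\max_m h_m\le 0$, set all $\zeta_\ell=-t$. Then $\lambda_R\to1$ and $\mathcal L_{\mathrm{dual}}\to 0$.
\end{itemize}
In both cases the upper bound is approached, so the supremum equals it.

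The main point to handle carefully is that this value is a supremum, not a maximum. Because softmax weights are strictly positive, the value is attained only in degenerate cases, for example when several $h$ values coincide at the maximum or the maximum equals $0$. I will therefore state explicitly that the bound is approached as a limit.
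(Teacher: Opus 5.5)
Your proof is correct and follows the paper's own argument. The softmax Jacobian $\partial\lambda_\ell/\partial\zeta_m=\lambda_\ell(\mathbb I\{\ell=m\}-\lambda_m)$ gives the gradient, and writing $\mathcal L_{\mathrm{dual}}$ as a convex combination of $\{0,h_1,\ldots,h_M\}$ with strictly positive softmax weights gives the supremum; your two explicit logit sequences just make concrete the paper's remark that these weights can approach any vertex of the simplex.

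One small correction to your closing aside on attainment. All $M+1$ weights are strictly positive and $h_R=0$ is one of the combined values, so the supremum is attained at a finite $\zeta$ if and only if $h_1=\cdots=h_M=0$. Neither several $h_m$ tying at the maximum nor the maximum being $0$ suffices on its own: for example, $h_1=h_2=1$ and $h_1=0,\ h_2=-1$ are never attained.
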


\emph{Proof.}
The softmax derivative gives
\begin{equation}
    \frac{\partial \lambda_\ell}{\partial \zeta_m}
    =
    \lambda_\ell
    (
        \mathbb I\{\ell=m\}
        -
        \lambda_m
    ),
    \qquad
    \ell\in\{1,\ldots,M\}.
\end{equation}
Therefore,
\begin{equation}
\begin{aligned}
    \frac{\partial \mathcal L_{\mathrm{dual}}}{\partial \zeta_m}
    &=
    \sum_{\ell=1}^{M}
    h_\ell
    \frac{\partial \lambda_\ell}{\partial \zeta_m}
    =
    \lambda_m h_m
    -
    \lambda_m
    \sum_{\ell=1}^{M}
    \lambda_\ell h_\ell\\
    &=
    \lambda_m
    \left(
        h_m
        -
        \sum_{\ell=1}^{M}
        \lambda_\ell h_\ell
    \right).
\end{aligned}
\end{equation}
This proves Eq.~\ref{eq:final_dual_gradient}.

For Eq.~\ref{eq:final_dual_supremum}, note that the softmax weights range over the relative interior of the simplex over the $M$ safety channels plus the progress channel with value $h_R=0$.
Thus $\mathcal L_{\mathrm{dual}}$ is a convex combination of $\{0,h_1,\ldots,h_M\}$ and can approach, but not necessarily attain at finite logits, the maximum of these values.
Therefore the supremum is $\max\{0,\max_m h_m\}$.
\hfill $\square$

\section{Supplementary Results}
\label{app:supp_results}

This section provides the complete empirical evidence underlying Section~\ref{sec:experiment}. We first report the \texttt{InterPlan-LongTail} breakdown and simulator-side transfer diagnostics, and then give detailed analyses of planner-world cross-validation, AWM mechanisms, planner adaptation, adapted prior generators, and qualitative visualizations. 

\subsection{Closed-Loop Long-Tail Breakdown}
\label{app:supp_longtail_breakdown}

Table~\ref{tab:interplan_longtail_templates} decomposes the \texttt{InterPlan-LongTail} score by scenario template. Scores use the same percentage scale as Table~\ref{tab:main_benchmark}.
The per-template breakdown shows that AWM-Planner improves over Plan-R1 on seven of eight templates. The largest gains appear in close straight-driving jaywalker and medium-density yielding cases, which are precisely the settings where earlier recovery from interactive conflicts is valuable. The only negative row is \texttt{left\_turn\_yield}, where the difference is small. 
Thus, the long-tail result shows that AWM improves a broad set of recoverable interaction failures.

\begin{table}[!htbp]
\centering
\caption{Per-template scores on \texttt{InterPlan-LongTail}. Each template contains ten scenarios. \(\Delta\) is AWM-Planner minus Plan-R1. Higher is better.}
\label{tab:interplan_longtail_templates}
\setlength{\tabcolsep}{3pt}
\renewcommand{\arraystretch}{1.1}
\begin{small}
\resizebox{0.6\linewidth}{!}{%
\begin{tabular}{@{}lccc@{}}
\toprule
Template & Plan-R1 & AWM-Planner & \(\Delta\) \\
\midrule
\texttt{close\_straight\_assertive} & 33.30 & 42.02 & \cellcolor{bestblue}{+8.72} \\
\texttt{close\_straight\_cautious} & 33.30 & 42.08 & \cellcolor{bestblue}{+8.78} \\
\texttt{close\_straight\_mixed} & 33.30 & 41.19 & \cellcolor{bestblue}{+7.90} \\
\texttt{dual\_pedestrian\_chain} & 25.18 & 26.12 & \cellcolor{bestblue}{+0.94} \\
\texttt{left\_turn\_yield} & 52.56 & 51.87 & \cellcolor{nogainorange}{-0.69} \\
\texttt{medium\_left\_yield} & 31.17 & 37.95 & \cellcolor{bestblue}{+6.78} \\
\texttt{medium\_straight\_yield} & 26.83 & 34.05 & \cellcolor{bestblue}{+7.22} \\
\texttt{right\_turn\_yield} & 43.50 & 50.34 & \cellcolor{bestblue}{+6.84} \\
\midrule
\textbf{Overall} & 34.89 & \textbf{40.70} & \cellcolor{bestblue}{\textbf{+5.81}} \\
\bottomrule
\end{tabular}}
\end{small}
\end{table}

\subsection{Adversarial Sim-Agent Diagnostics}
\label{app:supp_simagent_diagnostics}

Table~\ref{tab:app_simagent_diagnostics} reports the simulator-side diagnostics used to examine the AWM sim-agent interface described in Appendix~\ref{app:awm_sim_agent}. The fallback rate is the fraction of proposed sim-agent updates that revert to the normal policy after feasibility checks. The ego-gate rate measures how often the stricter ego-local guard suppresses an adversarial proposal. The active-adversary count measures the coalition size under the attack budget \(K_{\max}=2\), and budget utilization is the active-adversary count divided by \(K_{\max}\). These diagnostics complement the score table in the main text: AWM consistently uses nearly the full two-agent budget, but its high fallback rate shows that the adversarial host is substantially constrained by the feasibility gate.
The ego-local gate remains low across planners, indicating that the observed score drop is mainly caused by feasible interaction pressure rather than by frequent emergency suppression around the ego.

\begin{table}[!htbp]
\caption{Sim-agent behavior diagnostics for the cross-planner evaluation. \textsc{Fallback} is the rate of reverting to the normal policy after feasibility checks, \textsc{Ego gate} is the ego-local safety-suppression rate, \textsc{Active adv.} is the actual number of adversarial non-ego agents, and \textsc{Budget util.} is active adversaries divided by the two-agent budget. \textsc{Fallback}, \textsc{Ego gate}, and \textsc{Budget util.} are reported as percentages, while \textsc{Active adv.} and \textsc{Max adv.} are counts. Gray cells \protect\colorbox{gray!15}{\phantom{xx}} mark Normal reference diagnostics, and blue cells \protect\colorbox{bestblue}{\phantom{xx}} mark AWM-specific adversarial-budget diagnostics.}
\label{tab:app_simagent_diagnostics}
\centering
\setlength{\tabcolsep}{6pt}
\renewcommand{\arraystretch}{1.1}
\begin{small}
\resizebox{0.8\linewidth}{!}{%
\begin{tabular}{@{}llccccc@{}}
\toprule
Planner & World & Fallback $\downarrow$ & Ego gate $\downarrow$ & Active adv. & Max adv. & Budget util.  \\
\midrule
\multirow{2}{*}{Diffusion Planner~\cite{zheng2025diffusion}}
& Normal & \normaldiag{$6.35$} & \normaldiag{$0.34$} & -- & -- & --  \\
& AWM & $50.07$ & $0.33$ & \awmdiag{$1.93$} & \awmdiag{$2.00$} & \awmdiag{$96.45$}  \\
\midrule
\multirow{2}{*}{IDM+Mobil}
& Normal & \normaldiag{$6.32$} & \normaldiag{$0.37$} & -- & -- & --  \\
& AWM & $50.35$ & $0.61$ & \awmdiag{$1.92$} & \awmdiag{$2.00$} & \awmdiag{$96.19$}  \\
\midrule
\multirow{2}{*}{PDM-Closed~\cite{PDM}}
& Normal & \normaldiag{$6.25$} & \normaldiag{$0.43$} & -- & -- & --  \\
& AWM & $49.98$ & $0.29$ & \awmdiag{$1.92$} & \awmdiag{$2.00$} & \awmdiag{$96.22$}  \\
\midrule
\multirow{2}{*}{PLUTO~\cite{Pluto}}
& Normal & \normaldiag{$6.40$} & \normaldiag{$0.36$} & -- & -- & --  \\
& AWM & $50.47$ & $0.36$ & \awmdiag{$1.94$} & \awmdiag{$2.00$} & \awmdiag{$96.91$}  \\
\midrule
\multirow{2}{*}{PlanTF~\cite{PlanTF}}
& Normal & \normaldiag{$6.48$} & \normaldiag{$0.27$} & -- & -- & --  \\
& AWM & $50.20$ & $0.26$ & \awmdiag{$1.92$} & \awmdiag{$2.00$} & \awmdiag{$96.16$}  \\
\midrule
\multirow{2}{*}{Plan-R1~\cite{tang2025plan}}
& Normal & \normaldiag{$6.08$} & \normaldiag{$0.18$} & -- & -- & --  \\
& AWM & $50.24$ & $0.25$ & \awmdiag{$1.92$} & \awmdiag{$2.00$} & \awmdiag{$96.22$}  \\
\midrule
\multirow{2}{*}{\textbf{AWM-Planner}}
& Normal & \normaldiag{$6.19$} & \normaldiag{$0.23$} & -- & -- & --  \\
& AWM & $49.80$ & $0.20$ & \awmdiag{$1.93$} & \awmdiag{$2.00$} & \awmdiag{$96.28$}  \\
\bottomrule
\end{tabular}
}
\end{small}
\end{table}

\subsection{Planner-World Cross-Validation Details}
\label{app:supp_dual_rollout}

Table~\ref{tab:planner_prepost_world_matrix} reports the full planner and world matrix summarized in Figure~\ref{fig:main_quantitative_diagnostics}(a). The post-trained planner improves under both the normal and adversarial worlds, but the increase is larger under AWM, especially for Tail-CVaR. This confirms that Stage B does not merely shift the planner toward a different nominal policy. It specifically improves the lower tail exposed by the adversarial model.

\begin{table*}[!htbp]
\caption{Pre/post planner \(\times\) pre/post world-model offline matrix. Avg. reward and progress are scene means. Tail-CVaR is the average reward over the worst 20\% scenes under the adversarial rollout.}
\label{tab:planner_prepost_world_matrix}
\centering
\setlength{\tabcolsep}{3pt}
\renewcommand{\arraystretch}{1.1}
\begin{small}
\resizebox{0.6\linewidth}{!}{%
\begin{tabular}{@{}lcccc@{}}
\toprule
 & \multicolumn{2}{c}{Pre-world: normal WM} & \multicolumn{2}{c}{Post-world: AWM} \\
\cmidrule(lr){2-3} \cmidrule(lr){4-5}
Planner & Nominal R $\uparrow$ & Progress $\uparrow$ & Adversarial R $\uparrow$ & Tail-CVaR $\uparrow$ \\
\midrule
Pre-planner & 0.980221 & 0.942985 & 0.967874 & 0.865963 \\
Post-planner & 0.984197 & 0.958129 & 0.979767 & 0.909490 \\
\midrule
\(\Delta\) Post--Pre
& \cellcolor{bestblue}{+0.003976}
& \cellcolor{bestblue}{+0.015144}
& \cellcolor{bestblue}{+0.011893}
& \cellcolor{bestblue}{+0.043527} \\
\bottomrule
\end{tabular}
}
\end{small}
\end{table*}

Table~\ref{tab:planner_offline_branch_matrix} decomposes the adversarial branch used in the planner-world cross-validation of Section~\ref{sec:experiment}. The training host produces moderate drops of \(0.012\)--\(0.020\), while the held-out AWM increases the drops to \(0.025\)--\(0.040\). Pair and Adaptive branches expose larger lower-tail risk than the single-agent Top-1 branch, with Adaptive remaining less aggressive than forced Pair because it controls the behavior through scene-level admission. This pattern supports the use of calibrated coalition admission in the training risk distribution.

\begin{table*}[!htbp]
\caption{Branch-level diagnostic via dual-rollout. LCB denotes mean minus 1.64 standard errors.}
\label{tab:planner_offline_branch_matrix}
\centering
\setlength{\tabcolsep}{3pt}
\renewcommand{\arraystretch}{1.1}
\begin{small}
\resizebox{0.7\linewidth}{!}{%
\begin{tabular}{@{}llcccc@{}}
\toprule
Host & Branch & Avg. reward $\uparrow$ & $\Delta$ vs. nominal & Reward LCB $\uparrow$ & Tail-CVaR $\uparrow$  \\
\midrule
\multirow{4}{*}{Training host}
& Nominal  & 0.9842 & --      & 0.9830 & --     \\
& Top-1    & 0.9718 & -0.0124 & 0.9703 & 0.8520 \\
& Pair     & 0.9645 & -0.0197 & 0.9628 & 0.8305 \\
& Adaptive & 0.9668 & -0.0174 & 0.9652 & 0.8421 \\
\midrule
\multirow{4}{*}{Held-out AWM}
& Nominal  & 0.9836 & --      & 0.9823 & --     \\
& Top-1    & 0.9584 & -0.0252 & 0.9566 & 0.8015 \\
& Pair     & 0.9438 & -0.0398 & 0.9417 & 0.7610 \\
& Adaptive & 0.9489 & -0.0347 & 0.9469 & 0.7802 \\
\bottomrule
\end{tabular}
}
\end{small}
\end{table*}

\subsection{AWM Mechanism Ablations}
\label{app:supp_awm_mechanisms}

Tables~\ref{tab:awm_credit_assignment}--\ref{tab:awm_role_conditioning} provide the detailed mechanism evidence for the AWM design summarized in Section~\ref{sec:experiment}. The credit-assignment ablation in Table~\ref{tab:awm_credit_assignment} shows that role-conditioned counterfactual credit gives stronger tail stress than coarse global reward or assignment-only variants while avoiding the severe trajectory-fidelity collapse of coarse baselines.

\begin{table*}[!htbp]
\caption{Credit-assignment analysis. Drop metrics are computed against the nominal reward; \textsc{Tail R} is the worst-20\% reward. \textsc{Pred. pair gain} is the learned proxy for pair-coalition utility, and \textsc{Div.} denotes trajectory diversity. \textsc{AWM-sim} is the Test14-hard closed-loop score of the original Plan-R1 planner when AWM is used in the simulator as the non-ego policy; lower values indicate stronger closed-loop attacks. This sim-agent score is distinct from ordinary planner closed-loop scores.}
\label{tab:awm_credit_assignment}
\centering
\setlength{\tabcolsep}{3pt}
\renewcommand{\arraystretch}{1.1}
\begin{small}
\resizebox{0.99\linewidth}{!}{%
\begin{tabular}{@{}lccccccccccc@{}}
\toprule
Mode & Normal R & Worst R & Tail R & Drop $\uparrow$ & Worst Drop $\uparrow$ & Tail Drop $\uparrow$ & Pred. pair gain $\uparrow$ & AWM-sim $\downarrow$ & ADE $\downarrow$ & FDE $\downarrow$ & Div. \\
\midrule
\textcolor{gray}{Nominal reference} & \textcolor{gray}{0.9842} & \textcolor{gray}{0.9842} & \textcolor{gray}{0.9262} & \textcolor{gray}{0.0000} & \textcolor{gray}{0.0000} & \textcolor{gray}{0.0580} & \textcolor{gray}{--} & \textcolor{gray}{75.86} & \textcolor{gray}{1.035} & \textcolor{gray}{3.285} & \textcolor{gray}{0.000} \\
\midrule
\rowcolor{bestblue}\textbf{Role-conditioned AWM} & 0.9778 & 0.9342 & 0.7764 & 0.0064 & 0.0500 & 0.2078 & 0.132 & 71.71 & 1.55 & 4.95 & 1.75 \\
Global-reward broadcast  & 0.9814 & 0.9740 & 0.8970 & 0.0028 & 0.0102 & 0.0872 & 0.014 & $75.10$ & 11.80 & 18.60 & 15.20 \\
Heuristic-only selector & 0.9810 & 0.9635 & 0.8500 & 0.0032 & 0.0207 & 0.1342 & 0.058 & $73.20$ & 1.77 & 5.58 & 2.10 \\
No pair ranker & 0.9804 & 0.9648 & 0.8425 & 0.0038 & 0.0194 & 0.1417 & 0.052 & $73.70$ & 1.44 & 4.63 & 1.58 \\
Assignment-only selector & 0.9805 & 0.9590 & 0.8612 & 0.0037 & 0.0252 & 0.1230 & 0.021 & $74.20$ & 19.50 & 29.00 & 26.70 \\
\bottomrule
\end{tabular}
}
\end{small}
\end{table*}

The coalition ablations in Tables~\ref{tab:awm_coalition_structure} and~\ref{tab:awm_diagnostics} separate raw attack strength from pair-admission control. Forced \(K{=}2\) and no-calibrator variants can create stronger raw stress, but they admit pair attacks in most scenes and degrade behavior realism. The adaptive calibrated variant keeps pair usage sparse while retaining most of the sim-agent attack effect.  These results show that pair attacks are useful but should be selected conditionally.

\begin{table*}[!htbp]
\caption{Coalition structure ablations. \textsc{Pair frac.} is the fraction of scenes with an admitted two-agent coalition, and \textsc{Sel./scene} is the average number of selected attackers.
}
\label{tab:awm_coalition_structure}
\centering
\setlength{\tabcolsep}{3pt}
\renewcommand{\arraystretch}{1.1}
\begin{small}
\resizebox{0.99\linewidth}{!}{%
\begin{tabular}{@{}lcccccccccc@{}}
\toprule
Mode & Normal R & Worst R & Tail R & Sel./scene & Pair frac. & Pred. pair gain & Tail Drop $\uparrow$ & AWM-sim $\downarrow$ & ADE $\downarrow$ & FDE $\downarrow$ \\
\midrule
\rowcolor{bestblue}\textbf{Adaptive calibrated} & 0.9778 & 0.9342  & 0.7764 & 1.34 & 0.34 & 0.132 & 0.2078 & 71.71 & 1.55 & 4.95 \\
Fixed $K{=}1$ & 0.9812 & 0.9630 & 0.8720 & 1.00 & 0.00 & 0.000 & 0.1122 & $74.40$ & 1.38 & 4.42 \\
Fixed $K{=}2$ & 0.9776 & 0.9380 & 0.7560 & 2.00 & 1.00 & 0.118 & 0.2282 & $69.60$ & 2.35 & 7.40 \\
No pair ranker & 0.9807 & 0.9580 & 0.8340 & 1.76 & 0.76 & 0.047 & 0.1502 & $73.60$ & 1.45 & 4.70 \\
No scene calibrator & 0.9785 & 0.9410 & 0.7680 & 1.82 & 0.82 & 0.116 & 0.2162 & $71.40$ & 2.10 & 6.75 \\
Reduced proposal budget & 0.9810 & 0.9620 & 0.8520 & 1.18 & 0.18 & 0.034 & 0.1322 & $74.20$ & 1.42 & 4.60 \\
Full proposal budget & 0.9798 & 0.9485 & 0.7920 & 1.40 & 0.40 & 0.135 & 0.1922 & $71.80$ & 1.58 & 5.05 \\
\bottomrule
\end{tabular}
}
\end{small}
\end{table*}

\begin{table*}[!htbp]
\caption{Selector/ranker/calibrator diagnostics. \(|P_s|\) is the proposal-pool size and \(|R_s|\) is the runner-up pool used for pair ranking. \textsc{Pair cand.} and \textsc{Pair admit} denote pre-calibration candidate and final admitted pair-scene fractions. \textsc{Adm. pred/label} compares the predicted admission rate with the positive-pair label rate, \textsc{Gain pred/oracle} compares learned and counterfactual pair gain, and \textsc{Calib. gap} is the absolute admission-rate mismatch.}
\label{tab:awm_diagnostics}
\centering
\setlength{\tabcolsep}{3pt}
\renewcommand{\arraystretch}{1.1}
\begin{small}
\resizebox{0.95\linewidth}{!}{%
\begin{tabular}{@{}lcccccccc@{}}
\toprule
Mode & \(|P_s|\) & \(|R_s|\) & Sel./scene & Pair cand. & Pair admit & Adm. pred/label & Gain pred/oracle & Calib. gap \\
\midrule
\rowcolor{bestblue}\textbf{Adaptive calibrated} & 3.09 & 2.09 & $1.34$ & $0.72$ & $0.34$ & $0.36/0.34$ & $0.132/0.120$ & $0.02$ \\
Fixed $K{=}2$ & 3.11 & 2.10 & $2.00$ & $1.00$ & $1.00$ & $1.00/0.34$ & $0.118/0.074$ & $0.66$ \\
No scene calibrator & 3.08 & 2.09 & $1.82$ & $0.82$ & $0.82$ & $0.82/0.34$ & $0.116/0.071$ & $0.48$ \\
\bottomrule
\end{tabular}
}
\end{small}
\end{table*}

Table~\ref{tab:awm_role_conditioning} further tests whether adversarial stress arises from coherent role-conditioned rollouts. Post-hoc replacement or injection preserves more nominal behavior and can appear favorable under simple ADE/FDE metrics, but it produces much weaker tail stress and attack strength. The full role-conditioned AWM therefore provides the strongest coherent attack while remaining realistic.

\begin{table*}[!htbp]
\caption{Role-conditioning mechanism analysis. Role-conditioned joint rollout produces the strongest coherent attack, while post-hoc variants can look better under per-agent ADE/FDE/kinematic metrics because they preserve more nominal behavior. \textsc{Kin.} is the kinematic-violation rate.
}
\label{tab:awm_role_conditioning}
\centering
\setlength{\tabcolsep}{3pt}
\renewcommand{\arraystretch}{1.1}
\begin{small}
\resizebox{0.99\linewidth}{!}{%
\begin{tabular}{@{}lcccccccccc@{}}
\toprule
Mode & Normal R & Worst R & Tail R & Drop $\uparrow$ & Worst Drop $\uparrow$ & Tail Drop $\uparrow$ & AWM-sim $\downarrow$ & ADE $\downarrow$ & FDE $\downarrow$ & Kin. $\downarrow$ \\
\midrule
\rowcolor{bestblue}\textbf{Full role-conditioned} & $0.9778$ & $0.9342$ & $0.7764$ & $0.0064$ & $0.0500$ & $0.2078$ & $71.71$ & $1.55$ & $4.95$ & $0.0108$ \\
Explicit replacement & $0.9828$ & $0.9761$ & $0.8913$ & $0.0014$ & $0.0081$ & $0.0929$ & $75.50$ & $1.42$ & $4.80$ & $0.0098$ \\
Inactive-revert intervention & $0.9762$ & $0.9560$ & $0.8180$ & $0.0080$ & $0.0282$ & $0.1662$ & $73.50$ & $1.12$ & $3.60$ & $0.0039$ \\
Safe-bg post-hoc injection & $0.9821$ & $0.9753$ & $0.8863$ & $0.0021$ & $0.0089$ & $0.0979$ & $75.80$ & $1.06$ & $3.45$ & $0.0037$ \\
\bottomrule
\end{tabular}
}
\end{small}
\end{table*}

\subsection{Additional Planner Ablations}
\label{app:supp_planner_ablation}

Table~\ref{tab:planner_internal_host_switch} evaluates the sensitivity of Stage B to the frozen opponent used inside the inference-time rollout. The matched self-play AWM gives the best combined results. Training with a nominal world model or a held-out external AWM remains functional, but the resulting planner is weaker. This indicates that the learned opponent distribution is most effective when the planner is optimized against the same calibrated AWM used to define the adversarial risk distribution.

\begin{table*}[!htbp]
\caption{Inference-time opponent model sensitivity. Open-loop rewards are on the planner's rollout utility scale; closed-loop (CL) score and submetrics are ordinary nuPlan simulation. \textsc{Adv. R} is the adaptive-calibrated adversarial reward, \textsc{Tail} is its Tail-CVaR, and Prog./Coll./TTC denote progress, collision, and time-to-collision submetrics.}
\label{tab:planner_internal_host_switch}
\centering
\setlength{\tabcolsep}{4pt}
\renewcommand{\arraystretch}{1.1}
\begin{small}
\resizebox{0.95\linewidth}{!}{%
\begin{tabular}{@{}lccc|ccc|ccc@{}}
\toprule
\multirow{2}{*}{Internal opponent host} & \multicolumn{3}{c|}{Open-loop rollout} & \multicolumn{3}{c|}{CL: Test14-hard NR} & \multicolumn{3}{c}{CL: Test14-random R} \\
 & Nom. R $\uparrow$ & Adv. R $\uparrow$ & Tail $\uparrow$ & Score $\uparrow$ & Prog. $\uparrow$ & TTC $\uparrow$ & Score $\uparrow$ & Coll. $\uparrow$ & TTC $\uparrow$ \\
\midrule
Nominal WM & $0.9815$ & -- & -- & $79.41$ & $88.83$ & \textbf{82.10} & $90.36$ & $97.75$ & $94.60$ \\
\cellcolor{bestblue}{Matched self-play AWM} & $0.9818$ & $0.9756$ & $0.8919$ & \textbf{80.05} & \textbf{90.16} & 81.99 & \textbf{90.71} & \textbf{98.08} & \textbf{95.40} \\
Held-out external AWM & $0.9809$ & $0.9738$ & $0.8825$ & $79.29$ & $89.11$ & $80.94$ & $90.25$ & $97.64$ & $94.23$ \\
\bottomrule
\end{tabular}
}
\end{small}
\end{table*}

Table~\ref{tab:planner_hparam_mini} and Figure~\ref{fig:hparam_sensitivity} report hyperparameter sensitivity. The main trend is a trade-off between robustness and nominal performance: stronger tail-risk pressure or lower admission thresholds can improve adversarial tail metrics, but overly aggressive settings reduce closed-loop performance. Stronger retention preserves nominal behavior but can under-emphasize adversarial recovery. 

\begin{table}[!htbp]
\caption{Hyperparameter sensitivity around the selected planner operating point. The base row uses the default Stage-B setting: CVaR tail mass $\tau{=}0.25$, regret-CVaR weight $1.0$, progress-retention weight $10.0$, and training-time AWM threshold $0.45$. \textsc{Hard NR} and \textsc{Random R} are ordinary nuPlan closed-loop planner scores reported as percentages, not AWM sim-agent scores. \textsc{Sel./scene} and \textsc{Pair frac.} describe the AWM exposure density used by the training rollout.}
\label{tab:planner_hparam_mini}
\centering
\setlength{\tabcolsep}{3pt}
\renewcommand{\arraystretch}{1.1}
\begin{small}
\resizebox{0.95\linewidth}{!}{%
\begin{tabular}{@{}lcccccccc@{}}
\toprule
\multirow{2}{*}{Variant} & \multicolumn{4}{c}{Open-loop rollout} & \multicolumn{2}{c}{Planner CL} & \multicolumn{2}{c}{AWM exposure} \\
\cmidrule(lr){2-5}\cmidrule(lr){6-7}\cmidrule(lr){8-9}
 & Nom. R $\uparrow$ & Nom. prog. $\uparrow$ & Adv. R $\uparrow$ & Tail $\uparrow$ & Hard NR $\uparrow$ & Random R $\uparrow$ & Sel./scene & Pair frac. \\
\midrule
\rowcolor{bestblue}Base & $0.9818$ & $0.9545$ & $0.9756$ & $0.8919$ & $80.05$ & $90.71$ & $1.34$ & $0.34$  \\
\midrule
CVaR tail $\tau{=}0.10$ & 0.9808 & 0.9487 & 0.9758 & 0.8885 & 79.55 & 90.20 & 1.34 & 0.34  \\
CVaR tail $\tau{=}0.50$ & 0.9814 & 0.9538 & 0.9734 & 0.8790 & 78.60 & 90.45 & 1.34 & 0.34  \\
Expected risk $\tau{=}1.00$ & 0.9816 & 0.9552 & 0.9712 & 0.8650 & 77.85 & 90.55 & 1.34 & 0.34  \\
\midrule
CVaR weight $0.0$ & 0.9820 & 0.9560 & 0.9690 & 0.8580 & 77.55 & 90.40 & 1.34 & 0.34  \\
CVaR weight $0.5$ & 0.9821 & 0.9550 & 0.9735 & 0.8835 & 79.20 & 90.65 & 1.34 & 0.34  \\
CVaR weight $2.0$ & 0.9806 & 0.9490 & 0.9760 & 0.8940 & 79.70 & 89.90 & 1.34 & 0.34  \\
\midrule
Progress weight $5.0$ & 0.9807 & 0.9475 & 0.9760 & 0.8940 & 79.75 & 89.85 & 1.34 & 0.34  \\
Progress weight $20.0$ & 0.9819 & 0.9560 & 0.9720 & 0.8750 & 78.90 & 90.82 & 1.34 & 0.34  \\
\midrule
AWM threshold $0.10$ & 0.9809 & 0.9502 & 0.9748 & 0.8868 & 79.20 & 90.15 & 1.52 & 0.52  \\
AWM threshold $0.20$ & 0.9812 & 0.9524 & 0.9752 & 0.8895 & 79.55 & 90.38 & 1.45 & 0.45  \\
\bottomrule
\end{tabular}
}
\end{small}
\end{table}

\begin{figure}[!htbp]
\centering
\includegraphics[width=0.85\linewidth]{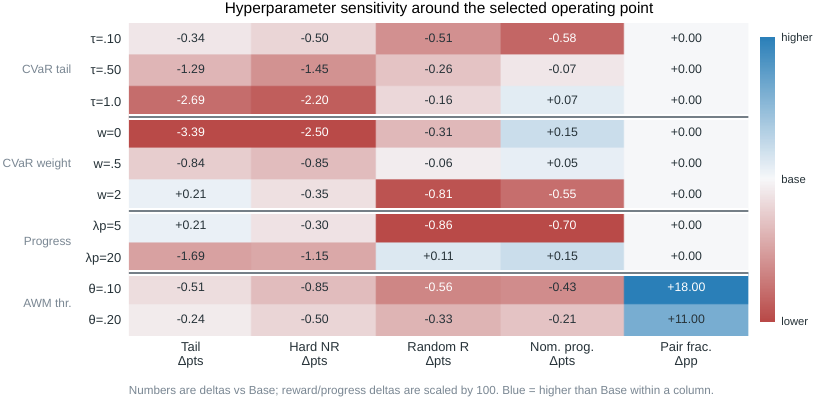}
\caption{Sensitivity around the selected planner operating point. The default configuration balances adversarial tail improvement with ordinary closed-loop performance, while more aggressive risk or weaker retention settings move along the robustness and nominal performance trade-off.}
\label{fig:hparam_sensitivity}
\end{figure}

\subsection{Adapted Prior Adversarial Generators}
\label{app:supp_prior_generators}

Table~\ref{tab:prior_generator_pilot} gives the full adapted generator comparison summarized in Figure~\ref{fig:main_quantitative_diagnostics}(b). The comparison separates adversarial strength from realism and search budget by adapting prior generators to the same autoregressive rollout interface. Single-candidate CAT$^*$ and STRIVE$^*$ produce relatively weak tail stress, while best-4 search strengthens all generators by selecting more damaging samples. The KING-style aggressive bound obtains the largest raw drop, but at a substantial cost in ADE/FDE and background-collision proxies.

\begin{table*}[!htbp]
\caption{Adapted prior generator comparison. $^*$ denotes a prior baseline adapted to the autoregressive rollout interface, so all methods share the same scene state, tokenized transition, planner reward, and metrics. \textsc{Protocol} separates search and constraints: \textsc{cand.=1} uses one generated candidate without reward-guided reranking; \textsc{best-4 search} generates four candidates and selects the lowest-reward rollout; \textsc{strict constraint} strengthens behavioral and realism penalties; and \textsc{kinematic-search} is an intentionally aggressive baseline used as a raw-attack upper bound. \textsc{Off./Stat.} reports off-road and static-object collision violation; \textsc{BG-coll.} reports the background agent collision.}
\label{tab:prior_generator_pilot}
\centering
\setlength{\tabcolsep}{3pt}
\renewcommand{\arraystretch}{1.1}
\begin{small}
\resizebox{0.99\linewidth}{!}{%
\begin{tabular}{@{}lllcccccccc@{}}
\toprule
Setting & Method & Protocol & Drop $\uparrow$ & Worst $\uparrow$ & Tail $\uparrow$ & ADE $\downarrow$ & FDE $\downarrow$ & Kin. $\downarrow$ & Off./Stat. $\downarrow$ & BG-coll. $\downarrow$ \\
\midrule
\rowcolor{bestblue}\textbf{Full AWM} & \textbf{Full AWM} & adaptive coalition & 0.0064 & 0.0500 & 0.2078 & 1.55 & 4.95 & 0.0108 & 0.280/0.009 & 0.081 \\
\midrule
Single candidate & CAT$^*$~\cite{zhang2023cat} & cand.=1 sample & 0.0049 & 0.0230 & 0.0917 & 2.46 & 7.71 & 0.0309 & 0.380/0.016 & 0.147 \\
Single candidate & STRIVE$^*$~\cite{rempe2022generating} & cand.=1 sample & 0.0054 & 0.0245 & 0.1007 & 3.47 & 7.89 & 0.0181 & 0.381/0.016 & 0.148 \\
\midrule
Matched search & AWM & best-4 search & 0.0242 & 0.0743 & 0.3853 & 1.47 & 4.73 & 0.0082 & 0.300/0.013 & 0.090 \\
Matched search & CAT$^*$~\cite{zhang2023cat} & best-4 search & 0.0258 & 0.0774 & 0.2530 & 2.65 & 8.02 & 0.0140 & 0.424/0.020 & 0.165 \\
Matched search & STRIVE$^*$~\cite{rempe2022generating} & best-4 search & 0.0204 & 0.0649 & 0.2473 & 3.88 & 8.17 & 0.0123 & 0.406/0.018 & 0.155 \\
Aggressive bound & KING$^*$~\cite{hanselmann2022king} & kinematic-search (upp. bound) & 0.1249 & 0.1542 & 0.3901 & 5.06 & 10.57 & 0.0211 & 0.579/0.046 & 0.258 \\
\midrule
Realism control & CAT$^*$~\cite{zhang2023cat} & strict constraint & 0.0141 & 0.0503 & 0.1174 & 1.45 & 4.79 & 0.0099 & 0.359/0.011 & 0.145 \\
Realism control & STRIVE$^*$~\cite{rempe2022generating} & strict constraint & 0.0113 & 0.0426 & 0.1606 & 1.32 & 4.57 & 0.0077 & 0.372/0.013 & 0.144 \\
\bottomrule
\end{tabular}
}
\end{small}
\end{table*}

\stopcontents[appendix]

\end{document}

%% file: math_commands.tex
\usepackage{amsmath,amsfonts,bm}

\def\eqref#1{equation~\ref{#1}}

\def\1{\bm{1}}

\def\vz{{\bm{z}}}

\DeclareMathAlphabet{\mathsfit}{\encodingdefault}{\sfdefault}{m}{sl}
\SetMathAlphabet{\mathsfit}{bold}{\encodingdefault}{\sfdefault}{bx}{n}

